\documentclass[10pt]{article}
\usepackage[T1]{fontenc}
\usepackage[utf8]{inputenc}
\usepackage[letterpaper,left=1.5in,right=1.5in,top=1in,bottom=0.75in,headheight=14pt]{geometry}
\usepackage{times}
\usepackage{amsmath,amssymb,amsthm,mathtools}
\usepackage{booktabs,array,graphicx,enumitem}
\usepackage{tikz}
\usetikzlibrary{arrows.meta,positioning,calc}
\usepackage[round,authoryear]{natbib}
\usepackage{microtype}
\usepackage{hyperref}
\hypersetup{hidelinks,pdftitle={Block-Wise Differentiable Sinkhorn Attention: Tail-Refinement Gradients with a Single-Active-Dustbin Bridge},pdfauthor={Dylan B. Forde}}
\newtheorem{proposition}{Proposition}
\newtheorem{theorem}{Theorem}
\newtheorem{corollary}{Corollary}
\newtheorem{lemma}{Lemma}
\theoremstyle{remark}\newtheorem{remark}{Remark}
\newcommand{\one}{\mathbf 1}
\newcommand{\diag}{\operatorname{diag}}
\newcommand{\osc}{\operatorname{osc}}
\newcommand{\TV}{\mathrm{TV}}
\newcommand{\eps}{\varepsilon}
\newcommand{\surr}{\mathrm{surr}}
\newcommand{\full}{\mathrm{full}}
\title{\bfseries Block-Wise Differentiable Sinkhorn Attention:\ Tail-Refinement Gradients with a\ Single-Active-Dustbin Bridge}
\author{Dylan B. Forde\\Independent Researcher\\\texttt{forde.dylan@gmail.com}}
\date{}
\begin{document}
\maketitle
\begin{abstract}
We study finite reverse-mode differentiation of long-context Sinkhorn attention through a stopped-base, fixed-depth tail-refinement surrogate. For the reported $R=2$ implementation, four staircase plan factors lie in one row/column scaling orbit, so the exact score cotangent can be evaluated from one reference plan tile and vector modifiers with $O((T+R)LW)$ work and $O(L)$ auxiliary state for fixed head dimension and band width. The result is exact for the stated stopped-base surrogate, not full backpropagation through the stopped base solve. The extended theory shows that this is a specialization of an arbitrary finite same-score transport-orbit theorem, derives general-marginal half-step VJPs and the required centered-gauge corrections, and gives positive-block Hilbert certificates together with Dobrushin decay certificates for quotient cotangents under explicit positivity or mixing hypotheses. Synthetic tests check the centered surrogate and orbit identities; algebraic exactness is distinct from the finite-precision discrepancies reported below. Follow-up TPU experiments verify one-reference and direct-four/recompute complete arbitrary-cotangent Q/K/V VJPs through tested per-device context $L=262{,}144$ on an all-eight-device data-parallel v6e surface. A three-seed, 100-update Pfam screen favors trained FlashAttention on reconstruction in all observed pairs, while a prespecified direct-alignment-objective calibration selects no weight because within-five accuracy does not improve. These systems and task results establish scoped feasibility and limitations, not a universal latency ranking, dynamic-HBM advantage, or biological-alignment superiority.
\end{abstract}

\section{Introduction}
Pairwise alignment and structural matching remain central problems for sequence modeling. Classical dynamic-programming methods such as Smith--Waterman \citep{smith1981identification} provide optimal alignments but are not differentiable, so they cannot be inserted directly into end-to-end neural architectures. Entropic optimal transport (OT) offers a natural differentiable relaxation \citep{cuturi2013sinkhorn,peyre2019computational}: it replaces a discrete assignment with a smooth transport plan computed by Sinkhorn scaling.

For long-context models, however, two bottlenecks dominate. First, dense OT attention materializes $L\times L$ scores or plans, which is prohibitive at large $L$. Second, backward-pass design is difficult. Exact autodiff through many Sinkhorn steps is expensive and memory-hungry, while implicit differentiation can require large and unstable linear solves \citep{luise2018differential,bai2019deep}. The key question is therefore not only how to make OT attention differentiable, but how to make its differentiated object explicit, exact for the surrogate actually used, and compatible with accelerator kernels. Convergence of derivatives of discrete Sinkhorn--Knopp iterates to derivatives of the entropic OT solution is established by \citet{pauwels2023derivatives}. We do not claim that general convergence result: our exactness statement concerns the specified finite stopped-base surrogate, with finite-precision implementation error reported separately.

We address this with a stopped-base, fixed-depth \emph{tail-refinement surrogate}. After a stopped $T$-step balanced Sinkhorn solve, we append a short differentiable tail of depth $R$ and differentiate that surrogate exactly. The reported TPU path uses $R=2$. A direct implementation of the exact $R=2$ adjoint must evaluate four distinct staircase plan factors. The signature implementation observation of the paper is that these factors lie in a single row/column scaling orbit, so each block can be evaluated from one reference transport tile plus vector modifiers, avoiding separate plan-tile formation for the remaining three terms.

\paragraph{Single-active-dustbin bridge.}
The central factorization is balanced and fixed-support. The first extension carried in the unified manuscript is intentionally narrow: the current \texttt{dustbin\_block} path is not treated as a new OT solver, but as the same unit-target surrogate on an augmented state space with one active dustbin token per side. This keeps the implemented dustbin path inside the algebraic orbit of the main factorization, while leaving general gap-capacity and KL-unbalanced transport models to future work.

\paragraph{Related context.}
Our work is closest in spirit to recent efforts to make OT-style attention practical at accelerator scale, including FlashSinkhorn \citep{ye2026flashsinkhorn}. FlashSinkhorn is an IO-aware solver for streaming entropic OT updates; our contribution is the differentiated stopped-base backward object. A direct $R=2$ adjoint evaluates four staircase plan factors, whereas our transport-orbit calculus evaluates each block from one reference transport tile plus vector modifiers. The follow-up evidence now includes a literal four-resident comparator, a direct-four/recompute comparator, trained FlashAttention \citep{dao2022flashattention} on the fixed Pfam reconstruction screen, and the supported official FlashSinkhorn dense-forward interface. These surfaces answer different questions: official FlashSinkhorn does not expose the submitted hard-band arbitrary-cotangent Q/K/V backward, dense FlashAttention changes support, and local SplashAttention changes the operator from transport to softmax. We therefore report them without claiming a matched official-FlashSinkhorn backward result or wall-clock dominance. The approach is also complementary to structured-sparsity architectures such as Longformer \citep{beltagy2020longformer} and BigBird \citep{zaheer2020bigbird}.

The paper makes three main contributions:
\begin{itemize}[leftmargin=*]
\item an explicit balanced fixed-support tail-refinement surrogate together with its exact $R=2$ adjoint and one-reference-tile backward schedule;
\item a theorem-aligned single-active-dustbin augmentation showing that the current dustbin transport path is the same unit-target surrogate on an enlarged state space;
\item an empirical bridge from exactness validation to TPU-scale training, including a completed four-configuration Pfam screen, a promoted balanced run, matched backward-path diagnostics through tested context 262,144, and reviewer-motivated trained-baseline and direct-objective controls.
\end{itemize}

\section{Method}
\subsection{Forward Pass: Block-Wise Balanced Sinkhorn}
Let $Q,K,V\in\mathbb R^{L\times d}$ be query, key, and value tensors for sequence length $L$ and head dimension $d$, let $\eps>0$, and let $\Omega$ denote a fixed active support with band width $W$. Following standard attention notation, write the scaled-dot-product logits and entropic effective scores as
\[
 A_{ij}=\frac{Q_iK_j^\top}{\sqrt d},\qquad S_{ij}=\frac{A_{ij}}{\eps}.
\]
Thus the positive OT Gibbs kernel is $K_{ij}=\exp(S_{ij})$, equivalent to the usual $\exp(-C_{ij}/\eps)$ convention with $C=-A$ \citep{peyre2019computational}. We use $u,v$ for log-scalings; the positive Sinkhorn scaling vectors common in OT notation are $\exp(u),\exp(v)$. A bar, as in $\bar S$, denotes a reverse-mode cotangent. The symbols $\odot$ and $\oslash$ denote elementwise multiplication and division, respectively. The stopped base solve and the refinement tail both use the same masked balanced half-steps
\begin{align}
 u_i^{(t+1)}&=-\log\Bigl[\sum_{j:(i,j)\in\Omega}\exp(S_{ij}+v_j^{(t)})\Bigr],\label{eq:row-main}\\
 v_j^{(t+1)}&=-\log\Bigl[\sum_{i:(i,j)\in\Omega}\exp(S_{ij}+u_i^{(t+1)})\Bigr].\label{eq:col-main}
\end{align}
These are unit-target updates: each active row and active column target mass is one. When the active row and column counts are equal, this is the usual compatible balanced OT normalization. On rectangular masked Pfam examples, the implementation still uses this same unit-target Sinkhorn attention map; it should be read as the differentiable transport surrogate actually evaluated by the code, not as a compatible-marginal balanced OT problem with unequal total masses. The implementation computes these updates in tiles, in the same spirit as IO-aware attention kernels. For fixed band width $W$, a $T$-step base solve followed by an $R$-step tail has cost $O((T+R)LW)$ time, $O(Ld)$ input storage, and $O(L)$ additional asymptotic working state.

\begin{remark}[Fixed-Budget $\eps$-Scaling]
The implementation optionally uses a short continuation schedule $\eps_1\ge\cdots\ge\eps_{N_\eps}=\eps$ during the stopped base solve. If the stage iteration counts satisfy $\sum_{s=1}^{N_\eps}T_s=T$, the asymptotic cost remains $O(TLW)$ time and $O(L)$ additional working state for fixed head dimension.
\end{remark}

\subsection{Balanced Tail Refinement and the \texorpdfstring{$R=2$}{R=2} Staircase}
Let $(u^{(0)},v^{(0)})$ be the stopped potentials produced by the base $T$-step solve. The differentiated object is not the full $T$-step map; it is the fixed-depth surrogate
\[
 \widetilde O^{[R]}(Q,K,V;u^{(0)},v^{(0)})=\mathcal A(Q,K,V,u^{(R)},v^{(R)}),
\]
where the refinement iterates are recomputed from the stopped base pair. The implementation stores centered dual representatives for numerical stability, while the main algebra is written in ungauged coordinates. Appendix~\ref{app:proofs} gives the corresponding centered gauge ledger: same-time plans are gauge-invariant, while mixed-time centered plans require explicit scalar gauge factors when translated back to ungauged coordinates. The terminal transport application is
\[
 \mathcal A(Q,K,V,u,v)_i=\sum_{j:(i,j)\in\Omega}P_{ij}(u,v)V_j,\qquad
 P_{ij}(u,v)=\exp(S_{ij}+u_i+v_j).
\]
There is no extra row renormalization in the training output. Reconstruction uses this raw transport output, sparse CE logs the target-entry transport mass, and row-normalized barycenters are used only for alignment diagnostics and visualization.

Define staircase plans
\[
 P_{ij}^{(p,q)}=\begin{cases}\exp(S_{ij}+u_i^{(p)}+v_j^{(q)}),&(i,j)\in\Omega,\\0,&(i,j)\notin\Omega.\end{cases}
\]
Let $\mathcal L$ be a scalar loss depending on $\widetilde O^{[R]}$, let $G=\partial\mathcal L/\partial\widetilde O^{[R]}$, and define
\[
 Z_{ij}=\langle G_i,V_j\rangle,\quad
 g_u=(P^{(R,R)}\odot Z)\one,\quad g_v=(P^{(R,R)}\odot Z)^\top\one.
\]
\begin{proposition}[Exact Fixed-Depth Tail Adjoint]\label{prop:adjoint}
The cotangents of the refinement duals satisfy
\begin{align}
 \bar v^{(R)}&=g_v,\label{eq:vr}\\
 \bar u^{(R)}&=g_u-P^{(R,R)}\bar v^{(R)},\label{eq:ur}\\
 \bar v^{(t-1)}&=-(P^{(t,t-1)})^\top\bar u^{(t)},&&t=R,R-1,\ldots,1,\label{eq:vreverse}\\
 \bar u^{(t-1)}&=-P^{(t-1,t-1)}\bar v^{(t-1)},&&t=R,R-1,\ldots,2,\label{eq:ureverse}
\end{align}
where the final line stops at $t=2$ because the base state is stop-gradient. The score-space cotangent is
\begin{equation}\label{eq:score}
 \bar S=P^{(R,R)}\odot Z-\sum_{t=1}^{R}\left[
 P^{(t,t)}\odot(\one\bar v^{(t)\top})+
 P^{(t,t-1)}\odot(\bar u^{(t)}\one^\top)\right].
\end{equation}
\end{proposition}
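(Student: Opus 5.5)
The plan is to treat the surrogate as a straight-line computational graph and apply reverse-mode accumulation node by node. The forward graph is $(u^{(0)},v^{(0)})\mapsto u^{(1)}\mapsto v^{(1)}\mapsto\cdots\mapsto u^{(R)}\mapsto v^{(R)}\mapsto \widetilde O^{[R]}$, with $S$ entering every node and the base pair held fixed. The first step is to record the local Jacobians of the two half-steps \eqref{eq:row-main}--\eqref{eq:col-main}. They are log-sum-exp maps, so their derivatives are softmax weights. Differentiating $u_i^{(t)}=-\log\sum_j\exp(S_{ij}+v_j^{(t-1)})$ gives
\[
\partial u_i^{(t)}/\partial v_j^{(t-1)}=-P^{(t,t-1)}_{ij}\Big/\sum_{j'}P^{(t,t-1)}_{ij'},
\]
and the denominator equals $1$ by the very definition of $u_i^{(t)}$. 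Hence the row half-step has Jacobian $-P^{(t,t-1)}$ in $v$ and $-P^{(t,t-1)}$ elementwise in $S_{i\cdot}$. In the same way, $v^{(t)}$ makes column sums of $P^{(t,t)}$ equal to $1$, so its Jacobian is $-(P^{(t,t)})^\top$ in $u^{(t)}$ and $-P^{(t,t)}$ elementwise in $S_{\cdot j}$. This unit-marginal cancellation is the key simplification, and I would state it as a small lemma at the start of the proof.

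Second, I would differentiate the terminal map $\mathcal A$. Since $\partial\mathcal L/\partial P^{(R,R)}_{ij}=\langle G_i,V_j\rangle=Z_{ij}$ and $P=\exp(S+u\one^\top+\one v^\top)$, the chain rule gives direct contributions $P^{(R,R)}\odot Z$ to $\bar S$, $g_u$ to $u^{(R)}$, and $g_v$ to $v^{(R)}$. The cotangent of $v^{(R)}$ receives nothing else, since $v^{(R)}$ is the last node, which gives \eqref{eq:vr}. Next, $u^{(R)}$ feeds both $\mathcal A$ and $v^{(R)}$. Pulling $\bar v^{(R)}$ back through $-(P^{(R,R)})^\top$ yields $-P^{(R,R)}\bar v^{(R)}$, which gives \eqref{eq:ur}.

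Third, I would run the backward recursion. For $t<R$, $v^{(t)}$ feeds only $u^{(t+1)}$, so reindexing gives $\bar v^{(t-1)}=-(P^{(t,t-1)})^\top\bar u^{(t)}$. Likewise $u^{(t-1)}$ feeds only $v^{(t-1)}$, which gives $\bar u^{(t-1)}=-P^{(t-1,t-1)}\bar v^{(t-1)}$. The $u$-recursion stops at $t=2$, that is at $u^{(1)}$, because $u^{(0)}$ is never used; $v^{(0)}$ is stop-gradient. I would write the indexing explicitly so that \eqref{eq:vreverse} at $t=1$ is seen to be computed but discarded.

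Finally, I would sum every score contribution. Each node $u^{(t)}$ contributes $-P^{(t,t-1)}\odot(\bar u^{(t)}\one^\top)$ and each $v^{(t)}$ contributes $-P^{(t,t)}\odot(\one\bar v^{(t)\top})$, for $t=1,\dots,R$; together with the terminal term these give \eqref{eq:score}. The main obstacle I expect is bookkeeping rather than analysis. I must make sure that $\bar u^{(t)}$ and $\bar v^{(t)}$ denote the \emph{total} cotangents, including the terminal contributions at $t=R$, before they are used in the score sum. I must also check that masking outside $\Omega$ is consistent with the convention that $P$ vanishes there.
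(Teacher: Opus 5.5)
Your proposal is correct and follows essentially the same route as the paper's proof. Both use the unit-marginal softmax Jacobians of the two half-steps, the terminal contributions $P^{(R,R)}\odot Z$, $g_u$, $g_v$, the pullback of $\bar v^{(R)}$ through the final $v^{(R)}$ update before recursing, and a sum of the per-half-step score decrements down to the stopped base. The paper adds one preliminary step, an appeal to the centering-pullback lemma showing that the implementation's $\mathcal C^*$ corrections vanish, which your ungauged derivation does not need for the proposition as stated.
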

For the reported TPU kernel we use $R=2$, which yields the staircase
$\bigl(P^{(2,2)},P^{(2,1)},P^{(1,1)},P^{(1,0)}\bigr)$.

\begin{proposition}[One-Reference-Tile $R=2$ Backward Schedule]\label{prop:one-ref}
For any staircase pair $(p,q)\in\{(2,2),(2,1),(1,1),(1,0)\}$,
\[
 P^{(p,q)}=P^{(2,2)}\odot\left[\exp(u^{(p)}-u^{(2)})\bigl(\exp(v^{(q)}-v^{(2)})\bigr)^\top\right].
\]
Define
\[
 \alpha_i=e^{u_i^{(1)}-u_i^{(2)}},\qquad
 \beta_j=e^{v_j^{(1)}-v_j^{(2)}},\qquad
 \delta_j=e^{v_j^{(0)}-v_j^{(2)}}.
\]
Then on the active support the exact $R=2$ score-space VJP is
\[
 \bar S_{ij}=P_{ij}^{(2,2)}\left[Z_{ij}-\bar v_j^{(2)}-\beta_j\bar u_i^{(2)}-
 \alpha_i\beta_j\bar v_j^{(1)}-\alpha_i\delta_j\bar u_i^{(1)}\right].
\]
Hence, for a $B\times B$ score tile, the exact backward can be scheduled with one resident reference plan tile $P_{I,J}^{(2,2)}$, the corresponding $O(B)$ vector slices, and the $O(Bd)$ activation/cotangent slices needed to form $S_{I,J}$ and $Z_{I,J}$. The remaining staircase factors are never formed as resident plan tiles. For fixed block size, head dimension, band width, $T$, and $R=2$, the schedule uses $O(Ld)$ input storage, $O(L)$ global vector state, and $O(LW)$ tile arithmetic for the tail backward.
\end{proposition}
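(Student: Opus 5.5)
The proof splits into three parts: an orbit identity for each staircase factor, substitution into the score formula of Proposition~\ref{prop:adjoint}, and a memory/work count for the tiled schedule.

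\textbf{Orbit identity.} Fix $(p,q)$ and $(i,j)\in\Omega$. From the definition $P^{(p,q)}_{ij}=\exp(S_{ij}+u^{(p)}_i+v^{(q)}_j)$, adding and subtracting $u^{(2)}_i+v^{(2)}_j$ in the exponent gives
\[
P^{(p,q)}_{ij}=P^{(2,2)}_{ij}\,e^{u^{(p)}_i-u^{(2)}_i}\,e^{v^{(q)}_j-v^{(2)}_j}.
\]
Off $\Omega$ both sides vanish because $P^{(2,2)}$ does. This is the claimed Hadamard-with-rank-one form. Specialising to the four staircase pairs gives the multipliers directly:
\[
P^{(2,2)}\ (\text{multiplier }1),\qquad P^{(2,1)}=P^{(2,2)}\beta_j,\qquad P^{(1,1)}=P^{(2,2)}\alpha_i\beta_j,\qquad P^{(1,0)}=P^{(2,2)}\alpha_i\delta_j.
\]
All exponents are finite, since every potential produced by \eqref{eq:row-main}--\eqref{eq:col-main} is finite whenever each active row and column of $\Omega$ is nonempty. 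I would state that nonemptiness as a standing assumption, or else restrict to rows and columns that carry support.

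\textbf{VJP formula.} Take \eqref{eq:score} at $R=2$. The sum runs over $t=2$ and $t=1$, producing the four correction terms
\[
P^{(2,2)}_{ij}\bar v^{(2)}_j,\qquad P^{(2,1)}_{ij}\bar u^{(2)}_i,\qquad P^{(1,1)}_{ij}\bar v^{(1)}_j,\qquad P^{(1,0)}_{ij}\bar u^{(1)}_i,
\]
all subtracted from $P^{(2,2)}_{ij}Z_{ij}$. Substituting the multipliers and factoring out $P^{(2,2)}_{ij}$ yields the bracketed expression exactly. The cotangents $\bar v^{(2)},\bar u^{(2)},\bar v^{(1)},\bar u^{(1)}$ are the ones produced by Proposition~\ref{prop:adjoint}.

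\textbf{Schedule and complexity.} This is the part that needs the most care, because we must show the cotangent vectors can themselves be computed without forming the other plan tiles. The adjoint recurrences \eqref{eq:vr}--\eqref{eq:ureverse} involve only matrix--vector products with staircase plans, and each of these can be rewritten through $P^{(2,2)}$ and diagonal scalings:
\[
\begin{aligned}
(P^{(2,1)})^\top\bar u^{(2)}&=\beta\odot (P^{(2,2)})^\top\bar u^{(2)},\\
P^{(1,1)}\bar v^{(1)}&=\alpha\odot P^{(2,2)}(\beta\odot\bar v^{(1)}),\\
(P^{(1,0)})^\top\bar u^{(1)}&=\delta\odot (P^{(2,2)})^\top(\alpha\odot\bar u^{(1)}).
\end{aligned}
\]
The last product gives $\bar v^{(0)}$, which is not needed for $\bar S$, so it may be skipped.

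The cotangent vectors are then obtained by a constant number of banded sweeps over the tiles. Each sweep recomputes $S_{I,J}$ and $Z_{I,J}$ from the $O(Bd)$ slices, forms $P^{(2,2)}_{I,J}$ from $u^{(2)},v^{(2)}$, and accumulates $O(B)$ row or column partial sums into global length-$L$ vectors. A final sweep applies the pointwise formula for $\bar S$ on each tile.

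The global state consists of the potentials $u^{(0..2)},v^{(0..2)}$, the scalings $\alpha,\beta,\delta$, and the cotangent vectors, which is $O(L)$ in total. The inputs take $O(Ld)$ storage. Each sweep touches $O(LW)$ active entries with $O(d)$ work per entry, and $d$ is fixed, so the number of sweeps is a constant for $R=2$ and the total work is $O(LW)$. Any further tail-backward reduction of $\bar S$ into $\bar Q,\bar K$ has the same cost; that step goes beyond the statement, but it is compatible with the schedule.
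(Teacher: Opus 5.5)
Your proof is correct and follows the paper's route: the entrywise ratio $P^{(p,q)}_{ij}/P^{(2,2)}_{ij}=e^{u^{(p)}_i-u^{(2)}_i+v^{(q)}_j-v^{(2)}_j}$, then substitution into the expanded $R=2$ adjoint, then streaming over the $O(LW/B^2)$ active tiles. Your diagonal-scaling rewrites of the cotangent recurrences spell out a step the paper's proof of this proposition leaves implicit (the paper only argues it in the proof of Proposition~\ref{prop:orbit}); the paper in turn also notes that the value gradient $(P^{(2,2)}_{I,J})^\top G_I$ reuses the same resident reference tile, which you should mention alongside the $\bar Q,\bar K$ accumulation.
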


\begin{remark}[Centered gauge ledger]
The displayed formula is the ungauged statement. If the stored centered representatives satisfy
\[
 \widetilde u^{(t)}=u^{(t)}-c_t\one,\qquad \widetilde v^{(t)}=v^{(t)}+c_t\one,
\]
where $c_0=0$ for the common centered base pair, then same-time plans are unchanged but mixed-time plans are not. With
\[
 \widetilde\alpha_i=e^{\widetilde u_i^{(1)}-\widetilde u_i^{(2)}},\quad
 \widetilde\beta_j=e^{\widetilde v_j^{(1)}-\widetilde v_j^{(2)}},\quad
 \widetilde\delta_j=e^{\widetilde v_j^{(0)}-\widetilde v_j^{(2)}},
\]
the ungauged mixed factors recovered from the centered ledger are
\begin{align*}
 P_{ij}^{(2,1)}&=e^{c_2-c_1}P_{ij}^{(2,2)}\widetilde\beta_j,&
 P_{ij}^{(1,1)}&=P_{ij}^{(2,2)}\widetilde\alpha_i\widetilde\beta_j,\\
 P_{ij}^{(1,0)}&=e^{c_1-c_0}P_{ij}^{(2,2)}\widetilde\alpha_i\widetilde\delta_j.
\end{align*}
Appendix~\ref{app:proofs} records this explicitly because the mixed-time factors are not gauge-invariant.
\end{remark}
For $R=2$, equation~\eqref{eq:score} expands to
\begin{align}\label{eq:r2-expanded}
\bar S={}&P^{(2,2)}\odot Z-P^{(2,2)}\odot(\one\bar v^{(2)\top})-P^{(2,1)}\odot(\bar u^{(2)}\one^\top)\nonumber\\
&-P^{(1,1)}\odot(\one\bar v^{(1)\top})-P^{(1,0)}\odot(\bar u^{(1)}\one^\top).
\end{align}
The reconstruction identity above then replaces separate formation of the last three plan tiles with row and column rescalings of $P^{(2,2)}$.
\begin{figure}[htbp]\centering
\includegraphics[width=\linewidth]{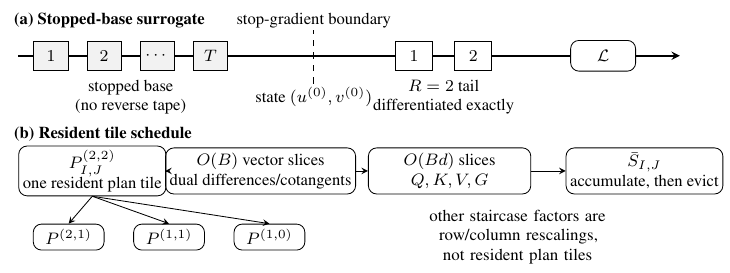}
\caption{Stopped-base surrogate and single-reference-tile $R=2$ backward schedule. Top: a $T$-step stopped solve initializes two exactly differentiated tail refinements. Bottom: each score tile is accumulated from one resident $P_{I,J}^{(2,2)}$ tile and vector/activation slices; the other staircase factors are recovered by row/column rescaling rather than materialized.}\label{fig:schedule}
\end{figure}
Figure~\ref{fig:schedule} summarizes the stop-gradient boundary and the tile-level memory schedule. The schedule is the signature implementation observation in the manuscript: it converts the explicit four-factor $R=2$ backward into evaluation from one reference plan tile plus cheap row and column modifiers, which is why the method maps cleanly to streaming accelerator kernels.

\paragraph{Why $R=2$.}
The multiplicative orbit identity itself is not unique to $R=2$. Relative to any chosen base staircase plan, every other staircase plan factors by the corresponding dual differences in the same way. What is special about $R=2$ is that the exact adjoint closes into the compact four-factor formula above, so the identity yields a concrete one-reference-tile schedule rather than a larger family of rescaled plans.

\subsection{Single-Active-Dustbin Bridge}\label{sec:dustbin}
The current dustbin TPU path does not use a new OT solver. Instead, it augments the unit-target problem. Let $B_{\rm db}=128$ denote the dustbin block size used in the implementation. Define augmented tensors
\[
 \widetilde Q,\widetilde K,\widetilde V\in\mathbb R^{(L+B_{\rm db})\times d}
\]
by appending one learned dustbin token per side together with $B_{\rm db}-1$ inactive filler tokens. The masks are augmented analogously: one appended token is active, the rest are inactive fillers. The implementation then applies the same balanced tail-refinement solver to this augmented system, with effective band width $\widetilde W=\max(W,L)$.

The current reported implementation is exactly the unit-target surrogate on the active support induced by this widened generic band mask after inactive fillers are excluded; denote that support by $\widetilde\Omega_{\rm impl}$. Write $\Omega_{\rm band}:=\Omega$ for the original band support, and let $I_q,I_k$ denote the active original query and key indices. A hypothetical sparse-spoke implementation would instead use
\[
\widetilde\Omega_{\rm spoke}=\Omega_{\rm band}\cup\{(i,d_k):i\in I_q\}\cup\{(d_q,j):j\in I_k\}\cup\{(d_q,d_k)\},
\]
where $d_q,d_k$ are the active appended dustbin row and column; that support would have $|\Omega_{\rm band}|+O(L)$ active edges. The inactive fillers carry zero mask mass. Because the current reported code path realizes the bridge through the generic widened band mask $\widetilde W=\max(W,L)$, its arithmetic should be accounted as $O(L\widetilde W)$, which becomes dense $O(L^2)$ when $W<L$. We do not claim the current dustbin path has the same long-context complexity as the balanced banded path in that regime.

\begin{proposition}[Dustbin Augmentation Bridge]\label{prop:dustbin}
Let $(\widetilde Q,\widetilde K,\widetilde V,\widetilde\Omega_{\rm impl})$ denote the dustbin-augmented tensors and implemented active support constructed above. Then the current \texttt{dustbin\_block} path is exactly the same unit-target fixed-support tail-refinement surrogate as in Section~2.2, but applied on the augmented state space.
\end{proposition}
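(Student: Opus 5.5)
The plan is to unfold the definition of the \texttt{dustbin\_block} path and check that each stage coincides with the corresponding stage of the balanced surrogate of Section~2.2. The only inputs that change are the instantiated tensors, the mask, and the support. The Section~2.2 surrogate is a deterministic function of four data: the score matrix $S$, the active support $\Omega$, the value matrix $V$, and the stopped base pair. The half-steps \eqref{eq:row-main}--\eqref{eq:col-main} and the terminal map $\mathcal A$ use no other structure; in particular they never assume a band shape or that the row and column counts are equal.

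\textbf{Step 1 (scores and support).} On the augmented index set $\{1,\dots,L+B_{\rm db}\}$ I will form $\widetilde S_{ij}=\widetilde Q_i\widetilde K_j^\top/(\sqrt d\,\eps)$. The widened band mask with $\widetilde W=\max(W,L)$ is intersected with the active row and column masks. This intersection defines exactly $\widetilde\Omega_{\rm impl}$. Inactive fillers carry zero mask mass, so they contribute no terms to any masked sum. Their own $u_i$ or $v_j$ entries therefore never enter an active sum, and they can be dropped from the state. This requires checking that the implementation sets filler entries to $-\infty$, or excludes them, rather than adding a finite constant.

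\textbf{Step 2 (half-steps and tail).} The masked log-sum-exp updates applied by the generic solver are, index for index, \eqref{eq:row-main}--\eqref{eq:col-main} with $(S,\Omega)$ replaced by $(\widetilde S,\widetilde\Omega_{\rm impl})$. They carry unit targets on every active row and column, including the dustbin row $d_q$ and column $d_k$. Running $T$ stopped steps and then $R$ recomputed tail steps therefore yields precisely the iterates $(u^{(t)},v^{(t)})$ of the Section~2.2 surrogate on the augmented system. The terminal output is $\mathcal A(\widetilde Q,\widetilde K,\widetilde V,u^{(R)},v^{(R)})$ with no extra renormalization. Because the surrogate is literally the same function, Propositions~\ref{prop:adjoint} and~\ref{prop:one-ref} apply verbatim, as does the centered gauge ledger.

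\textbf{Step 3 (main obstacle).} The hard part is not algebraic; it is making precise the claim that ``the implementation equals this map.'' That means verifying two things: that the widened band genuinely covers every active pair, and that fillers are handled by exclusion rather than finite padding. For coverage, I would first show that $\widetilde W=\max(W,L)\ge L$ makes the band contain every pair of active augmented indices within distance $L$. Among them are $(i,d_k)$, $(d_q,j)$, and $(d_q,d_k)$, since the dustbin sits at position $L+1$. Hence $\widetilde\Omega_{\rm impl}\supseteq\widetilde\Omega_{\rm spoke}$. Equality is not required; it suffices that the support is whatever the mask induces. I would also flag that any unit-target mismatch on rectangular examples is inherited unchanged. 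The proposition therefore asserts equality of surrogates, not compatible-marginal OT with gap capacities.
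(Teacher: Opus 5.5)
Your proposal is correct and follows essentially the same route as the paper's proof. Both arguments rest on three points: the generic unit-target code path is reused unchanged, inactive fillers contribute identically zero transport entries, and the widened band $\widetilde W=\max(W,L)$ lets every active base token reach the opposite active dustbin and vice versa. Your added details sharpen what the paper states tersely: the explicit distance check (the dustbin sits at position $L+1$, so every spoke lies within distance $L$), the remark that $\widetilde\Omega_{\rm impl}\supseteq\widetilde\Omega_{\rm spoke}$ need not be an equality, and the caveat that fillers must be excluded rather than padded with finite scores.
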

\begin{corollary}[Exact $R=2$ Adjoint on the Augmented State Space]\label{cor:dustbin}
Apply Propositions~\ref{prop:adjoint} and~\ref{prop:one-ref} to the augmented system. Then the current dustbin-augmented transport path inherits the exact fixed-depth adjoint and the same one-reference-tile schedule on the augmented support.
\end{corollary}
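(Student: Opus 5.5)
The corollary is essentially a transfer statement. I would prove it by checking that every hypothesis used in Propositions~\ref{prop:adjoint} and~\ref{prop:one-ref} holds for the augmented system supplied by Proposition~\ref{prop:dustbin}. The approach is to freeze the augmented data, namely the tensors $(\widetilde Q,\widetilde K,\widetilde V)$, the support $\widetilde\Omega_{\rm impl}$, and the stopped augmented base pair $(\widetilde u^{(0)},\widetilde v^{(0)})$. We then observe that the derivations of the two earlier propositions used only the following structure:
\begin{itemize}
\item a finite index set;
\item a fixed active support;
\item the masked unit-target half-steps \eqref{eq:row-main}--\eqref{eq:col-main};
\item the raw terminal transport map $\mathcal A$;
\item the stop-gradient on the base pair.
\end{itemize}
Nothing in those arguments used the band shape of $\Omega$, the size of $W$, or any property of the original indices beyond their being active.

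The steps, in order, are as follows.

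\emph{Step 1: identify the augmented map.} By Proposition~\ref{prop:dustbin}, the \texttt{dustbin\_block} output equals $\widetilde O^{[R]}(\widetilde Q,\widetilde K,\widetilde V;\widetilde u^{(0)},\widetilde v^{(0)})$. This is the fixed-depth surrogate of Section~2.2 on index set $\{1,\dots,L+B_{\rm db}\}$ with support $\widetilde\Omega_{\rm impl}$. The inactive fillers enter no sum because they carry zero mask mass, so we may equivalently restrict to the active augmented indices.

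\emph{Step 2: transfer the adjoint.} Apply Proposition~\ref{prop:adjoint} verbatim with $S$ replaced by $\widetilde S$. This yields the reverse recursions \eqref{eq:vr}--\eqref{eq:ureverse} and the score cotangent \eqref{eq:score} with augmented staircase plans $\widetilde P^{(p,q)}$. The score cotangent then pulls back to $(\widetilde Q,\widetilde K)$, and hence to the learned dustbin token parameters, by the same chain rule through $\widetilde A=\widetilde Q\widetilde K^\top/\sqrt d$.

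\emph{Step 3: transfer the schedule.} Proposition~\ref{prop:one-ref} rests only on the algebraic identity $\exp(\widetilde S_{ij}+\widetilde u^{(p)}_i+\widetilde v^{(q)}_j)=\widetilde P^{(2,2)}_{ij}\,e^{\widetilde u^{(p)}_i-\widetilde u^{(2)}_i}e^{\widetilde v^{(q)}_j-\widetilde v^{(2)}_j}$, applied on the support. This identity holds entrywise on $\widetilde\Omega_{\rm impl}$, so the one-reference-tile VJP formula holds with $\alpha,\beta,\delta$ built from the augmented duals. If centered representatives are stored, the centered gauge ledger remark applies unchanged.

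The main obstacle is not algebraic but a matter of scope and bookkeeping. We must make sure the inactive fillers are genuinely excluded, so that no $-\log 0$ arises in \eqref{eq:row-main}--\eqref{eq:col-main}. Concretely, every active augmented row and column must have a nonempty support slice; I would verify this directly, since each original active query reaches $d_k$ and $d_q$ reaches every active key under $\widetilde W=\max(W,L)$. We must also state the complexity honestly. The schedule's bounds become $O(L\widetilde W)$ tile arithmetic, which is $O(L^2)$ when $W<L$, rather than the banded $O(LW)$. Memory stays at one resident reference tile plus $O(L+B_{\rm db})$ vector state. I would state the corollary's "same schedule" claim in exactly this form.
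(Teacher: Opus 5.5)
Your proposal is correct and follows the paper's route. The paper's proof simply applies Propositions~\ref{prop:adjoint} and~\ref{prop:one-ref} to the augmented tensors and support furnished by Proposition~\ref{prop:dustbin}, noting that no new algebra is needed; your extra bookkeeping (nonempty active slices under $\widetilde W=\max(W,L)$, and the $O(L\widetilde W)$ cost caveat) matches what the paper places in the proof of Proposition~\ref{prop:dustbin} and the cost discussion of Section~\ref{sec:dustbin}.
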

This is the first dustbin extension carried into the unified paper because it already matches the cloud path. It is a bridge theorem, not a departure from the central balanced factorization. With one active dustbin token per side, the dustbin marginal capacity is one unit on each side; arbitrary gap capacity would require enlarged dustbin marginals, multiple active dustbin tokens, or a genuinely unbalanced model.

\subsection{A Local Surrogate-Bias Bound}
The fixed-depth backward is exact for the surrogate it differentiates. A separate question is how far that surrogate gradient can be from the gradient through the full stopped-base computation. Let
\[
 z=(u,v),\qquad \Phi_x(z)=z^+,
\]
be the centered one-step refinement map at differentiable parameters $x$, and let $\mathcal A_x(z)$ denote the transport application map. Let $\ell_x(z)$ denote the scalar terminal loss applied to $\mathcal A_x(z)$, and take the initialization $z_0$ to be independent of $x$. Write
\[
 z_T(x)=\Phi_x^T(z_0),\qquad \Psi_x^R(z)=\ell_x(\Phi_x^R(z)).
\]
Then the full fixed-depth gradient is $g_{\full}^{T,R}(x)=\nabla_x\bigl(\Psi_x^R(z_T(x))\bigr)$, while the stop-gradient surrogate uses
\[
 g_{\surr}^{T,R}(x)=\left[D_x\Psi_x^R(z)\big|_{z=z_T(x)}\right]^\top.
\]
\begin{theorem}[Local Surrogate-Bias Bound]\label{thm:bias}
Assume fixed support, $C^1$ dependence of $\Phi_x$ and $\ell_x$ on $(x,z)$, and constants $\rho\in[0,1)$, $L_\Phi>0$, and $L_\ell>0$ such that
\[
 \|D_z\Phi_x(z)\|\le\rho,\qquad \|D_x\Phi_x(z)\|\le L_\Phi,\qquad \|\nabla_z\ell_x(z)\|\le L_\ell
\]
on a forward-invariant neighborhood. Then
\[
 g_{\full}^{T,R}(x)-g_{\surr}^{T,R}(x)=D_xz_T(x)^\top\nabla_z\Psi_x^R(z_T(x)),
\]
and
\[
 \|g_{\full}^{T,R}(x)-g_{\surr}^{T,R}(x)\|\le\frac{L_\ell L_\Phi}{1-\rho}\rho^R.
\]
\end{theorem}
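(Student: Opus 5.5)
The plan is to apply the chain rule to the composite $x\mapsto\Psi_x^R(z_T(x))$, which yields the identity, and then to bound the two factors of the remainder term separately.

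\textbf{Identity.} By the chain rule,
\[
 \nabla_x\bigl(\Psi_x^R(z_T(x))\bigr)=\bigl[D_x\Psi_x^R(z)\bigr]^\top\big|_{z=z_T(x)}+D_xz_T(x)^\top\nabla_z\Psi_x^R(z_T(x)).
\]
The first term is exactly $g_{\surr}^{T,R}$, so subtracting it gives the displayed difference formula. All that is needed is $C^1$ regularity of $\Phi_x$ and $\ell_x$, hence of $\Psi_x^R$ and $z_T$.

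\textbf{Factor $\nabla_z\Psi_x^R$.} The chain rule gives
\[
 \nabla_z\Psi_x^R(z)=\bigl[D_z\Phi_x(z_{R-1})\cdots D_z\Phi_x(z_0')\bigr]^\top\nabla_z\ell_x(\Phi_x^R(z)),
\]
where $z_0'=z$ and $z_k=\Phi_x^k(z)$ are the tail iterates. Forward invariance keeps every iterate in the neighborhood, so each Jacobian factor has norm at most $\rho$ and the gradient of $\ell_x$ has norm at most $L_\ell$. Submultiplicativity then gives $\|\nabla_z\Psi_x^R(z_T)\|\le L_\ell\rho^R$.

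\textbf{Factor $D_xz_T$.} Differentiating $z_{t+1}=\Phi_x(z_t)$ gives the recursion
\[
 D_xz_{t+1}=D_z\Phi_x(z_t)\,D_xz_t+D_x\Phi_x(z_t),
\]
with $D_xz_0=0$ because $z_0$ is independent of $x$. By induction, $\|D_xz_t\|\le L_\Phi\sum_{k<t}\rho^k\le L_\Phi/(1-\rho)$, uniformly in $T$. Multiplying the two factors gives the stated bound $L_\ell L_\Phi\rho^R/(1-\rho)$. The operator norm of an adjoint equals that of the operator, so transposition costs nothing.

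\textbf{Main obstacle.} The delicate step is making sure all the hypotheses apply where they are used. Both the base iterates $z_t$ for $t\le T$ and the tail iterates starting from $z_T$ must lie in the neighborhood where the Jacobian bounds hold. This requires $z_0$ to lie in the forward-invariant neighborhood, which I will state as part of the assumption.

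Two further points need care. First, the norms must be compatible operator norms, e.g.\ Euclidean throughout, so that submultiplicativity and the adjoint identity hold. Second, in centered coordinates $\Phi_x$ acts on a gauge-fixed subspace, and the contraction factor $\rho$ is a statement on that subspace. I will remark that this is consistent because all iterates stay within that subspace.
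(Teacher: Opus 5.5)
Your proposal is correct and follows the paper's proof step for step. You use the chain rule for the identity, bound the tail factor by $L_\ell\rho^R$ through the product of tail Jacobians, and get $\|D_xz_T\|\le L_\Phi/(1-\rho)$ from the base recursion with $D_xz_0=0$. Your added bookkeeping (requiring $z_0$ to lie in the forward-invariant neighborhood, fixing compatible Euclidean operator norms, and noting that the contraction holds on the gauge-fixed subspace) makes explicit the hypotheses the paper's argument uses implicitly.
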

The point of this bound is modest but useful: once the support is fixed and the refinement map is locally contractive, the stop-gradient bias decays geometrically in the tail depth $R$. We also record the finite-instance version used by the validation harness.

\begin{proposition}[Projective Sinkhorn Contraction Certificate]\label{prop:hilbert}
Consider a strictly positive active block of the balanced or dustbin-augmented support, and let $K_{ij}=\exp(S_{ij})$ be the positive kernel on that block, using the scaled score convention from Section~2.1. For positive marginals $a,b$ and positive column scalings $w,w'$, define the two-half-step Sinkhorn scaling map
\[
 \mathcal T(w)=b\oslash\bigl(K^\top(a\oslash(K w))\bigr),
\]
where our normalized update has $a=b=\one$. Let $d_H$ denote Hilbert's projective metric, let $\Delta(\mathcal B)$ be the projective diameter of a positive linear map $\mathcal B$, and define $\osc(h)=\max_i h_i-\min_i h_i$. Then
\[
 d_H(\mathcal T(w),\mathcal T(w'))\le\rho_Hd_H(w,w'),\quad
 \rho_H=\tanh\!\left(\frac{\Delta(K)}4\right)\tanh\!\left(\frac{\Delta(K^\top)}4\right)<1.
\]
Equivalently, in log scalings, with $\Phi_{\rm col}(v)=\log\mathcal T(e^v)$ modulo additive gauge, the centered half-step pair is a contraction in oscillation seminorm:
\[
 \osc(\Phi_{\rm col}(v)-\Phi_{\rm col}(v'))\le\rho_H\osc(v-v').
\]
If $\osc_\Omega(S):=\max_{(i,j)}S_{ij}-\min_{(i,j)}S_{ij}$ on the active block, then a computable sufficient bound is
\[
 \rho_H\le\tanh^2\!\left(\frac{\osc_\Omega(S)}2\right)<1.
\]
Equivalently, for the scaled-dot-product attention logits $A$ with $S=A/\eps$ and $\osc_\Omega(A)=\max A-\min A$ on the active block, this reads
$\rho_H\le\tanh^2\bigl(\osc_\Omega(A)/(2\eps)\bigr)$.
\end{proposition}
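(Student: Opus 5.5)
The plan is to write $\mathcal T$ as a composition of positive linear maps and coordinatewise inversions, and then apply Birkhoff's contraction theorem to each linear piece. Write $\mathcal T = D_b\circ \iota\circ K^\top\circ D_a\circ\iota\circ K$, where $D_a,D_b$ denote multiplication by the positive diagonal matrices $\diag(a),\diag(b)$ and $\iota(x)=\one\oslash x$ is coordinatewise inversion on the positive orthant. Two elementary facts about Hilbert's projective metric $d_H(x,y)=\log\max_i(x_i/y_i)+\log\max_i(y_i/x_i)$ will be used. First, positive diagonal scalings are isometries, since the ratios $x_i/y_i$ are unchanged. Second, $\iota$ is also an isometry, because inverting every coordinate simply swaps the max and min of the ratios. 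For a strictly positive matrix $M$, Birkhoff's theorem gives $d_H(Mx,My)\le\tanh(\Delta(M)/4)\,d_H(x,y)$, with $\Delta(M)=\sup d_H(Mx,My)<\infty$. Chaining the six maps then yields the stated $\rho_H$, and $\rho_H<1$ because $\Delta(K)<\infty$ when $K$ has positive entries. I will take Birkhoff's theorem from the literature (Birkhoff/Bushell) rather than reprove it; this is the only nontrivial input.

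For the log-scaling reformulation, note that with $w=e^v$ we have $d_H(e^v,e^{v'})=\max_j(v_j-v'_j)-\min_j(v_j-v'_j)=\osc(v-v')$. Hilbert's metric ignores positive scalar multiples, which corresponds to additive constants on log scalings, so it descends to the centered gauge quotient. The oscillation bound is therefore a direct rewriting of the projective bound.

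For the computable bound I will use the standard formula for the projective diameter of a positive matrix:
\[
\Delta(K)=\max_{i,i',j,j'}\log\frac{K_{ij}K_{i'j'}}{K_{i'j}K_{ij'}}.
\]
The diameter is attained on extreme rays of the cone, i.e.\ on pairs of columns. Substituting $K=e^S$, each cross-ratio equals $S_{ij}+S_{i'j'}-S_{i'j}-S_{ij'}$, which is at most $2\,\osc_\Omega(S)$. The same argument gives $\Delta(K^\top)\le 2\,\osc_\Omega(S)$. Since $\tanh$ is monotone, each factor is at most $\tanh(\osc_\Omega(S)/2)$, so $\rho_H\le\tanh^2(\osc_\Omega(S)/2)$. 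The $A$-form then follows by substituting $\osc_\Omega(S)=\osc_\Omega(A)/\eps$.

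The main subtlety is the scope of the statement rather than the calculation. Birkhoff's bound needs $K$ strictly positive on the block; zero entries, for instance from a banded support, would make $\Delta=\infty$. I will therefore state explicitly that the block is a full rectangle with every entry active, as the hypothesis "strictly positive active block" says, and that $a,b$ enter only as diagonal isometries, so the rate does not depend on the marginals. In the dustbin-augmented case the inactive fillers are excluded before the block is chosen, so the same argument applies verbatim.
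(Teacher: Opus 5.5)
Your proposal is correct and follows the paper's proof essentially step for step. Both factor $\mathcal T$ into diagonal marginal scalings and coordinatewise inversions, which are projective isometries, together with the two positive linear maps $K$ and $K^\top$, which carry the Birkhoff--Hopf factors. Both then use $d_H(e^v,e^{v'})=\osc(v-v')$ for the log form, and the cross-ratio formula for $\Delta(K)$, bounded by $2\,\osc_\Omega(S)$, for the computable certificate.
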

This proposition is a support-local certificate, not a claim about masked zeros. The Birkhoff--Hopf argument \citep{birkhoff1957extensions,bushell1973hilbert} applies on strictly positive active blocks after masked entries and inactive dustbin fillers have been excluded. The dustbin bridge preserves the statement because the active dustbin token is part of the augmented positive block, while the inactive filler tokens are outside the cone on which the contraction is asserted.

\begin{proposition}[A Posteriori Bias Certificate]\label{prop:posterior}
Let $F_R(x,z)$ denote the scalar tail objective obtained by applying the terminal loss to $\mathcal A_x(\Phi_x^R(z))$, and define the measured base-state cotangent
\[
 \eta_R(x)=\nabla_zF_R(x,z)\big|_{z=z_T(x)}.
\]
Then the full-gradient remainder omitted by the stopped-base surrogate is exactly
\[
\Delta g_R:=\nabla_xF_R(x,z_T(x))-
\left[D_xF_R(x,z)\big|_{z=z_T(x)}\right]^\top
=D_xz_T(x)^\top\eta_R(x).
\]
Consequently, for any compatible norm,
$\|\Delta g_R\|\le\|D_xz_T(x)^\top\|\,\|\eta_R(x)\|$.
\end{proposition}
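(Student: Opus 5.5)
The identity is a chain rule for the composite $x\mapsto F_R(x,z_T(x))$; the bound then follows from operator-norm submultiplicativity. Throughout we take the same hypotheses as Theorem~\ref{thm:bias}: fixed support, $C^1$ dependence of $\Phi_x$ and the terminal loss on $(x,z)$, and an initialization $z_0$ independent of $x$. Under these, $z_T(x)=\Phi_x^T(z_0)$ is $C^1$ in $x$, because it is a finite composition of $C^1$ maps. Likewise $F_R(x,z)=\ell_x(\Phi_x^R(z))$ is $C^1$ jointly in $(x,z)$.

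\textbf{Step 1: differentiate the composite.} Write $h(x)=F_R(x,z_T(x))$. The multivariable chain rule gives
\[
Dh(x)=D_xF_R(x,z)\big|_{z=z_T(x)}+D_zF_R(x,z)\big|_{z=z_T(x)}\,D_xz_T(x).
\]

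\textbf{Step 2: transpose and identify the terms.} Transposing to gradient form, the first term is exactly the stopped-base surrogate $\left[D_xF_R(x,z)|_{z=z_T(x)}\right]^\top$. This is the quantity the backward of Proposition~\ref{prop:adjoint} computes, since it treats $z_T$ as a constant. The second term becomes $D_xz_T(x)^\top\,\nabla_zF_R(x,z)|_{z=z_T(x)}=D_xz_T(x)^\top\eta_R(x)$, using the definition of $\eta_R$. Subtracting the surrogate term yields $\Delta g_R=D_xz_T(x)^\top\eta_R(x)$. This agrees with the first display of Theorem~\ref{thm:bias}.

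\textbf{Step 3: the norm bound.} For any norm on the cotangent spaces with the induced (compatible) operator norm on linear maps, we have $\|A\eta\|\le\|A\|\,\|\eta\|$. Applying this with $A=D_xz_T(x)^\top$ gives the stated inequality.

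\textbf{Main subtlety.} The only delicate point is the gauge. The map $\Phi_x$ acts on centered representatives, so the base cotangent $\eta_R$ must be computed in the same centered coordinates in which $z_T$ is differentiated. Otherwise the pairing in the chain rule is ill-defined. Holding the coordinate system fixed, as the centered gauge ledger remark does, keeps the chain rule consistent. Beyond that, the statement is a direct calculus identity, and no contraction hypothesis is needed.
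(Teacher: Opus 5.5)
Your proposal is correct and follows the paper's proof. Both apply the chain rule to $x\mapsto F_R(x,z_T(x))$, subtract the stopped-base partial derivative to isolate $D_xz_T(x)^\top\eta_R(x)$, and then use submultiplicativity of the compatible operator norm; your explicit $C^1$ hypotheses and the centered-coordinate remark are harmless additions that the paper leaves implicit.
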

This certificate is operational: $\eta_R$ is the base cotangent produced by the same fixed-depth tail reverse pass before it is discarded by the stop-gradient rule, and $D_xz_T(x)^\top\eta_R$ is a single VJP through the base Sinkhorn solve.

\begin{corollary}[Certified Tail-Depth Selection]\label{cor:selection}
Fix a tolerance $\tau_{\rm cert}>0$ and a maximum tail depth $R_{\max}$. For each $R\le R_{\max}$, let $e_R(x)=D_xz_T(x)^\top\eta_R(x)$. Let $\operatorname{Cost}(R)$ be any nondecreasing implementation cost over the candidate depths $\{0,\ldots,R_{\max}\}$. If the implementation chooses the first $R$ such that $\|e_R(x)\|\le\tau_{\rm cert}$ and returns the stopped-base gradient at that depth, then the returned gradient differs from full BPTT through the same $T$-step base solve by at most $\tau_{\rm cert}$. Moreover, this first feasible $R$ solves the discrete certified optimization problem
\[
 \min_{0\le R\le R_{\max}}\operatorname{Cost}(R)\quad\text{subject to}\quad\|e_R(x)\|\le\tau_{\rm cert}.
\]
If an upper bound $\kappa_T(x)\ge\|D_xz_T(x)^\top\|$ is available, the same guarantee and optimality statement hold with the sufficient constraint $\kappa_T(x)\|\eta_R(x)\|\le\tau_{\rm cert}$.
\end{corollary}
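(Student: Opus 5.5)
The plan is to reduce Corollary~\ref{cor:selection} to Proposition~\ref{prop:posterior} applied separately at each candidate depth, and then to a short monotonicity argument for the discrete optimization.

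\textbf{Step 1: the error bound.} Fix $x$ and $T$. For each $R\in\{0,\ldots,R_{\max}\}$, Proposition~\ref{prop:posterior} gives the exact identity
\[
 \nabla_xF_R(x,z_T(x))-\bigl[D_xF_R(x,z)|_{z=z_T(x)}\bigr]^\top=e_R(x).
\]
The first term is the full BPTT gradient through the same $T$-step base solve followed by the $R$-step tail. The second term is exactly the stopped-base gradient that the implementation returns at depth $R$. Let $R^\star$ be the first index with $\|e_{R^\star}(x)\|\le\tau_{\rm cert}$. Then the returned gradient differs from full BPTT by $\|e_{R^\star}(x)\|\le\tau_{\rm cert}$.

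\textbf{Step 2: optimality.} Write the feasible set as $\mathcal F=\{R\le R_{\max}:\|e_R(x)\|\le\tau_{\rm cert}\}$. By construction $R^\star=\min\mathcal F$. Any other feasible $R$ satisfies $R\ge R^\star$, so $\operatorname{Cost}(R)\ge\operatorname{Cost}(R^\star)$ because $\operatorname{Cost}$ is nondecreasing. Hence $R^\star$ attains the minimum of the constrained problem. Two caveats will be stated explicitly:
\begin{itemize}
\item If $\operatorname{Cost}$ is only nondecreasing rather than strictly increasing, $R^\star$ is \emph{a} minimizer, and the smallest one, but not necessarily the unique one.
\item If $\mathcal F$ is empty, the procedure selects nothing and the claim is vacuous. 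The statement implicitly assumes $\mathcal F\neq\emptyset$.
\end{itemize}

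\textbf{Step 3: the sufficient constraint.} Combine the norm inequality of Proposition~\ref{prop:posterior} with $\kappa_T(x)\ge\|D_xz_T(x)^\top\|$:
\[
 \|e_R(x)\|\le\|D_xz_T(x)^\top\|\,\|\eta_R(x)\|\le\kappa_T(x)\|\eta_R(x)\|.
\]
So the set $\mathcal F'=\{R:\kappa_T(x)\|\eta_R(x)\|\le\tau_{\rm cert}\}$ is contained in $\mathcal F$. Choosing the first element of $\mathcal F'$ therefore still yields error at most $\tau_{\rm cert}$. The optimality argument of Step~2 then repeats verbatim, with $\mathcal F'$ in place of $\mathcal F$. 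The conclusion is that the first $R$ in $\mathcal F'$ minimizes cost subject to the sufficient constraint, which may cost more than the optimum under the exact constraint.

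\textbf{Main obstacle.} The only delicate point is notational bookkeeping rather than mathematics. The stopped-base gradient returned "at that depth" must be identified exactly with $[D_xF_R]^\top$ evaluated at the same base state $z_T(x)$ for every $R$. This holds because the base solve is shared across all candidate depths, since $T$ and $z_0$ are fixed, and only the tail depth changes. The chosen norm must also be the compatible norm used in Proposition~\ref{prop:posterior}. Once these are checked, the argument is immediate.
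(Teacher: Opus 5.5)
Your proposal is correct and follows the paper's proof essentially step for step. The exact identity $\Delta g_R=e_R(x)$ from Proposition~\ref{prop:posterior} gives the error bound, the chain $\|e_R(x)\|\le\|D_xz_T(x)^\top\|\,\|\eta_R(x)\|\le\kappa_T(x)\|\eta_R(x)\|$ handles the sufficient constraint, and the first-feasible-depth argument with nondecreasing cost gives optimality. Your caveats are accurate refinements that the paper leaves implicit: the minimizer need not be unique, the feasible set is implicitly assumed nonempty, and the $\kappa_T$ version is optimal only with respect to the sufficient constraint.
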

This turns tail-depth choice into a small certified resource-allocation problem rather than a purely heuristic hyperparameter. Instead of solving an adjoint linear system, one can either use a fixed depth such as $R=2$ when the certificate has already been calibrated, or periodically run the certificate VJP to verify that the stopped-base omission remains below a chosen tolerance.

\begin{proposition}[Transport-Orbit Calculus]\label{prop:orbit}
Consider any finite collection of transport plans $P^{(p,q)}$ generated from the same masked score tensor $S$ and the same active support, including the plans used by the $R=2$ staircase, the $T$ stopped base reverse pass, the $R$ differentiable tail reverse pass, and the certificate VJP $D_xz_T^\top\eta_R$. This fixed-score condition is essential: under an $\eps$-scaling continuation schedule, the orbit statement applies stagewise to plans using the same effective score $S^{(s)}=A/\eps_s$, not across different temperature stages. Fix any reference plan $P^{\rm ref}=P^{(p_{\rm ref},q_{\rm ref})}$ in this collection and define
\[
 r^{(p)}=\exp(u^{(p)}-u^{(p_{\rm ref})}),\qquad s^{(q)}=\exp(v^{(q)}-v^{(q_{\rm ref})}).
\]
Then every plan in the collection lies in the row/column scaling orbit of $P^{\rm ref}$:
\[
 P^{(p,q)}=\diag(r^{(p)})P^{\rm ref}\diag(s^{(q)}).
\]
Equivalently, for any finite sum of plan-weighted score cotangent terms
$\Xi=\sum_{\ell=1}^{N}P^{(p_\ell,q_\ell)}\odot H_\ell$, one has
\[
 \Xi=P^{\rm ref}\odot\sum_{\ell=1}^N
 \bigl(r^{(p_\ell)}(s^{(q_\ell)})^\top\bigr)\odot H_\ell.
\]
Consequently, for fixed $T,R$, band width $W$, and head dimension $d$, the stopped gradient, the certified omitted-gradient VJP, and the certified tail-depth test can all be evaluated blockwise with $O((T+R)LW)$ arithmetic, $O(Ld)$ input storage, and $O((T+R)L)$ vector state, while materializing only one transport tile at a time. The compiler abstraction is therefore not a special four-plan trick: it is accumulation over a single transport orbit.
\end{proposition}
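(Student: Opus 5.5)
The plan has three parts: an entrywise algebraic identity, a linear rewrite of cotangent sums, and a blockwise cost count. The algebra uses nothing beyond the definition of the staircase plans. The cost part draws on the explicit adjoint recursions (Proposition~\ref{prop:adjoint}) and on Proposition~\ref{prop:posterior}.

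First, fix any $(i,j)\in\Omega$. Every plan in the collection is generated from the same score $S$ and support $\Omega$, so
\[
P^{(p,q)}_{ij}=\exp(S_{ij}+u^{(p)}_i+v^{(q)}_j)
=\exp\bigl(u^{(p)}_i-u^{(p_{\rm ref})}_i\bigr)\,P^{\rm ref}_{ij}\,\exp\bigl(v^{(q)}_j-v^{(q_{\rm ref})}_j\bigr)
=r^{(p)}_iP^{\rm ref}_{ij}s^{(q)}_j .
\]
Off the support, both sides vanish because all plans share the mask. This gives $P^{(p,q)}=\diag(r^{(p)})P^{\rm ref}\diag(s^{(q)})$. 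The only hypothesis actually used is that the $S_{ij}$ terms cancel. This is why the fixed-score condition is essential: under $\eps$-continuation, plans at stages $s\ne s'$ differ by the factor $\exp(A_{ij}(1/\eps_s-1/\eps_{s'}))$. That factor is not rank-one separable, so the argument must be restricted to a single stage.

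Second, I rewrite $\diag(r)P\diag(s)=P\odot(rs^\top)$. The Hadamard product is commutative and associative and distributes over sums, so the identity for $\Xi$ follows immediately by linearity:
\[
\Xi=\sum_\ell P^{\rm ref}\odot(r^{(p_\ell)}s^{(q_\ell)\top})\odot H_\ell .
\]
If the stored potentials are centered, I will note that the same argument applies in ungauged coordinates. Translating back requires the scalar factors of the centered gauge ledger remark, which only rescale the vectors $r$ and $s$.

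Third, the complexity claim. I identify which plan-weighted terms appear in each of the three computations:
\begin{itemize}[leftmargin=*]
\item The tail reverse pass contributes the terms of equation~\eqref{eq:score}. Each $H_\ell$ there is either $Z$ or a rank-one matrix $\one\bar v^\top$ or $\bar u\one^\top$.
\item The base reverse pass for $D_xz_T^\top\eta_R$ obeys the same half-step recursions as Proposition~\ref{prop:adjoint}, applied for $T$ steps and seeded by $\eta_R$.
\item The vector recursions such as $\bar v^{(t-1)}=-(P^{(t,t-1)})^\top\bar u^{(t)}$ are themselves matrix--vector products with orbit plans. They can therefore be evaluated as $(P^{\rm ref}\odot rs^\top)^\top\bar u=s\odot\bigl(P^{\rm ref\top}(r\odot\bar u)\bigr)$.
\end{itemize}
Each such product costs one banded pass, which is $O(LW)$ tiles each rebuilt from $S_{I,J}$ and $O(B)$ vector slices. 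With $T+R$ steps this totals $O((T+R)LW)$ arithmetic. Storing all the potentials $u^{(t)},v^{(t)}$ and their cotangents requires $O((T+R)L)$ vector state. Input storage is $O(Ld)$. Finally, $\partial S/\partial(Q,K)$ is local, so the $\bar S$ tiles are contracted into $\bar Q,\bar K$ on the fly. The certified depth test of Corollary~\ref{cor:selection} then only compares a norm of this VJP, or of $\eta_R$, against $\tau_{\rm cert}$.

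The main obstacle I expect is the bookkeeping for the base reverse pass. I must check that every term in the $T$-step adjoint has a rank-one or $Z$-type modifier, so that no step requires a non-separable $H_\ell$ that would force materializing more than one tile. I must also check that the stored dual history indeed fits in $O((T+R)L)$ rather than requiring recomputation. I would handle this by writing the generic half-step VJP once and observing that its plan factor is always $P^{(t,t)}$ or $P^{(t,t-1)}$ with vector modifiers.
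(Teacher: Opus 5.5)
Your proposal is correct and follows the paper's own proof essentially step for step. You use the entrywise ratio identity with the $S_{ij}$ terms cancelling, linearity of the Hadamard product for $\Xi$, and the observation that every tail and base reverse half-step is a plan--vector product or a rank-one/$Z$-modified score term, so each pass streams one reference tile with $O(LW)$ work and $O(L)$ vector state per half-step. Two small corrections: the cross-temperature factor $\exp\bigl(A_{ij}(1/\eps_s-1/\eps_{s'})\bigr)$ is non-separable only \emph{in general} (it is separable precisely when $A$ on the support is a row term plus a column term, which is why the paper uses a $2\times2$ witness with nonzero alternating sum), and a banded pass is $O(LW)$ arithmetic over $O(LW/B^2)$ tiles, not $O(LW)$ tiles.
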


\section{Validation and Experiments}
\subsection{Exactness Validation of the Masked Surrogate}
We first separate surrogate correctness from systems scale. On synthetic masked problems for which exact autodiff through the same centered surrogate remains tractable, we compare the optimized $R=2$ kernel against a pure-JAX reference. Table~\ref{tab:exactness} reports representative maximum absolute gradient deviations under the default validation setting: float32 arithmetic, $\eps=1$, no $\eps$-scaling continuation, band width 256, $R=2$, centered potentials, and masked inactive entries zeroed.
\begin{table}[htbp]\centering\small
\begin{tabular}{@{}rrrrr@{}}\toprule
Length $L$ & $\max|\Delta Q|$ & $\max|\Delta K|$ & $\max|\Delta V|$ & Max rel. $\ell_2$\\\midrule
512 & $1.05\times10^{-5}$ & $1.73\times10^{-6}$ & $1.06\times10^{-9}$ & $5.78\times10^{-2}$\\
1024 & $5.70\times10^{-6}$ & $8.13\times10^{-7}$ & $2.55\times10^{-10}$ & $5.76\times10^{-2}$\\
2048 & $5.50\times10^{-6}$ & $9.17\times10^{-7}$ & $2.29\times10^{-10}$ & $5.05\times10^{-2}$\\\bottomrule
\end{tabular}
\caption{Gradient deviations between the optimized masked $R=2$ kernel and exact autodiff through the same centered surrogate. The final column is the worst relative $\ell_2$ error among $(\partial Q,\partial K,\partial V)$; the corresponding worst absolute $\ell_2$ errors are $6.59\times10^{-5}$, $4.11\times10^{-5}$, and $2.62\times10^{-5}$.}\label{tab:exactness}
\end{table}
The table is a finite-precision numerical comparison, not a proof of implemented gradient equivalence or a full numerical-analysis study. The relative $\ell_2$ errors are largest for the small-norm $Q$ gradients, whose reference $\ell_2$ norms are $1.14\times10^{-3}$, $7.14\times10^{-4}$, and $5.18\times10^{-4}$ in the three rows; the absolute $\ell_2$ errors remain below $6.6\times10^{-5}$ and the output relative $\ell_2$ errors are below $1.8\times10^{-7}$. Small reference norms explain the scale of the relative ratios, but do not identify the numerical source of the discrepancy. In particular, these measurements do not establish uniformly high relative gradient accuracy. The exact-arithmetic adjoint theorem and this approximately 5--6\% relative discrepancy are distinct claims; no new precision study or corrected measurement is asserted in this revision.

Additional forward-consistency checks run through lengths 512, 1024, 2048, 4096, 8192, and 16384, with extended spot checks at 32768, 65536, and 131072. We treat the larger checks as forward-consistency evidence rather than as exact-gradient checks.

To calibrate the local surrogate-bias discussion, we also compared tail-refinement gradients against full BPTT through the stopped-base solve on the same synthetic masked setup at $L=512$, $d=8$, $T=15$, $W=256$, and $\eps=1$. Over seeds 0, 1, 2, the worst-case gradient discrepancy $\max\{|\Delta Q|,|\Delta K|,|\Delta V|\}$ was $1.49\times10^{-4}$ for $R=0$, $1.63\times10^{-5}$ for $R=1$, $1.42\times10^{-5}$ for $R=2$, and $1.42\times10^{-5}$ for $R=4$. In this local exact-gradient calibration, the main empirical drop therefore occurs between zero tail refinement and the first one or two refinement steps, while $R=2$ and $R=4$ are already effectively tied at the scale of the masked validation harness.

Table~\ref{tab:biascert} reports the a posteriori certificate from Proposition~\ref{prop:posterior} on a smaller full-BPTT setting where the base-solve VJP can be checked directly ($L=128$, $d=8$, $T=15$, $W=128$, seeds 0, 1, 2). The certificate reconstructs the full-gradient remainder to numerical precision and shows rapid decay of the measured base-state cotangent.
\begin{table}[htbp]\centering\small
\begin{tabular}{@{}rrrr@{}}\toprule
Tail depth $R$ & $\max\|\eta_R\|_2$ & $\max|\Delta g_R|$ & certificate residual\\\midrule
0 & $5.89\times10^{-3}$ & $6.29\times10^{-4}$ & $2.47\times10^{-10}$\\
1 & $5.40\times10^{-4}$ & $5.93\times10^{-5}$ & $3.64\times10^{-11}$\\
2 & $5.47\times10^{-5}$ & $5.33\times10^{-6}$ & $2.84\times10^{-11}$\\
4 & $9.03\times10^{-7}$ & $9.99\times10^{-8}$ & $2.74\times10^{-11}$\\\bottomrule
\end{tabular}
\caption{A posteriori surrogate-bias certificate. The residual is the maximum absolute difference between the actual full-vs-stopped gradient gap and the certified base-solve VJP $D_xz_T^\top\eta_R$.}\label{tab:biascert}
\end{table}
Using the certified selector from Corollary~\ref{cor:selection} with max-absolute tolerance $10^{-5}$ selects $R=2$ on all three seeds in this calibration. Since the per-depth tail cost is monotone in $R$, this is the cheapest certified depth among the tested candidates and gives a certificate-level justification for using $R=2$ as the reported TPU depth.

We also validate the transport-orbit calculus from Proposition~\ref{prop:orbit} directly over all $2(T+R)$ base and tail half-step plans at $T=15$, $R=2$, $L=128$. Reconstructing every plan from the final reference plan gives maximum log-domain reconstruction error $1.91\times10^{-6}$. Appendix~\ref{app:protocol} reports two additional diagnostics. The controlled direct-four-plan score-adjoint diagnostic reduces logical active plan-factor storage by exactly $4\times$ and gives 1.29--2.20$\times$ post-compilation CPU speedups with maximum score-cotangent discrepancy at most $4.77\times10^{-7}$; it is still a dense-JAX CPU materialization check, not a production TPU kernel benchmark. The same appendix also reports projective contraction certificates computed from the promoted Pfam checkpoint.

\subsection{Short Pfam Screen on TPU v6e-8}
The first empirical bridge is the short Pfam screen on TPU v6e-8. All four core configurations complete end-to-end:
\begin{center}\small\texttt{k2\_eps1, k4\_eps1, muot\_dustbin\_mono, muot\_dustbin\_hybrid}.\end{center}
Each uses 50,000 training pairs, $T=15$ base iterations, band width 1024, and global batch size 64. Table~\ref{tab:screen} reports the step-50 screen snapshot.
\begin{table}[htbp]\centering\small
\begin{tabular}{@{}lrrrr@{}}\toprule
Config & Step 50 Loss & Step 50 CE & Step 50 Mono & Throughput (ex/s)\\\midrule
\texttt{k2\_eps1} & 3.5778 & 5.3223 & 0.3353 & 8.46\\
\texttt{k4\_eps1} & 3.5779 & 5.3223 & 0.3353 & 8.38\\
\texttt{muot\_dustbin\_mono} & 3.9030 & 5.3258 & 0.4155 & 8.41\\
\texttt{muot\_dustbin\_hybrid} & 4.6250 & 5.3244 & 0.4162 & 8.45\\\bottomrule
\end{tabular}
\caption{Short TPU v6e-8 Pfam screen across the four core balanced and dustbin-augmented configurations.}\label{tab:screen}
\end{table}
Two immediate conclusions follow. First, on the balanced path we observe no meaningful difference between $R=2$ and $R=4$ at this short-screen budget, while $R=2$ remains slightly faster; this is a short smoke screen rather than evidence that the runs have fully converged. Second, the single-active-dustbin bridge is operational in the same TPU training harness, with the widened-support cost caveat from Section~\ref{sec:dustbin}.

\subsection{Promoted Balanced Run}
We then promoted the balanced \texttt{k2\_eps1} configuration to the longer-budget TPU training path. The full objective details are given in the protocol notes in the appendix; for the interpretation of the main text, the important point is that this promotion is a trainability and systems-stability test of the factorization-backed balanced $R=2$ path. The run reached step 1437, repeatedly persisted checkpoints at steps 250, 500, 750, 1000, and 1250, and sustained approximately 8.5 examples per second after startup warmup. The initial throughput at step 0 was 2.67 examples per second, after which the run stabilized near 8.5 examples per second for the remainder of the budget.
\begin{table}[htbp]\centering\small
\begin{tabular}{@{}lrrrrr@{}}\toprule
Checkpoint / reference & Rec. & Sparse CE & Bary. MAE & Bary. $\le5$ & Mono.\\\midrule
Step 0 & 5.5730 & 5.5335 & 65.1369 & 0.0406 & 0.4309\\
Step 1437 & 2.0539 & 5.3046 & 64.5921 & 0.0403 & 0.5003\\
Diag. ref. (step-0 $V$) & 5.6320 & 17.8155 & 1.6367 & 0.9372 & 0.0000\\
Diag. ref. (step-1437 $V$) & 2.0142 & 17.8155 & 1.6367 & 0.9372 & 0.0000\\\bottomrule
\end{tabular}
\caption{Held-out Pfam test-shard evaluation on 100 examples. ``Bary.'' reports row-normalized transport-barycenter error against supervised target columns. The diagonal references use a length-scaled monotone point-mass alignment; their sparse CE uses a $10^{-12}$ off-diagonal floor and is not directly comparable to the soft Sinkhorn CE.}\label{tab:promoted}
\end{table}
Example-bootstrap 95\% intervals are [5.03, 6.06] and [1.78, 2.30] for reconstruction at steps 0 and 1437, and [5.49, 5.58] and [5.26, 5.35] for sparse CE. For the step-1437 diagonal reference, the intervals are [1.73, 2.31] for reconstruction, [1.33, 1.96] for barycenter MAE, and [0.90, 0.97] for barycenter within five columns. An untrained same-projection flash-attention reference gives reconstruction 5.5753 and sparse CE 5.5383, matching the step-0 Sinkhorn checkpoint; this is an initialization sanity reference, not a trained baseline. The promoted run is therefore evidence of stability, checkpointability, and reconstruction improvement for the factorization-backed path. The CE change is diagnostic, target-barycenter alignment metrics do not materially improve, barycenter-within-five and monotonicity do not improve, and the diagonal reference remains much stronger on alignment metrics. This experiment therefore does not establish an alignment-task advantage.

\subsection{Reviewer-Motivated Backward and Task Controls}
We ran a terminal all-eight-device data-parallel TPU v6e follow-up with one full-length example per device and no sequence sharding. The explicit complete-step executable returns the output and arbitrary-cotangent Q/K/V VJP for the stated $T=15$, $R=2$, $W=1024$ stopped-base surrogate. Both the one-reference and direct-four/recompute routes verify through the tested per-device context $L=262{,}144$ (Table~\ref{tab:terminal-short}). Within this matched follow-up, direct-four/recompute has the lower VJP median; the result is feasibility evidence for the exact low-state representation, not a latency win.
\begin{table}[htbp]\centering\small
\begin{tabular}{@{}rlrrl@{}}\toprule
$L$ & Method & Complete VJP (ms) & Compiler MiB & Result\\\midrule
16,384 & One-reference & 364.931 & 18.395 & verified\\
 & Direct-four/recompute & 328.927 & 18.332 & verified\\
131,072 & One-reference & 2,992.535 & 354.684 & verified\\
 & Direct-four/recompute & 2,709.000 & 354.622 & verified\\
262,144 & One-reference & 6,020.270 & 835.715 & verified\\
 & Direct-four/recompute & 5,415.049 & 835.528 & verified\\\bottomrule
\end{tabular}
\caption{Digest-normalized windowed TPU v6e follow-up. Each row used one synchronized warmup and three synchronized measurements. Compiler MiB is a backend static executable estimate, not dynamic HBM; 262,144 is a tested point, not a maximum feasible length.}\label{tab:terminal-short}
\end{table}
Compiler totals are static executable estimates with unresolved data-parallel byte scope, not dynamic peak HBM.

The matched 100-update Pfam reconstruction screen has six scientifically verified Sinkhorn/\allowbreak FlashAttention rows for seeds 0, 1, 2: five ordinary zero-exit completions and one content-addressed Sinkhorn seed-2 recovery admitted for scientific metrics only. Sinkhorn reconstruction MSE was 0.0545948/\allowbreak 0.0527878/\allowbreak 0.0523736 and trained FlashAttention was 0.0528423/\allowbreak 0.0506797/\allowbreak 0.0507820; every observed pair favors FlashAttention, with descriptive mean Sinkhorn-minus-FlashAttention difference $+0.0018174$. This is limited-seed, short-horizon reconstruction evidence, not convergence, population, timing, or biological-alignment evidence.

In a separate seed-0 direct-objective calibration, every tested CE weight (0.1, 0.3, 1.0) slightly improved reconstruction MSE, sparse CE, and barycenter MAE at step 100, but each slightly reduced within-five accuracy. No weight passed the prespecified four-part rule, so no checkpoint was selected and seed-1/\allowbreak 2 replication was not triggered.

\section{Discussion and Conclusion}
The contribution is exact fixed-depth backward theory for a balanced fixed-support surrogate, with a single-active-dustbin unit-target bridge, centered-kernel validation, and scoped TPU feasibility and trainability evidence. Exactness is an algebraic property of the stated derivative, not a claim that every floating-point comparison has negligible relative error. The follow-up measurements make the implementation tradeoff explicit: the one-reference representation reduces resident plan state, but the current kernel is not a universal latency winner. Trained FlashAttention is stronger on the short reconstruction screen, and the direct CE calibration does not pass its prespecified alignment selector. This is not a full KL-unbalanced gap model, a global masked-support guarantee, a matched official-FlashSinkhorn backward comparison, a dynamic-HBM measurement, or evidence of biological-alignment superiority.

\paragraph{Funding.} This research received no third-party funding or third-party support. All project expenses were paid personally by the author.
\paragraph{Competing interests.} The author declares no competing interests.
\paragraph{AI assistance.} AI tools assisted with reference checking, exposition, mathematical consistency review, and source reconstruction during this revision. Responsibility for the text, proofs, citations, and empirical claims remains with the author. This editorial revision does not constitute a rerun of the reported training or accelerator experiments.

\clearpage
\bibliographystyle{plainnat}
\bibliography{references}

@article{beltagy2020longformer,
  title={Longformer: The Long-Document Transformer},
  author={Beltagy, Iz and Peters, Matthew E. and Cohan, Arman},
  journal={arXiv preprint arXiv:2004.05150},
  year={2020}
}

@article{cuturi2013sinkhorn,
  title={Sinkhorn Distances: Lightspeed Computation of Optimal Transport},
  author={Cuturi, Marco},
  journal={Advances in Neural Information Processing Systems},
  volume={26},
  year={2013}
}

@inproceedings{luise2018differential,
  title={Differential Properties of {Sinkhorn} Approximation for Learning with {Wasserstein} Distance},
  author={Luise, Giulia and Rudi, Alessandro and Pontil, Massimiliano and Ciliberto, Carlo},
  booktitle={Advances in Neural Information Processing Systems},
  volume={31},
  pages={5864--5874},
  year={2018}
}

@article{peyre2019computational,
  title={Computational Optimal Transport: With Applications to Data Science},
  author={Peyr{\'e}, Gabriel and Cuturi, Marco},
  journal={Foundations and Trends in Machine Learning},
  volume={11},
  number={5--6},
  pages={355--607},
  year={2019},
  doi={10.1561/2200000073}
}

@inproceedings{zaheer2020bigbird,
  title={Big Bird: Transformers for Longer Sequences},
  author={Zaheer, Manzil and Guruganesh, Guru and Dubey, Kumar Avinava and Ainslie, Joshua and Alberti, Chris and Onta{\~n}{o}n, Santiago and Pham, Philip and Ravula, Anirudh and Wang, Qifan and Yang, Li and Ahmed, Amr},
  booktitle={Advances in Neural Information Processing Systems},
  volume={33},
  pages={17283--17297},
  year={2020}
}

@article{pauwels2023derivatives,
  title = {The Derivatives of {Sinkhorn--Knopp} Converge},
  author = {Pauwels, Edouard and Vaiter, Samuel},
  journal = {SIAM Journal on Optimization},
  volume = {33},
  number = {3},
  pages = {1494--1517},
  year = {2023},
  doi = {10.1137/22M1512703},
  url = {https://epubs.siam.org/doi/10.1137/22M1512703}
}

@inproceedings{dao2022flashattention,
  title = {{FlashAttention}: Fast and Memory-Efficient Exact Attention with {IO}-Awareness},
  author = {Dao, Tri and Fu, Daniel Y. and Ermon, Stefano and Rudra, Atri and R{\'e}, Christopher},
  booktitle = {Advances in Neural Information Processing Systems},
  volume = {35},
  year = {2022},
  url = {https://arxiv.org/abs/2205.14135}
}

@misc{ye2026flashsinkhorn,
  howpublished = {arXiv preprint arXiv:2602.03067},
  title = {{FlashSinkhorn}: {IO}-Aware Entropic Optimal Transport on {GPU}},
  author = {Ye, Felix X.-F. and Li, Xingjie and Yu, An and Chang, Ming-Ching and Chu, Linsong and Wertheimer, Davis},
  year = {2026},
  archivePrefix = {arXiv},
  eprint = {2602.03067},
  doi = {10.48550/arXiv.2602.03067},
  url = {https://arxiv.org/abs/2602.03067}
}

@inproceedings{bai2019deep,
 author={Bai, Shaojie and Kolter, J. Zico and Koltun, Vladlen},
 title={Deep Equilibrium Models},
 booktitle={Advances in Neural Information Processing Systems},
 volume={32}, year={2019}
}

@article{birkhoff1957extensions,
 author={Birkhoff, Garrett},
 title={Extensions of {Jentzsch}'s Theorem},
 journal={Transactions of the American Mathematical Society},
 volume={85}, number={1}, pages={219--227}, year={1957},
 doi={10.1090/S0002-9947-1957-0087058-6}
}

@article{bushell1973hilbert,
 author={Bushell, P. J.},
 title={{Hilbert}'s Metric and Positive Contraction Mappings in a {Banach} Space},
 journal={Archive for Rational Mechanics and Analysis},
 volume={52}, number={4}, pages={330--338}, year={1973},
 doi={10.1007/BF00247467}
}

@article{gaubert2015dobrushin,
 author={Gaubert, St{\'e}phane and Qu, Zheng},
 title={{Dobrushin}'s Ergodicity Coefficient for {Markov} Operators on Cones},
 journal={Integral Equations and Operator Theory},
 volume={81}, number={1}, pages={127--150}, year={2015},
 doi={10.1007/s00020-014-2193-2}
}

@article{lin2023evolutionary,
 author={Lin, Zeming and Akin, Halil and Rao, Roshan and Hie, Brian and Zhu, Zhongkai and Lu, Wenting and Smetanin, Nikita and Verkuil, Robert and Kabeli, Ori and Shmueli, Yaniv and others},
 title={Evolutionary-Scale Prediction of Atomic-Level Protein Structure with a Language Model},
 journal={Science}, volume={379}, number={6637}, pages={1123--1130}, year={2023},
 doi={10.1126/science.ade2574}
}

@article{mistry2021pfam,
 author={Mistry, Jaina and Chuguransky, Sara and Williams, Lowri and Qureshi, Matloob and Salazar, Gustavo A. and others},
 title={{Pfam}: The Protein Families Database in 2021},
 journal={Nucleic Acids Research}, volume={49}, number={D1}, pages={D412--D419}, year={2021},
 doi={10.1093/nar/gkaa913}
}

@article{smith1981identification,
 author={Smith, Temple F. and Waterman, Michael S.},
 title={Identification of Common Molecular Subsequences},
 journal={Journal of Molecular Biology}, volume={147}, number={1}, pages={195--197}, year={1981},
 doi={10.1016/0022-2836(81)90087-5}
}
\clearpage
\appendix
\section{Balanced Tail-Refinement Proofs}\label{app:proofs}
\paragraph{Notation.}
For the appendix proofs we work with ungauged representatives satisfying the raw masked logsumexp half-step equations. Let $\Omega$ denote the active support and define
\[
 P_{ij}^{(p,q)}=\begin{cases}\exp(S_{ij}+u_i^{(p)}+v_j^{(q)}),&(i,j)\in\Omega,\\0,&(i,j)\notin\Omega.\end{cases}
\]
\begin{align}
 u_i^{(t+1)}&=-\log\Bigl[\sum_{j:(i,j)\in\Omega}\exp(S_{ij}+v_j^{(t)})\Bigr],\label{eq:row-app}\\
 v_j^{(t+1)}&=-\log\Bigl[\sum_{i:(i,j)\in\Omega}\exp(S_{ij}+u_i^{(t+1)})\Bigr].\label{eq:col-app}
\end{align}
\paragraph{Centering map.}
Let $\one_{\rm row},\one_{\rm col}$ denote the all-ones vectors on valid rows and columns, and let $\omega_{\rm row}=\one_{\rm row}/(\one_{\rm row}^\top\one_{\rm row})$. The implementation stores centered representatives via
\[
 \mathcal C(u,v)=\bigl(u-(\omega_{\rm row}^\top u)\one_{\rm row},\ v+(\omega_{\rm row}^\top u)\one_{\rm col}\bigr).
\]
\begin{lemma}[Centering Pullback]\label{lem:center}
If $(\bar u,\bar v)$ are cotangents on the outputs of $\mathcal C$, then
\[
 \mathcal C^*(\bar u,\bar v)=\bigl(\bar u+\omega_{\rm row}(\one_{\rm col}^\top\bar v-\one_{\rm row}^\top\bar u),\ \bar v\bigr).
\]
\end{lemma}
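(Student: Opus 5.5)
The centering map $\mathcal C$ is linear, so the plan is to write it as a block matrix, take its transpose, and read off the pullback. Set $m(u)=\omega_{\rm row}^\top u$, the mean of $u$ over the valid rows. Then
\[
 \mathcal C(u,v)=\begin{pmatrix} I-\one_{\rm row}\omega_{\rm row}^\top & 0\\ \one_{\rm col}\omega_{\rm row}^\top & I\end{pmatrix}\begin{pmatrix}u\\ v\end{pmatrix}.
\]
Since $\mathcal C$ is linear, its Jacobian is this same matrix $J$ at every point. The pullback of output cotangents $(\bar u,\bar v)$ is therefore $J^\top(\bar u,\bar v)$.

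Next I would transpose $J$ block by block:
\[
 J^\top=\begin{pmatrix} I-\omega_{\rm row}\one_{\rm row}^\top & \omega_{\rm row}\one_{\rm col}^\top\\ 0 & I\end{pmatrix}.
\]
Applying this to $(\bar u,\bar v)$ gives the $v$-component $\bar v$. The $u$-component is $\bar u-\omega_{\rm row}(\one_{\rm row}^\top\bar u)+\omega_{\rm row}(\one_{\rm col}^\top\bar v)$, which is the claimed formula. As a sanity check, I would also verify the adjoint identity $\langle J(\dot u,\dot v),(\bar u,\bar v)\rangle=\langle(\dot u,\dot v),J^\top(\bar u,\bar v)\rangle$ directly. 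This uses only $\langle \one_{\rm row} m(\dot u),\bar u\rangle=m(\dot u)\,\one_{\rm row}^\top\bar u=\langle\dot u,\omega_{\rm row}\one_{\rm row}^\top\bar u\rangle$ and the analogous identity for the column term.

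The only step needing care is a bookkeeping one: what happens on inactive coordinates. The all-ones vectors are supported on valid rows and columns, so the gauge shift touches only active entries, and the cotangents at inactive positions pass through unchanged. I would state explicitly that $\omega_{\rm row}$ vanishes off valid rows. That makes the correction term $\omega_{\rm row}(\cdots)$ add nothing at masked indices, consistent with the zero mask mass used elsewhere in the paper. There is no real analytic obstacle here; the proof is a direct linear-algebra computation.
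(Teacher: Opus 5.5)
Your proof is correct and follows essentially the same route as the paper. The paper writes the variation $\delta\mathcal C(u,v)=\bigl(\delta u-(\omega_{\rm row}^\top\delta u)\one_{\rm row},\ \delta v+(\omega_{\rm row}^\top\delta u)\one_{\rm col}\bigr)$, pairs it with $(\bar u,\bar v)$, and collects the coefficients of $\delta u,\delta v$; that is precisely your adjoint-identity check, and your block transpose of $J$ is the matrix form of the same computation.
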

\begin{proof}
Write $s=\omega_{\rm row}^\top u$, so $\delta s=\omega_{\rm row}^\top\delta u$. Then
\[
 \delta\mathcal C(u,v)=\bigl(\delta u-(\delta s)\one_{\rm row},\ \delta v+(\delta s)\one_{\rm col}\bigr).
\]
Pairing this variation with $(\bar u,\bar v)$ and collecting coefficients of $\delta u,\delta v$ yields the stated pullback.
\end{proof}
\begin{lemma}[Gauge Equivariance]\label{lem:gauge}
Let $\Gamma_c(u,v)=(u-c\one_{\rm row},v+c\one_{\rm col})$. Then the raw two-half-step map defined by equations~\eqref{eq:row-app}--\eqref{eq:col-app} is gauge equivariant.
\end{lemma}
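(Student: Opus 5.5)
We have to show that if $(u,v)\mapsto(u',v')$ is one row-then-column sweep of equations~\eqref{eq:row-app}--\eqref{eq:col-app}, then starting from the shifted pair $\Gamma_c(u,v)$ yields $\Gamma_c(u',v')$. In symbols, the goal is $F\circ\Gamma_c=\Gamma_c\circ F$ for the raw two-half-step map $F$. We take $\Gamma_c$ to act only on valid rows and columns, as in the definition of $\one_{\rm row},\one_{\rm col}$. The whole argument uses one fact: a constant shift inside a logsumexp over the active support factors out of the sum. We do not need positivity of the kernel beyond each active row and column having at least one active entry, which is already needed for the half-steps to be defined.

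\emph{Row half-step.} Replace the input column potential $v$ by $v+c\one_{\rm col}$. For each active row $i$,
\[
 -\log\sum_{j:(i,j)\in\Omega}\exp(S_{ij}+v_j+c)=-c-\log\sum_{j:(i,j)\in\Omega}\exp(S_{ij}+v_j)=u_i'-c .
\]
So the new row potential is $u'-c\one_{\rm row}$. The input $u$ does not enter the raw row update at all, so shifting it by $-c$ has no effect here.

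\emph{Column half-step.} Feed in $u'-c\one_{\rm row}$. For each active column $j$,
\[
 -\log\sum_{i:(i,j)\in\Omega}\exp(S_{ij}+u_i'-c)=c+v_j' .
\]
So the new column potential is $v'+c\one_{\rm col}$. Together the two half-steps give $F(\Gamma_c(u,v))=\Gamma_c(F(u,v))$.

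Two corollaries follow directly from this identity.
\begin{itemize}
\item \emph{Plans.} For any pair, $S_{ij}+(u_i-c)+(v_j+c)=S_{ij}+u_i+v_j$, so same-time plans $P^{(t,t)}$ are gauge invariant.
\item \emph{Iterates.} By induction on $t$, a centered trajectory stays in the gauge orbit of the ungauged one: with per-time shifts $c_t$, we have $\widetilde u^{(t)}=u^{(t)}-c_t\one$. Mixed-time plans $P^{(p,q)}$ with $p\neq q$ therefore pick up the factor $e^{c_p-c_q}$ recorded in the centered gauge ledger remark.
\end{itemize}

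Two small points need care. First, the shift must be restricted to valid indices; masked rows and columns stay fixed under $\Gamma_c$. This is consistent because those rows and columns contribute nothing to any active logsumexp. Second, equivariance holds for the raw two-half-step map, not for the row update on its own: the row update sends a shift of $+c$ in $v$ to a shift of $-c$ in $u$, and $\Gamma_c$ is defined with exactly that sign pairing.
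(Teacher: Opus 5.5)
Your proof is correct and is the same argument as the paper's: a uniform $+c$ shift of the valid column potentials factors out of each masked row logsumexp as a $-c$ shift of the row potentials, and the column half-step symmetrically turns that into a $+c$ shift, giving $F\circ\Gamma_c=\Gamma_c\circ F$. Your two corollaries go beyond this lemma; they are essentially the paper's subsequent centered-gauge-ledger proposition, which applies this equivariance inductively.
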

\begin{proof}
Adding $c$ to every valid key potential shifts every logsumexp argument in the $u$-update by the same constant, so the update subtracts $c$ from every valid query potential. The $v$-update is identical with rows and columns swapped.
\end{proof}
\begin{proposition}[Centered Gauge Ledger]\label{prop:gaugeledger}
Let $(\widehat u^{(t)},\widehat v^{(t)})$ be the ungauged iterates and $(u^{(t)},v^{(t)})$ the iterates produced by the centered implementation, both started from the same centered base pair. Set $c_0=0$. Then for every $t\ge0$ there exists a scalar $c_t$ such that
\[
 (u^{(t)},v^{(t)})=\Gamma_{c_t}(\widehat u^{(t)},\widehat v^{(t)}).
\]
Consequently same-time plans are gauge-invariant,
\[
 e^{S_{ij}+u_i^{(t)}+v_j^{(t)}}=e^{S_{ij}+\widehat u_i^{(t)}+\widehat v_j^{(t)}},
\]
but mixed-time staircase plans carry scalar gauge factors:
\[
 P_{{\rm cent}\,ij}^{(p,q)}=e^{c_q-c_p}P_{{\rm ung}\,ij}^{(p,q)},\qquad
 P_{{\rm ung}\,ij}^{(p,q)}=e^{c_p-c_q}P_{{\rm cent}\,ij}^{(p,q)}.
\]
The mixed factors are therefore not gauge-invariant unless $p=q$ or $c_p=c_q$.
\end{proposition}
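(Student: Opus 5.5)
The plan is an induction on $t$, with the gauge equivariance of Lemma~\ref{lem:gauge} as the main tool. The centered implementation alternates two operations: the raw two-half-step map of equations~\eqref{eq:row-app}--\eqref{eq:col-app}, and the centering map $\mathcal C$. By definition, $\mathcal C(u,v)=\Gamma_{s}(u,v)$ with $s=\omega_{\rm row}^\top u$, so $\mathcal C$ is itself a gauge shift. The centered and ungauged sequences therefore differ at each step by the composition of successive gauge shifts. Since $\Gamma_a\circ\Gamma_b=\Gamma_{a+b}$, this composition is again a single scalar gauge shift.

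The induction runs as follows.

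\textbf{Base case.} Both sequences start from the same centered base pair, so $c_0=0$.

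\textbf{Inductive step.} Assume $(u^{(t)},v^{(t)})=\Gamma_{c_t}(\widehat u^{(t)},\widehat v^{(t)})$. Write $\mathcal S$ for the raw half-step pair. Lemma~\ref{lem:gauge} gives
\[
\mathcal S\bigl(\Gamma_{c_t}(\widehat u^{(t)},\widehat v^{(t)})\bigr)=\Gamma_{c_t}\bigl(\mathcal S(\widehat u^{(t)},\widehat v^{(t)})\bigr)=\Gamma_{c_t}(\widehat u^{(t+1)},\widehat v^{(t+1)}).
\]
To apply this, I must check the direction of equivariance. Shifting $v$ by $+c$ shifts the new $u$ by $-c$. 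The new $u$ then feeds the column update, so the new $v$ shifts by $+c$. This is exactly the pattern $\Gamma_c$.

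Next apply centering. It produces $\Gamma_{s_{t+1}}$ of the above, where $s_{t+1}$ is the row-mean of the post-step $u$. So $c_{t+1}=c_t+s_{t+1}$. Centering is idempotent on already-centered pairs, which keeps the bookkeeping consistent. If the implementation skips centering at some step, that step simply contributes $s=0$.

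With the relation established, the plan identities are direct substitution. For the same-time plans,
\[
u^{(t)}_i+v^{(t)}_j=\widehat u^{(t)}_i-c_t+\widehat v^{(t)}_j+c_t=\widehat u^{(t)}_i+\widehat v^{(t)}_j.
\]
For a staircase pair $(p,q)$,
\[
u^{(p)}_i+v^{(q)}_j=\widehat u^{(p)}_i+\widehat v^{(q)}_j+(c_q-c_p).
\]
Exponentiating gives $P_{\rm cent}^{(p,q)}=e^{c_q-c_p}P_{\rm ung}^{(p,q)}$, and the reverse relation follows by inversion. The non-invariance claim is immediate: the factor $e^{c_q-c_p}$ equals $1$ if and only if $c_p=c_q$, which holds in particular when $p=q$.

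The only delicate point, and the step I expect to be the main obstacle, is keeping the gauge restricted to the valid index sets. $\Gamma_c$ acts only on valid rows and columns via $\one_{\rm row},\one_{\rm col}$, and the masked logsumexp sums range only over $\Omega$. Fully inactive rows or columns do not affect plans, since $P$ vanishes off $\Omega$. Hence the identities need only hold on the support, and that is where they are asserted.
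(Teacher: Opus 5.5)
Your proposal is correct and follows essentially the same route as the paper: an induction on $t$ using the gauge equivariance of Lemma~\ref{lem:gauge}, with the centering map $\mathcal C$ treated as one more gauge shift, followed by direct substitution into the same-time and mixed-time exponents. Your version is a little more explicit than the paper's, since you check the direction of equivariance and record the recursion $c_{t+1}=c_t+s_{t+1}$, but the argument is the same.
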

\begin{proof}
The claim at $t=0$ follows from the common base pair and $c_0=0$; the claim at $t=1$ follows from the definition of the centering step. Inductively, if the two previous representatives differ by a gauge transform, then gauge equivariance of the raw refinement step shows that the next raw outputs also differ by a gauge transform; applying $\mathcal C$ merely chooses another representative of the same orbit. The same-time identity follows because the two gauge shifts cancel inside $u_i^{(t)}+v_j^{(t)}$. For mixed times,
\[
 u_i^{(p)}+v_j^{(q)}=\widehat u_i^{(p)}+\widehat v_j^{(q)}-c_p+c_q,
\]
which gives the scalar factor.
\end{proof}
\begin{lemma}[When Centering Pullbacks Vanish]\label{lem:vanish}
For cotangents $(\bar u,\bar v)$ on the outputs of $\mathcal C$, if $\one_{\rm row}^\top\bar u=\one_{\rm col}^\top\bar v$, then $\mathcal C^*(\bar u,\bar v)=(\bar u,\bar v)$. In the unit-target tail reverse pass, this equal-total condition holds at every full-step centering boundary.
\end{lemma}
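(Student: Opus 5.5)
The first part follows directly from Lemma~\ref{lem:center}. Substituting $\one_{\rm row}^\top\bar u=\one_{\rm col}^\top\bar v$ into the pullback formula $\mathcal C^*(\bar u,\bar v)=\bigl(\bar u+\omega_{\rm row}(\one_{\rm col}^\top\bar v-\one_{\rm row}^\top\bar u),\bar v\bigr)$ makes the correction term zero. The real work is the second claim: that the equal-total condition holds at every full-step centering boundary of the tail reverse pass. I would establish it by a mass-conservation argument on the reverse recursion of Proposition~\ref{prop:adjoint}, with the centering pullbacks interleaved.

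\textbf{Where centering sits.} A full step maps $(u^{(t-1)},v^{(t-1)})$ to raw $(u^{(t)},v^{(t)})$ via \eqref{eq:row-app}--\eqref{eq:col-app}, and then applies $\mathcal C$. The cotangent reaching a centering boundary is therefore the cotangent of the raw output pair $(u^{(t)},v^{(t)})$. I would show by downward induction on $t$ that this pair has equal totals.

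\textbf{Terminal boundary $t=R$.} By \eqref{eq:vr}--\eqref{eq:ur}, $\bar v^{(R)}=g_v$ and $\bar u^{(R)}=g_u-P^{(R,R)}g_v$. Hence
\[
\one^\top\bar u^{(R)}=\one^\top g_u-\one^\top P^{(R,R)}g_v=\one^\top(P^{(R,R)}\odot Z)\one-(\one^\top P^{(R,R)})g_v .
\]
Since $v^{(R)}$ is produced by the column half-step from $u^{(R)}$, the unit-target column condition $(\one^\top P^{(R,R)})_j=1$ holds on active columns. So $\one^\top\bar u^{(R)}=\one^\top g_u-\one^\top g_v$. Both $\one^\top g_u$ and $\one^\top g_v$ equal the grand sum of $P^{(R,R)}\odot Z$, so $\one^\top\bar u^{(R)}=0$.

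This exposes a problem: the totals are $\one^\top\bar u=0$ and $\one^\top\bar v=\one^\top g_v$, which need not be equal. I would therefore identify exactly which cotangent pair the implementation centers. A more careful option is that the centered quantity is the pair $(\bar u^{(t)},\bar v^{(t)})$ \emph{after} the internal column-step pullback. In that case $\bar u$ already contains $-P\bar v$, and the relevant identity is
\[
\one^\top(\bar u_{\rm ext}-P\bar v)=\one^\top\bar u_{\rm ext}-\one^\top\bar v ,
\]
which uses the column marginals. Equal totals then reduce to the statement that the external $\bar u$-cotangent has the right total.

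\textbf{Inner boundaries.} Here the external cotangent on $u^{(t)}$ is zero, and the cotangent on $v^{(t)}$ is $-(P^{(t+1,t)})^\top\bar u^{(t+1)}$. Using the row marginals of $P^{(t+1,t)}$, which equal one because $u^{(t+1)}$ is the row half-step from $v^{(t)}$, its total is $-\one^\top\bar u^{(t+1)}$. That total is zero by the inductive hypothesis. The column-step pullback then gives $\one^\top\bar u^{(t)}=-\one^\top\bar v^{(t)}\cdot(\text{unit mass})$, and both totals vanish.

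\textbf{Main obstacle.} The main difficulty is this bookkeeping: pinning down precisely which cotangent is centered and how the output cotangent $g_v$ enters. I expect the invariant that propagates to be \emph{$\one^\top\bar u$ equals $\one^\top\bar v$ of the quantity fed to $\mathcal C$}. The mechanism behind it is that gauge invariance of the output, $P^{(R,R)}$ being same-time, forces $\one^\top g_u=\one^\top g_v$. I would verify this first at $t=R$ before running the induction.
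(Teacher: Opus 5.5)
\textbf{Gap at the terminal boundary.} Your first claim is fine. Your inner-boundary induction also matches the paper: the external $u$-cotangent is zero, $\bar v^{(t)}=-(P^{(t+1,t)})^\top\bar u^{(t+1)}$ has zero total by unit row sums, and $\bar u^{(t)}$ then has zero total by unit column sums.

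The problem is at the terminal boundary. There you test the wrong cotangent pair, conclude the condition ``need not'' hold, and leave the case unresolved. Because $\mathcal C$ is the last forward operation of step $R$, its pullback $\mathcal C^*$ is the \emph{first} reverse operation. So the cotangent on the outputs of $\mathcal C$ is the direct terminal cotangent $(g_u,g_v)$. It is not $(\bar u^{(R)},\bar v^{(R)})=(g_u-P^{(R,R)}g_v,\,g_v)$ from \eqref{eq:ur}, which already includes the $v$-half-step pullback; that pullback comes \emph{after} $\mathcal C^*$ in reverse order.

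Your ``more careful option'' is this same post-column-step pair, so it cannot be the right reading. For that pair, equal totals would require $\one^\top g_u=2\,\one^\top g_v$, which fails unless $\sum_{ij}P^{(R,R)}_{ij}Z_{ij}=0$. With the correct pair, the condition is exactly the identity you already derived in the same display:
$\one_{\rm row}^\top g_u=\sum_{ij}P^{(R,R)}_{ij}Z_{ij}=\one_{\rm col}^\top g_v$.
Hence $\mathcal C^*$ is the identity at $t=R$, which is precisely the paper's argument.

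\textbf{How to repair it.} Once the reverse ordering is fixed ($\mathcal C^*$ before the $v$-half-step pullback), your computation $\one^\top\bar u^{(R)}=0$ becomes the seed for the induction rather than evidence of a problem. It shows that the first inner boundary sees the pair $(0,\bar v^{(R-1)})$ with both totals zero.

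This also explains why your inner case came out right despite the confusion. At inner boundaries, both readings (before or after the column-step pullback) give totals $(0,0)$. The ordering matters only at $t=R$, where the external $u$-cotangent $g_u$ is nonzero.

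You should also correct the sentence saying the cotangent reaching the boundary is that of the raw pair. The lemma's hypothesis concerns cotangents on the \emph{centered outputs}; the raw-pair cotangent is what $\mathcal C^*$ returns.
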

\begin{proof}
The first claim is immediate from the displayed formula for $\mathcal C^*$. For the tail reverse pass, the terminal transport application gives
\[
 \one_{\rm row}^\top g_u=\sum_{ij}P_{ij}^{(R,R)}Z_{ij}=\one_{\rm col}^\top g_v,
\]
so the final centering pullback contributes no extra term. After pulling $\bar v^{(t)}$ through a same-time $v$-half-step, the same-time plan has unit column sums, hence
\[
 \one_{\rm row}^\top[-P^{(t,t)}\bar v^{(t)}]=-\one_{\rm col}^\top\bar v^{(t)}.
\]
The resulting $\bar u^{(t)}$ has zero total once the direct terminal contribution has been combined at $t=R$, and thereafter the invariant is zero total on both components. Pulling a zero-total $\bar u^{(t)}$ through a unit-row-normalized $u$-half-step gives a zero-total $\bar v^{(t-1)}$. Thus each later centering boundary sees equal totals, so the explicit $\mathcal C^*$ correction vanishes in the unit-target proof. If non-unit marginals are introduced, the corresponding normalized conditional kernels must be used in this invariant.
\end{proof}
\begin{lemma}[Half-Step Jacobians]\label{lem:half}
For every refinement step $t\ge1$,
\begin{align}
 \frac{\partial u_i^{(t)}}{\partial v_j^{(t-1)}}&=-P_{ij}^{(t,t-1)},\label{eq:jac-u}\\
 \frac{\partial v_j^{(t)}}{\partial u_i^{(t)}}&=-P_{ij}^{(t,t)}.\label{eq:jac-v}
\end{align}
\end{lemma}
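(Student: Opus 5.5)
Both identities follow from direct differentiation of the logsumexp half-steps \eqref{eq:row-app}--\eqref{eq:col-app}. The only thing to watch is which time indices appear in the exponent, so that the resulting softmax weight is identified with the correct staircase plan.

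For \eqref{eq:jac-u}, fix a valid row $i$ and write
\[
 u_i^{(t)}=-\log Z_i,\qquad Z_i=\sum_{k:(i,k)\in\Omega}\exp\bigl(S_{ik}+v_k^{(t-1)}\bigr).
\]
For $(i,j)\in\Omega$ we have $\partial Z_i/\partial v_j^{(t-1)}=\exp(S_{ij}+v_j^{(t-1)})$, so
\[
 \frac{\partial u_i^{(t)}}{\partial v_j^{(t-1)}}=-\frac{\exp\bigl(S_{ij}+v_j^{(t-1)}\bigr)}{Z_i}.
\]
Since $1/Z_i=e^{u_i^{(t)}}$, the right-hand side equals $-\exp(S_{ij}+u_i^{(t)}+v_j^{(t-1)})=-P^{(t,t-1)}_{ij}$. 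For $(i,j)\notin\Omega$, the variable $v_j^{(t-1)}$ does not enter $Z_i$, so the derivative is $0$, which agrees with the masked convention $P^{(t,t-1)}_{ij}=0$.

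For \eqref{eq:jac-v}, the argument is the same with rows and columns swapped. Write
\[
 v_j^{(t)}=-\log Y_j,\qquad Y_j=\sum_{k:(k,j)\in\Omega}\exp\bigl(S_{kj}+u_k^{(t)}\bigr).
\]
Then
\[
 \frac{\partial v_j^{(t)}}{\partial u_i^{(t)}}=-\exp\bigl(S_{ij}+u_i^{(t)}\bigr)/Y_j=-\exp\bigl(S_{ij}+u_i^{(t)}+v_j^{(t)}\bigr)=-P^{(t,t)}_{ij}
\]
on $\Omega$, and the derivative is $0$ off $\Omega$. These are partial derivatives of each half-step map with respect to its immediate input only, with $S$ held fixed. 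The chain through $u^{(t)}$ into $v^{(t)}$ is composed later in the adjoint recursion of Proposition~\ref{prop:adjoint}.

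The only delicate point is bookkeeping rather than analysis. The statement concerns the ungauged representatives, which are exactly the iterates of the raw half-step equations, so the identity $e^{u_i^{(t)}}=1/Z_i$ holds without any correction factor. For centered stored iterates, Proposition~\ref{prop:gaugeledger} shows that the mixed-time plan $P^{(t,t-1)}$ would instead carry the scalar $e^{c_{t-1}-c_t}$, and that factor would have to be reinserted. The same-time Jacobian is unaffected by centering.

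Because the argument uses only the half-step equations themselves, it applies verbatim to every step, including $t=1$, where $v^{(0)}$ is the stopped base potential. The stop-gradient boundary affects only which cotangents are propagated further, not these partial derivatives.
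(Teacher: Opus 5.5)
Your proposal is correct and is essentially the paper's proof: differentiate the masked logsumexp, use $e^{u_i^{(t)}}=1/Z_i$ (resp.\ $e^{v_j^{(t)}}=1/Y_j$) to identify the softmax weight with $P^{(t,t-1)}_{ij}$ (resp.\ $P^{(t,t)}_{ij}$), and get the second identity by swapping rows and columns. Your side remark on centering is consistent with the paper's gauge ledger, with one wording fix: to recover the Jacobian from the stored centered mixed-time factor you multiply by $e^{c_t-c_{t-1}}$, i.e.\ you remove the factor $e^{c_{t-1}-c_t}$ rather than reinsert it.
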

\begin{proof}
Differentiate the masked logsumexp updates. For example,
\[
 \frac{\partial u_i^{(t)}}{\partial v_j^{(t-1)}}
 =-\frac{\exp(S_{ij}+v_j^{(t-1)})}{\sum_{k:(i,k)\in\Omega}\exp(S_{ik}+v_k^{(t-1)})}
 =-\exp(S_{ij}+u_i^{(t)}+v_j^{(t-1)})=-P_{ij}^{(t,t-1)}.
\]
The second identity is identical with rows and columns swapped.
\end{proof}

\paragraph{Proof of Proposition~\ref{prop:adjoint}.}
\begin{proof}
By the preceding centering-pullback lemma, the explicit $\mathcal C^*$ correction terms vanish at the full-step centering boundaries for the unit-target tail reverse pass. It therefore suffices to write the reverse equations for the raw half-steps, with the centered gauge ledger applied separately when translating mixed-time factors between representatives. The direct terminal transport application contributes $P^{(R,R)}\odot Z$ to $\bar S$ together with
\[
 \bar v^{(R)}=g_v,\qquad \bar u_{\rm direct}^{(R)}=g_u.
\]
Before moving to the preceding $u$-half-step, this terminal $\bar v^{(R)}$ must be pulled through the final $v^{(R)}$ update. This contributes
\[
 \bar S\mathrel{-}=P^{(R,R)}\odot(\one\bar v^{(R)\top}),\qquad
 \bar u^{(R)}=\bar u_{\rm direct}^{(R)}-P^{(R,R)}\bar v^{(R)}.
\]
Now fix a reverse step $t\in\{1,\ldots,R\}$. Pulling $\bar u^{(t)}$ back through the $u$-half-step uses the Jacobian $-P^{(t,t-1)}$:
\[
 \bar v^{(t-1)}=-(P^{(t,t-1)})^\top\bar u^{(t)},\qquad
 \bar S\mathrel{-}=P^{(t,t-1)}\odot(\bar u^{(t)}\one^\top).
\]
For $t\ge2$, pulling $\bar v^{(t-1)}$ back through the $v$-half-step uses the Jacobian $-P^{(t-1,t-1)}$:
\[
 \bar u^{(t-1)}=-P^{(t-1,t-1)}\bar v^{(t-1)},\qquad
 \bar S\mathrel{-}=P^{(t-1,t-1)}\odot(\one\bar v^{(t-1)\top}).
\]
For $t=1$, the $u$-half-step pullback produces the stopped-base cotangent $\bar v^{(0)}$ and the final mixed-time score term $P^{(1,0)}\odot(\bar u^{(1)}\one^\top)$. No earlier $v$-half-step is pulled back inside the differentiable tail because the base pair is stopped. Summing the direct terminal term with the reverse-time half-step contributions yields equation~\eqref{eq:score}.
\end{proof}
\paragraph{Proof of Proposition~\ref{prop:one-ref}.}
\begin{proof}
For any two staircase plans,
\[
 \frac{P_{ij}^{(p,q)}}{P_{ij}^{(2,2)}}=
 \exp(u_i^{(p)}-u_i^{(2)}+v_j^{(q)}-v_j^{(2)}),
\]
which gives the exact reconstruction formula immediately. For the three non-reference factors in the $R=2$ staircase this gives
\[
 P_{ij}^{(2,1)}=P_{ij}^{(2,2)}\beta_j,\quad
 P_{ij}^{(1,1)}=P_{ij}^{(2,2)}\alpha_i\beta_j,\quad
 P_{ij}^{(1,0)}=P_{ij}^{(2,2)}\alpha_i\delta_j.
\]
Substituting these three identities into the expanded VJP in equation~\eqref{eq:r2-expanded} and factoring out $P_{ij}^{(2,2)}$ gives
\[
 \bar S_{ij}=P_{ij}^{(2,2)}
 [Z_{ij}-\bar v_j^{(2)}-\beta_j\bar u_i^{(2)}-\alpha_i\beta_j\bar v_j^{(1)}-\alpha_i\delta_j\bar u_i^{(1)}].
\]
Thus every entry of the direct four-factor adjoint is reproduced exactly from the reference plan and vector modifiers.

For a tile $I\times J$, compute $P_{I,J}^{(2,2)}$ once from $S_{I,J},u_I^{(2)},v_J^{(2)}$, compute $Z_{I,J}$ from the resident slices of $G_I,V_J$, and form the bracketed modifier using only the $I$-slices of $\alpha,\bar u^{(2)},\alpha\bar u^{(1)}$ and the $J$-slices of $\beta,\delta,\bar v^{(2)},\beta\bar v^{(1)}$. Multiplying the single resident reference tile by this modifier yields the exact score-space VJP tile. The value-gradient term uses the same resident reference plan tile, since $\partial\mathcal L/\partial V$ receives $(P_{I,J}^{(2,2)})^\top G_I$. All subsequent propagation to $Q,K$ is linear in the score cotangent tile, so it can be accumulated before the tile is evicted. The stated storage and arithmetic bounds follow by streaming over the $O(LW/B^2)$ active tiles with fixed $B,d,W$, and fixed tail depth.
\end{proof}

\section{Proofs for the Dustbin Bridge and Surrogate Bias}
\paragraph{Proof of Proposition~\ref{prop:dustbin}.}
\begin{proof}
After constructing the augmented tensors and masks, the implementation calls the same unit-target Sinkhorn forward and backward path as in Section~2.2. Because only the first appended token in each dustbin block is active, every transport entry touching the remaining fillers is identically zero. Because the effective band width is $\widetilde W=\max(W,L)$, every active base token can reach the active dustbin token on the opposite side, and conversely the active dustbin query can reach every active base key. Thus the implemented path is exactly the unit-target fixed-support surrogate on an enlarged state space. The proposition is an algebraic code-path statement; the cost of the current widened-band realization is governed by $\widetilde W$, as stated in Section~\ref{sec:dustbin}.
\end{proof}
\paragraph{Proof of Corollary~\ref{cor:dustbin}.}
\begin{proof}
Apply Propositions~\ref{prop:adjoint} and~\ref{prop:one-ref} to the augmented tensors and support from Proposition~\ref{prop:dustbin}. No new algebra is introduced after the augmentation itself.
\end{proof}
\paragraph{Proof of Theorem~\ref{thm:bias}.}
\begin{proof}
By the chain rule,
\[
 \nabla_x\bigl(\Psi_x^R(z_T(x))\bigr)=
 \left[D_x\Psi_x^R(z)\big|_{z=z_T(x)}\right]^\top+
 D_xz_T(x)^\top\nabla_z\Psi_x^R(z_T(x)),
\]
which gives the exact bias identity after subtracting the stopped-base term. The chain rule through the $R$ refinement steps and the terminal scalar objective gives
$\|\nabla_z\Psi_x^R(z_T(x))\|\le L_\ell\rho^R$.
Differentiate the base recursion $z_{t+1}(x)=\Phi_x(z_t(x))$:
\[
 D_xz_{t+1}(x)=D_x\Phi_x(z_t(x))+D_z\Phi_x(z_t(x))D_xz_t(x).
\]
Because $z_0$ is independent of $x$, repeated application of the norm bounds yields
\[
 \|D_xz_T(x)\|\le L_\Phi\sum_{j=0}^{T-1}\rho^j\le\frac{L_\Phi}{1-\rho}.
\]
Combining the two estimates with the exact bias identity gives
$\|g_{\full}^{T,R}(x)-g_{\surr}^{T,R}(x)\|\le L_\ell L_\Phi\rho^R/(1-\rho)$.
\end{proof}
\paragraph{Proof of Proposition~\ref{prop:hilbert}.}
\begin{proof}
Hilbert's projective metric on the positive cone is
\[
 d_H(x,y)=\log\frac{\max_i x_i/y_i}{\min_i x_i/y_i}.
\]
Positive diagonal scaling and coordinatewise inversion preserve this metric: diagonal scaling cancels in the coordinate ratios, and inversion swaps the maximum and minimum ratios. By the Birkhoff--Hopf theorem, any positive linear map $\mathcal B$ with finite projective diameter $\Delta(\mathcal B)$ satisfies
$d_H(\mathcal Bx,\mathcal By)\le\tanh(\Delta(\mathcal B)/4)d_H(x,y)$.
Decompose the two-half-step map as
\[
 w\mapsto Kw\mapsto a\oslash(Kw)
 \mapsto K^\top(a\oslash(Kw))
 \mapsto b\oslash K^\top(a\oslash(Kw)).
\]
The two linear maps contribute the respective Birkhoff--Hopf factors
\[
 \tanh(\Delta(K)/4)\quad\text{and}\quad\tanh(\Delta(K^\top)/4).
\]
The marginal scalings and inversions are projective isometries. Therefore
\[
 d_H(\mathcal T(w),\mathcal T(w'))\le
 \tanh\!\left(\frac{\Delta(K)}4\right)
 \tanh\!\left(\frac{\Delta(K^\top)}4\right)d_H(w,w').
\]
Strict positivity on the active block gives finite projective diameters, hence $\rho_H<1$. For log scalings, $d_H(e^v,e^{v'})=\osc(v-v')$. Additive centering does not change oscillation. This gives the log-potential contraction. Finally, for the positive Gibbs kernel $K_{ij}=\exp(S_{ij})$,
\begin{align*}
 \Delta(K)&=\log\max_{i,i',j,j'}
 \frac{K_{ij}K_{i'j'}}{K_{ij'}K_{i'j}}\\
 &=\max_{i,i',j,j'}(S_{ij}+S_{i'j'}-S_{ij'}-S_{i'j}).
\end{align*}
If $\osc_\Omega(S)=\max S-\min S$ on the active block, the last expression is at most $2\osc_\Omega(S)$. The same bound holds for $K^\top$. Substituting into the Birkhoff--Hopf coefficient yields $\rho_H\le\tanh^2(\osc_\Omega(S)/2)$, and the scaled-dot-product-logit form follows from $S=A/\eps$.
\end{proof}
\paragraph{Proof of Proposition~\ref{prop:posterior}.}
\begin{proof}
By definition,
\[
 g_{\full}(x)=\nabla_xF_R(x,z_T(x)),\qquad
 g_{\surr}(x)=\left[D_xF_R(x,z)\big|_{z=z_T(x)}\right]^\top.
\]
Applying the chain rule to the first expression gives
\[
 \nabla_xF_R(x,z_T(x))=
 \left[D_xF_R(x,z)\big|_{z=z_T(x)}\right]^\top+
 D_xz_T(x)^\top\nabla_zF_R(x,z)\big|_{z=z_T(x)}.
\]
Subtracting the stopped-base partial derivative gives the identity with $\eta_R=\nabla_zF_R(x,z_T(x))$. The norm bound follows immediately from submultiplicativity for the chosen compatible operator norm.
\end{proof}
\paragraph{Proof of Corollary~\ref{cor:selection}.}
\begin{proof}
Proposition~\ref{prop:posterior} gives the exact identity $\Delta g_R=e_R(x)$. Therefore $\|e_R(x)\|\le\tau_{\rm cert}$ implies $\|\Delta g_R\|\le\tau_{\rm cert}$. If only an operator-norm bound $\kappa_T(x)\ge\|D_xz_T(x)^\top\|$ is used, the bound in Proposition~\ref{prop:posterior} gives
\[
 \|\Delta g_R\|\le\|D_xz_T(x)^\top\|\,\|\eta_R(x)\|
 \le\kappa_T(x)\|\eta_R(x)\|\le\tau_{\rm cert}.
\]
For the optimization claim, let $R^\star$ be the first feasible depth. Every smaller depth is infeasible by definition of $R^\star$, and every larger feasible depth has cost at least $\operatorname{Cost}(R^\star)$ because Cost is nondecreasing. Hence $R^\star$ minimizes Cost over the certified feasible set.
\end{proof}
\paragraph{Proof of Proposition~\ref{prop:orbit}.}
\begin{proof}
All plans in the collection use the same masked score tensor $S$ and the same active support. If an $\eps$-scaling schedule is used, this proof applies separately inside each fixed-temperature stage after replacing $S$ by $S^{(s)}=A/\eps_s$; plans from different stages are not generally in one common row/column scaling orbit. For any two dual pairs on that support,
\[
 \frac{P_{ij}^{(p,q)}}{P_{ij}^{(p_{\rm ref},q_{\rm ref})}}=
 \exp(u_i^{(p)}-u_i^{(p_{\rm ref})}+v_j^{(q)}-v_j^{(q_{\rm ref})}),
\]
which gives the orbit formula entrywise. Multiplying this identity by $H_\ell$ and summing over $\ell$ gives the single-reference representation for $\Xi$. The reverse recurrences in Propositions~\ref{prop:adjoint} and~\ref{prop:posterior} require only products of these plans with row or column cotangent vectors and score-space outer factors, all of which are instances of this representation. Each such product can therefore be evaluated by streaming one reference plan tile and applying the appropriate row and column multipliers. With fixed $T,R,W,d$, each half-step contributes $O(LW)$ arithmetic and $O(L)$ vector state, while the input activations require $O(Ld)$ storage.
\end{proof}

\section{Experimental Protocol Notes}\label{app:protocol}
The current manuscript uses three empirical layers:
\begin{enumerate}[leftmargin=*]
\item exactness checks against exact autodiff on small masked problems;
\item a four-configuration TPU v6e-8 Pfam screen across balanced and dustbin-augmented paths;
\item a promoted long-budget balanced run for the factorization-backed $R=2$ path, plus held-out checkpoint and deterministic diagonal-reference evaluation.
\end{enumerate}
All TPU results use a single Cloud TPU v6e-8 host. The short screen uses 50,000 Pfam training pairs with the reported benchmark stack, and the promoted run uses the same benchmark family at longer budget. The biological inputs and language-model-derived embeddings are pre-existing assets rather than new data collection.

For the promoted balanced run, the objective weights were
\[
 \lambda_{\rm rec}=1,\qquad\lambda_{\rm ce}=0,\qquad\lambda_{\rm mono}=0,
\]
so reconstruction loss was optimized directly while CE and monotonicity were logged diagnostically. Here reconstruction is the mean supervised value-vector MSE: for each active annotated query row $i$ with target column $c_i$, we compare the transport output $\sum_jP_{ij}V_j$ to $V_{c_i}$, sum squared error over embedding dimensions, and average over active rows and heads. The original anonymized supplementary code bundle contains the single-file balanced and dustbin-augmented transport kernel analyzed by the theory, a Sinkhorn-only Pfam training harness, the held-out evaluator, exactness and metric summaries, and a TPU-facing backward benchmark harness. The revised source additionally records the terminal reviewer-motivated result tables and content hashes in \texttt{REBUTTAL\_EVIDENCE\_PROVENANCE.md}. Neither package is a complete cloud reproduction bundle: private Pfam/ESM-2 shards, checkpoints, raw logs, embedding generation, full comparison adapters, measured TPU archives, and launchers are omitted. The deterministic diagonal reference maps each active query ordinal to the corresponding length-scaled active key ordinal and uses the same value projection as the selected checkpoint. The empirical claims are therefore scoped to the disclosed protocols and content-bound result artifacts rather than to a complete public data pipeline.

For the repository version used to prepare the manuscript, the synthetic diagnostics are reproduced with the JAX environment active by running \path{python scripts/validate_tail_refinement.py}, \path{python scripts/validate_orbit_factorization.py}, \path{python scripts/validate_tail_bias_certificate.py}, and \path{python scripts/benchmark_four_plan_adjoint.py}. These commands cover the synthetic exactness, orbit, bias-certificate, and local four-plan diagnostics; they do not reproduce the cloud TPU training run without the omitted external shards and environment setup. The supplementary code also includes \path{benchmark_production_backward_modes.py}, a TPU-facing harness that times the optimized tail-refinement backward and compares against exact-autodiff and explicit four-plan modes where feasible. Its optional score-adjoint mode isolates the block-sparse $R=2$ score-cotangent stage. The post-submission complete-step measurements in Appendix~\ref{app:controls} use a separately frozen response-workspace implementation and evidence ledger; the source bundle records their protocol and hashes but does not include the raw cloud result archive.

The main external assets used in the experiments are public Pfam family data \citep{mistry2021pfam} and ESM-2 embeddings from the public ESM-2 checkpoint \citep{lin2023evolutionary}. Pfam is distributed under CC0 through the Pfam/InterPro documentation, and the \texttt{esm2\_t6\_8M\_UR50D} weights used for the embeddings are distributed under the MIT license.

The present cloud-aligned dustbin path is the single-active-dustbin augmented unit-target surrogate. Genuinely KL-unbalanced transport, larger dustbin capacities, sparse-spoke dustbin supports, and differentiable support geometry remain natural follow-on directions, but they are not needed for the main claim carried by this manuscript.

\paragraph{Synthetic diagnostic visual summary.}
Figure~\ref{fig:diagnostic} visualizes the exactness and bias-certificate summaries reported in Tables~\ref{tab:exactness} and~\ref{tab:biascert}. The left panel is generated from the exactness-summary JSON included in the supplementary bundle. The right panel plots the a posteriori bias-certificate rows from Table~\ref{tab:biascert}; the selector tolerance line is the $10^{-5}$ criterion used to choose $R=2$ in the local calibration.
\begin{figure}[htbp]\centering
\includegraphics[width=\linewidth]{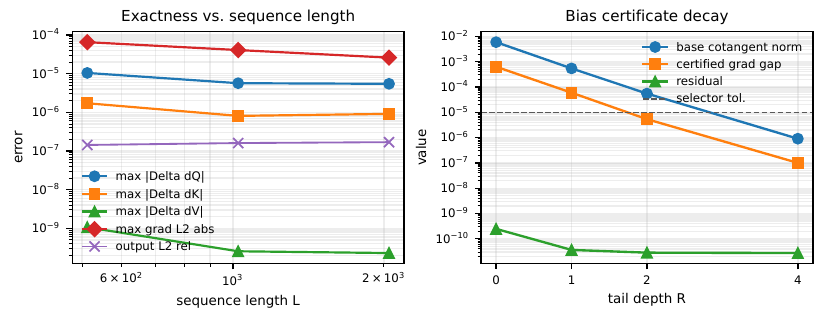}
\caption{Synthetic tail-refinement diagnostics. Left: optimized $R=2$ kernel deviations against exact autodiff through the same centered surrogate. Right: a posteriori bias-certificate decay with tail depth.}\label{fig:diagnostic}
\end{figure}

\paragraph{Qualitative checkpoint output on real Pfam inputs.}
Figure~\ref{fig:pfam-example} visualizes a real Pfam/ESM-2 pair generated by the shard pipeline and evaluated with the saved step-0 and step-1437 Sinkhorn checkpoints. The selected pair has 223 query positions, 212 key positions, and 211 supervised alignment targets. For this example, the promoted checkpoint lowers reconstruction MSE from 4.86 to 1.61 and sparse CE from 5.40 to 5.19. The visualization is qualitative: it shows what the saved final checkpoint outputs on real inputs, while the aggregate held-out metrics remain those in Table~\ref{tab:promoted}.
\begin{figure}[htbp]\centering
\includegraphics[width=\linewidth]{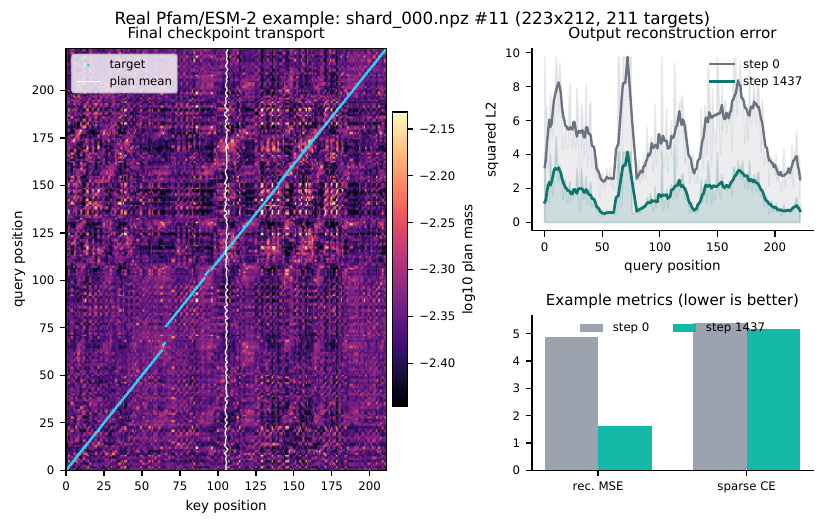}
\caption{Qualitative model output on a real Pfam/ESM-2 pair. Left: final-checkpoint tail-refinement transport plan, with supervised target columns in cyan and the plan barycenter in white. Right: per-position output reconstruction error and example-level lower-is-better metrics before and after promotion.}\label{fig:pfam-example}
\end{figure}

\paragraph{Multiple qualitative checkpoint examples.}
Figure~\ref{fig:pfam-grid} repeats the final-checkpoint transport visualization for four local Pfam/ESM-2 candidates selected by reconstruction-MSE improvement from the first 36 scanned held-out shard examples, subject to at least 180 active targets and maximum visual length 320. The examples have 211--222 supervised targets and improve reconstruction MSE from 4.72--4.86 at step 0 to 1.52--1.61 at step 1437. This figure is a qualitative stress check of the saved checkpoint on multiple real inputs, not an additional benchmark; Table~\ref{tab:promoted} remains the aggregate held-out summary.
\begin{figure}[htbp]\centering
\includegraphics[width=\linewidth]{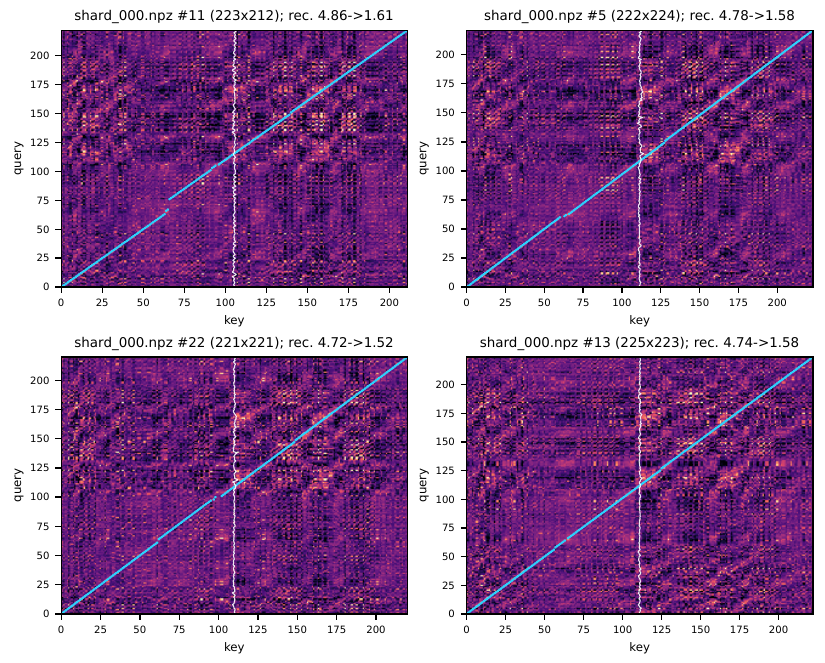}
\caption{Multiple qualitative Pfam/ESM-2 checkpoint examples selected by local reconstruction-MSE improvement. Each panel shows the final-checkpoint tail-refinement transport plan, supervised target columns in cyan, and the plan barycenter in white. Panel titles report sequence lengths and step-0 to step-1437 reconstruction MSE.}\label{fig:pfam-grid}
\end{figure}

\paragraph{Direct four-plan adjoint diagnostic.}
Table~\ref{tab:four-plan} isolates the $R=2$ score-adjoint stage after the base and tail potentials are already available. The direct baseline forms the four staircase factors $(P^{(2,2)},P^{(2,1)},P^{(1,1)},P^{(1,0)})$ separately; the one-reference implementation forms only $P^{(2,2)}$ and applies the row/column modifiers from Proposition~\ref{prop:one-ref}. This is a local dense-JAX CPU diagnostic rather than a TPU systems benchmark, so it should be read as a controlled materialization check, not as a FlashSinkhorn comparison or a production wall-clock claim.
\begin{table}[htbp]\centering\small
\begin{tabular}{@{}rrrrrr@{}}\toprule
$L$ & $W$ & Direct ms & One-ref ms & Speedup & Max $|\Delta\bar S|$\\\midrule
512 & 256 & 2.32 & 1.06 & $2.20\times$ & $4.77\times10^{-7}$\\
1024 & 256 & 7.01 & 5.44 & $1.29\times$ & $1.19\times10^{-7}$\\
2048 & 256 & 60.51 & 39.58 & $1.53\times$ & $2.38\times10^{-7}$\\\bottomrule
\end{tabular}
\caption{Controlled score-adjoint benchmark for direct four-plan materialization versus the one-reference representation. Logical active plan-factor storage is reduced by exactly $4\times$ in all rows: 3.15/\allowbreak 0.79 MB, 7.35/\allowbreak 1.84 MB, and 15.76/\allowbreak 3.94 MB for direct/one-reference respectively. Timings are mean post-compilation CPU times over 20 repetitions.}\label{tab:four-plan}
\end{table}

\paragraph{Analytic memory ledger.}
The production benchmark request ultimately needs measured TPU HBM and wall-clock numbers, which Table~\ref{tab:four-plan} does not provide. To make the memory claim concrete, Table~\ref{tab:memory} gives an analytic ledger for the balanced $R=2$ path at the reported long-context setting $L=16384$, half-band width $W=1024$, block size $B=128$, head dimension $d=64$, and float32 storage. The ledger compares storage inside the $R=2$ adjoint only; it is not a measured comparison against FlashAttention or FlashSinkhorn workspaces. The command \path{python scripts/estimate_r2_backward_memory.py} reproduces these values.
\begin{table}[htbp]\centering\small
\begin{tabular}{@{}lrr@{}}\toprule
Quantity & Direct four-plan & One-reference\\\midrule
Active support entries per head & \multicolumn{2}{c}{32.52M}\\
Logical materialized plan factors & 496.23 MiB & 124.06 MiB\\
Resident plan tile at $B=128$ & 0.2500 MiB & 0.0625 MiB\\
$R=2$ tail $u/v$ vectors & \multicolumn{2}{c}{0.375 MiB}\\
Base-plus-tail $u/v$ history upper ledger & \multicolumn{2}{c}{2.25 MiB}\\
QKV activations & \multicolumn{2}{c}{12.00 MiB}\\\bottomrule
\end{tabular}
\caption{Analytic storage ledger for the balanced banded $R=2$ path at $L=16384$, $W=1024$, $B=128$, $d=64$ in float32. The first two rows quantify logical plan-factor materialization; the resident-tile row reflects the streaming block schedule. This is not a measured TPU HBM profile.}\label{tab:memory}
\end{table}

\paragraph{Projective contraction diagnostics on Pfam checkpoints.}
Table~\ref{tab:projective} reports the Birkhoff--Hopf certificate from Proposition~\ref{prop:hilbert} on the local Pfam test shards for the step-0 and step-1437 Sinkhorn checkpoints. For each strictly positive active 128-row block we compute the exact projective-diameter coefficient $\rho_H=\tanh^2(\Delta/4)$ on the scaled scores at $\eps=1$, excluding masked rows and columns. We also report the coarser range-only sufficient bound $\rho_{\rm range}=\tanh^2(\Omega/2)$. Across 228 active blocks from 100 held-out examples, both certificates remain nontrivial; the final checkpoint has median $\rho_H=0.241$ and maximum $\rho_H=0.590$. The increase from step 0 to step 1437 indicates that the trained projections operate at higher score contrast, while remaining inside the locally contractive positive-block regime measured by the certificate.
\begin{table}[htbp]\centering\small
\begin{tabular}{@{}rrrrrrr@{}}\toprule
Step & Blocks & Median $\rho_H$ & P95 $\rho_H$ & Max $\rho_H$ & Median $\rho_{\rm range}$ & Max $\rho_{\rm range}$\\\midrule
0 & 228 & 0.0748 & 0.296 & 0.345 & 0.282 & 0.582\\
1437 & 228 & 0.241 & 0.532 & 0.590 & 0.482 & 0.721\\\bottomrule
\end{tabular}
\caption{Production-checkpoint projective contraction diagnostics on local Pfam test shards. The exact projective certificate is computed from the active block score matrix; the range-only bound is the simpler sufficient bound stated in Proposition~\ref{prop:hilbert}.}\label{tab:projective}
\end{table}

\paragraph{Compact Pfam diagnostic visual summary.}
Figure~\ref{fig:pfam-summary} visualizes two local empirical summaries requested by the review. The left panel converts the step-50 screen table into a column-normalized heatmap, so the short-screen comparison across loss, sparse CE, monotonicity, and throughput is visible at a glance. The right panel shows the full $\rho_H$ distribution underlying Table~\ref{tab:projective}; it indicates higher learned score contrast, hence weaker but still bounded local contraction margins, rather than improved contraction.
\begin{figure}[htbp]\centering
\includegraphics[width=\linewidth]{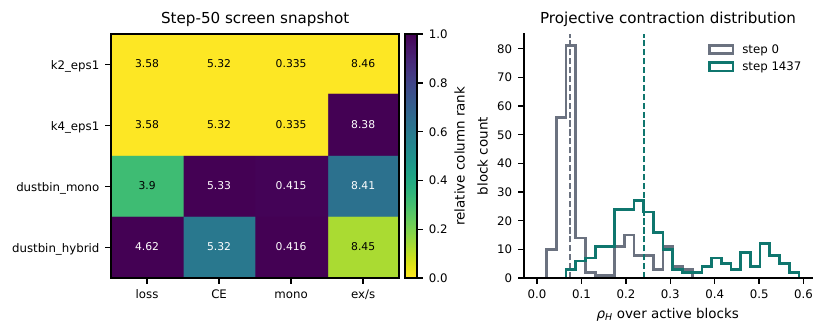}
\caption{Compact Pfam diagnostic summaries. Left: relative column ranks for the short TPU screen metrics, with annotated raw values. Right: exact projective contraction coefficient distributions over active blocks at the step-0 and step-1437 checkpoints.}\label{fig:pfam-summary}
\end{figure}

\section{Kernel-Oriented Pseudocode}
\paragraph{Balanced stopped-base and $R=2$ tail.}
\begin{enumerate}[leftmargin=*]
\item Form masked scaled-dot-product logits $A_{ij}=Q_iK_j^\top/\sqrt d$ and effective scores $S_{ij}=A_{ij}/\eps$ on the active support $\Omega$.
\item Run the balanced masked Sinkhorn half-steps for $T$ stopped iterations to obtain the base pair $(u^{(0)},v^{(0)})$.
\item Recompute two differentiable refinement steps from the stopped base pair, producing $(u^{(1)},v^{(1)})$ and $(u^{(2)},v^{(2)})$.
\item Materialize only the base staircase plan $P^{(2,2)}$ blockwise.
\item Reconstruct $P^{(2,1)},P^{(1,1)},P^{(1,0)}$ by row and column rescaling of $P^{(2,2)}$ using the corresponding dual differences $(u^{(p)}-u^{(2)},v^{(q)}-v^{(2)})$. When using stored centered potentials, include the scalar gauge factors from the centered gauge ledger or reconstruct in a gauge-consistent raw representative.
\item Evaluate the exact $R=2$ score-space VJP from Propositions~\ref{prop:adjoint} and~\ref{prop:one-ref}, then propagate to $(Q,K,V)$.
\end{enumerate}
\paragraph{Dustbin-augmented bridge.}
\begin{enumerate}[leftmargin=*]
\item Append one active dustbin token per side together with inactive fillers to construct the augmented tensors $(\widetilde Q,\widetilde K,\widetilde V)$.
\item Augment the masks analogously and enlarge the effective band width to $\widetilde W=\max(W,L)$.
\item Apply the same balanced stopped-base and $R=2$ tail procedure on the augmented state space.
\end{enumerate}

\section{Post-Submission Experimental Controls}\label{app:controls}
This appendix records experiments run in response to review. They do not change the submitted surrogate or theorem. The evidence was frozen in the separate NeurIPS response workspace and is content-bound in \texttt{REBUTTAL\_EVIDENCE\_PROVENANCE.md}.

All timings below are synchronized medians under their stated local protocols; we do not combine them across hardware, allocations, execution paths, or timing scopes. ``Compiler total'' is the backend static estimate (arguments plus outputs plus temporaries minus aliases), not a measured dynamic-HBM high-watermark. The exact differentiated object is the stopped $T=15$ base plus $R=2$ tail with arbitrary output cotangent, not full BPTT through the stopped base solve.

\subsection{Literal and recomputed four-plan comparisons}
The literal V5E1 comparison uses float32 inputs, $B=H=1$, $d=64$, $W=1024$, one fresh process per mode and length, five warmups, and twenty synchronized measurements. It separates saved-state backward-only timing from a compiled forward/materialization/retention/backward executable. Backward-only medians at $L=2{,}048/\allowbreak 8{,}192/\allowbreak 16{,}384$ are 0.812/\allowbreak 3.004/\allowbreak 6.101 ms for literal four-resident, 2.561/\allowbreak 10.994/\allowbreak 21.743 ms for direct-four/recompute, and 9.722/\allowbreak 47.231/\allowbreak 96.479 ms for one-reference. Exact autodiff has no matched public saved-state consumer and is excluded from that timing scope.
\begin{table}[htbp]\centering\small
\begin{tabular}{@{}rlrrr@{}}\toprule
$L$ & Mode & Complete ms & Compiler total (MiB) & Saved payload (MiB)\\\midrule
2,048 & Four-resident & 30.62 & 70.3 & 64.0\\
 & Direct-four/recompute & 29.70 & 7.0 & 0.0156\\
 & One-reference & 36.90 & 7.0 & 0.0156\\
 & Exact autodiff & 37.25 & 248.3 & 0\\
8,192 & Four-resident & 125.88 & 290.4 & 272.0\\
 & Direct-four/recompute & 128.04 & 18.6 & 0.0625\\
 & One-reference & 164.12 & 18.6 & 0.0625\\
 & Exact autodiff & 188.32 & 1,428.9 & 0\\
16,384 & Four-resident & 257.44 & 578.5 & 544.0\\
 & Direct-four/recompute & 260.11 & 34.7 & 0.125\\
 & One-reference & 334.91 & 34.7 & 0.125\\
 & Exact autodiff & 393.22 & 3,204.1 & 0\\\bottomrule
\end{tabular}
\caption{Literal same-V5E1 complete-step audit. Each row is one compiled forward/materialization/retention/arbitrary-cotangent Q/K/V-backward executable. Saved payload is included in compiler total. Dynamic XProf peak HBM was unavailable.}\label{tab:v5}
\end{table}
At $L=16{,}384$, the one-reference route therefore retains 0.125 MiB of explicit vector state with a 34.7 MiB compiler total, versus 544 MiB of retained plan state and 578.5 MiB total for literal four-resident. Its complete-step median is nevertheless larger. The result is a representation and resident-state trade-off, not a latency win.

The all-eight-device windowed follow-up used BF16 inputs, $H=1$, $d=64$, $W=1024$, $T=15$, $R=2$, seed 0, one process, true data-parallel \texttt{pmap}, aggregate batch eight, one distinct full-length example per device, and no sequence sharding. Every row used one synchronized warmup and three synchronized measurements. Table~\ref{tab:terminal-full} includes forward and complete arbitrary-cotangent Q/K/V VJP medians.
\begin{table}[htbp]\centering\footnotesize
\begin{tabular}{@{}rlrrr@{}}\toprule
Per-device $L$ & Method & Forward ms & Complete VJP ms & Compiler MiB\\\midrule
16,384 & One-reference & 310.817 & 364.931 & 18.395\\
 & Direct-four/recompute & 310.806 & 328.927 & 18.332\\
 & Literal four-resident & 310.718 & 330.386 & 562.301\\
131,072 & One-reference & 2,575.256 & 2,992.535 & 354.684\\
 & Direct-four/recompute & 2,575.260 & 2,709.000 & 354.622\\
262,144 & One-reference & 5,196.908 & 6,020.270 & 835.715\\
 & Direct-four/recompute & 5,197.045 & 5,415.049 & 835.528\\\bottomrule
\end{tabular}
\caption{All-eight-device, full-context-per-device terminal follow-up. Forward and complete-VJP times are medians. Compiler total has unresolved \texttt{pmap} byte scope and is not assumed per-device or aggregate. Dynamic peak HBM is unavailable; 262,144 is a tested point, not a maximum feasible length.}\label{tab:terminal-full}
\end{table}
At 16K, output/dQ/dK/dV are byte-identical to frozen witnesses for one-reference, direct-four/recompute, and literal four-resident. Standalone forward and the primal returned by the complete VJP are byte-identical in every follow-up row. The explicit complete-step route avoids an outer autodiff transformation that triggered an earlier scoped-VMEM lowering boundary; it does not alter the surrogate or its gradients.

The historical Retry12 rows used a separate allocation, two warmups, five measurements, and a different timeout, so their exact-input juxtaposition is descriptive only. On the historical one-device and all-eight-device lowering surfaces, one-reference and direct-four/recompute completed through $L=65{,}536$ and hit the scoped-VMEM compilation boundary at $L=131{,}072$. The terminal explicit route does not retroactively convert those failed rows into successes, and no cross-allocation or cross-topology ratio is formed.

\subsection{Attention and IO-aware diagnostics}
Dense FlashAttention and local SplashAttention complete Q/K/V VJPs through tested $L=262{,}144$ on the historical all-eight-device surface, with medians 7,807.287 and 64.092 ms at that length. Dense FlashAttention uses global support and local SplashAttention is a same-support softmax operator, so neither is an equal-operator Sinkhorn timing comparison.

The official FlashSinkhorn interface (version 0.3.3.post1) was tested on its supported same-A100 dense-forward surface using byte-identical float32 Q/K/V. Official native medians at $L=512/\allowbreak 1{,}024/\allowbreak 2{,}048/\allowbreak 4{,}096$ were 3.717/\allowbreak 3.773/\allowbreak 5.727/\allowbreak 15.549 ms, versus 34.014/\allowbreak 114.653/\allowbreak 411.212/\allowbreak 1,700.474 ms for the forced-dense non-native Pallas path; official execution remained finite through tested $L=50{,}000$. Since $W=L$ and the official interface does not expose the submitted hard-band arbitrary-cotangent Q/K/V backward or matched Pfam training surface, this is a backend diagnostic, not a method-level speed, comparative-OOM, backward, or matched-Pfam result.
\begin{table}[htbp]\centering\small
\begin{tabular}{@{}rrr@{}}\toprule
$L$ & Official native (ms) & Forced-dense Pallas (ms)\\\midrule
512 & 3.717 & 34.014\\
1,024 & 3.773 & 114.653\\
2,048 & 5.727 & 411.212\\
4,096 & 15.549 & 1,700.474\\\bottomrule
\end{tabular}
\caption{Same-A100, byte-identical-Q/K/V dense-forward diagnostic. Native target, arithmetic policy, normalization timing, and timing scope are not equivalent; no method-level ratio is inferred.}\label{tab:official}
\end{table}

\subsection{Trained reconstruction and direct-alignment controls}
\begin{table}[htbp]\centering\small
\begin{tabular}{@{}rrrr@{}}\toprule
Seed & Sinkhorn MSE & Trained FlashAttention MSE & Difference\\\midrule
0 & 0.0545948 & 0.0528423 & $+0.0017525$\\
1 & 0.0527878 & 0.0506797 & $+0.0021081$\\
2 & 0.0523736 & 0.0507820 & $+0.0015915$\\\bottomrule
\end{tabular}
\caption{Matched 100-update, 5,000-example Pfam reconstruction screen. The difference is Sinkhorn minus trained FlashAttention. Recovered Sinkhorn seed 2 contributes scientific evidence only and is excluded from timing, throughput, systems, lifecycle, and ordinary-success claims.}\label{tab:reconstruction}
\end{table}
\begin{table}[htbp]\centering\small
\begin{tabular}{@{}rrrrrl@{}}\toprule
$\lambda_{\rm CE}$ & Reconstruction & Sparse CE & Bary. MAE & Within-five & Pass\\\midrule
0.0 & 0.0545948 & 5.56091 & 77.3981 & 0.0435078 & baseline\\
0.1 & 0.0545891 & 5.55988 & 77.3928 & 0.0434887 & no\\
0.3 & 0.0545882 & 5.55984 & 77.3925 & 0.0434868 & no\\
1.0 & 0.0545910 & 5.55997 & 77.3935 & 0.0434926 & no\\\bottomrule
\end{tabular}
\caption{Seed-0 direct-objective calibration at the prespecified step-100 selection point. A row had to improve sparse CE, barycenter MAE, and within-five while keeping reconstruction degradation at most 10\%. No row passed; intermediate checkpoint crossings are post-hoc diagnostics and were not used for selection.}\label{tab:ce}
\end{table}
All reconstruction rows used the single-device V5E1 hardware class, but physical TPU identity was not held fixed and whole-invocation timing was not a wall-clock-matched kernel comparison. Three seeds provide descriptive limited-seed uncertainty only. Five rows are ordinary zero-exit completions; Sinkhorn seed 2 is a content-addressed scientific recovery and is ineligible for timing, lifecycle, throughput, or ordinary-success claims. The FlashAttention evaluator is the canonical dense float32 reference applied to trained projections, not proof of deployed BF16 fused-kernel parity. Because the direct-objective selector failed, there is no selected-weight replication or multi-seed direct-alignment claim. Neither experiment establishes biological superiority or reverses the strong diagonal-reference alignment result in Table~\ref{tab:promoted}.
\clearpage

\section{Extended Finite-Orbit and Quotient Theory}\label{app:extended}
This arXiv v3 records theory developed after the reviewed manuscript. The $R=2$ one-reference schedule remains the implementation result in the main paper. Proposition~\ref{prop:orbit} already gives the arbitrary finite same-score transport-orbit identity, and Proposition~\ref{prop:hilbert} already gives the strictly positive-block Hilbert certificate; neither result is restated here. The new material below extends the finite-tail adjoint to compatible general marginals, gives concrete counterexamples at the temperature and gauge boundaries, and supplies sparse-support Dobrushin certificates for quotient cotangents.

\subsection{Compatible General Marginals}
Let $\Omega\subseteq[m]\times[n]$ be a fixed active support and let $S$ be a fixed effective score tensor on $\Omega$. Inactive entries are outside the computation graph. Let $a\in\mathbb R_{>0}^m$ and $b\in\mathbb R_{>0}^n$ be compatible positive marginals, $\sum_i a_i=\sum_j b_j$. The log-domain half-steps are
\begin{align*}
 u_i^{(t+1)}&=\log a_i-\log\sum_{j:(i,j)\in\Omega}\exp(S_{ij}+v_j^{(t)}),\\
 v_j^{(t+1)}&=\log b_j-\log\sum_{i:(i,j)\in\Omega}\exp(S_{ij}+u_i^{(t+1)}).
\end{align*}
For row time $p$ and column time $q$, define
\[
 P_{ij}^{(p,q)}=\exp(S_{ij}+u_i^{(p)}+v_j^{(q)})\one_{(i,j)\in\Omega}.
\]
All statements in this subsection concern a finite computation graph with fixed support and fixed effective score. Temperature continuation is handled stagewise.
\begin{lemma}[General-marginal half-step Jacobians]\label{lem:general-half}
The local derivatives are
\[
 \frac{\partial u_i^{(t)}}{\partial v_j^{(t-1)}}=-\frac{P_{ij}^{(t,t-1)}}{a_i},\qquad
 \frac{\partial v_j^{(t)}}{\partial u_i^{(t)}}=-\frac{P_{ij}^{(t,t)}}{b_j}.
\]
The score-entry derivatives through the corresponding half-steps have the same factors for $(i,j)\in\Omega$, and inactive score derivatives are zero.
\end{lemma}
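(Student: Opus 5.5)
The plan is to differentiate the two log-domain half-steps directly, as in Lemma~\ref{lem:half}, and then rewrite each softmax weight as a plan entry divided by the relevant marginal.

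\emph{Row half-step.} Write the row normalizer as
\[
 Z_i^{(t)}=\sum_{k:(i,k)\in\Omega}\exp(S_{ik}+v_k^{(t-1)}),
\]
so that $u_i^{(t)}=\log a_i-\log Z_i^{(t)}$. The marginal term $\log a_i$ is a constant, so
\[
 \frac{\partial u_i^{(t)}}{\partial v_j^{(t-1)}}=-\frac{\exp(S_{ij}+v_j^{(t-1)})}{Z_i^{(t)}}\,\one_{(i,j)\in\Omega}.
\]
Exponentiating the update gives $1/Z_i^{(t)}=e^{u_i^{(t)}}/a_i$. Substituting turns the weight into $\exp(S_{ij}+u_i^{(t)}+v_j^{(t-1)})/a_i=P_{ij}^{(t,t-1)}/a_i$.

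\emph{Column half-step.} This is the same computation with rows and columns swapped. The normalizer is $\sum_i\exp(S_{ij}+u_i^{(t)})=b_je^{-v_j^{(t)}}$, which yields $-P_{ij}^{(t,t)}/b_j$.

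\emph{Score derivatives.} Through the row half-step, $S_{ij}$ enters only inside $Z_i^{(t)}$ and with unit coefficient, exactly as $v_j^{(t-1)}$ does. Hence $\partial u_i^{(t)}/\partial S_{ij}=-P_{ij}^{(t,t-1)}/a_i$ for $(i,j)\in\Omega$. The column half-step likewise gives $\partial v_j^{(t)}/\partial S_{ij}=-P_{ij}^{(t,t)}/b_j$. An inactive $(i,j)\notin\Omega$ never appears in any sum, because the support is fixed and outside the graph, so its derivative is zero.

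\emph{Main obstacle.} The only point needing care is bookkeeping rather than analysis. The score-entry derivative must be taken with $u^{(t)}$ held as an output, i.e.\ as the local partial through a single half-step, not a total derivative through later steps. The correct time indices $(t,t-1)$ versus $(t,t)$ must also be matched to the row and column half-steps.

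Compatibility $\sum_ia_i=\sum_jb_j$ is not used here. It matters only for the later zero-total and gauge invariants, where the unit-target argument of Lemma~\ref{lem:vanish} is replaced by the normalized conditional kernels $P/a$ and $P/b$.
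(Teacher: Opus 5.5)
Your proposal is correct and follows essentially the same route as the paper: you differentiate the masked logsumexp and identify the normalizer as $a_i e^{-u_i^{(t)}}$ (resp.\ $b_j e^{-v_j^{(t)}}$), so the softmax weight becomes $P_{ij}^{(t,t-1)}/a_i$ (resp.\ $P_{ij}^{(t,t)}/b_j$), and you note that each active score entry enters that logsumexp with unit coefficient, just like its paired potential. Your extra remarks on local-versus-total partials and on where compatibility actually matters are accurate, though the lemma itself does not need them.
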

\begin{proof}
For the row half-step, the normalizer is
$\sum_j\exp(S_{ij}+v_j^{(t-1)})=a_i\exp(-u_i^{(t)})$,
so the masked softmax weight is $P_{ij}^{(t,t-1)}/a_i$. The minus sign comes from subtracting the log normalizer. The column half-step is identical with normalizer $b_j\exp(-v_j^{(t)})$. Each active score entry enters the same logsumexp as its corresponding input potential.
\end{proof}
\begin{proposition}[General-marginal finite-tail score VJP]\label{prop:general-vjp}
Let a scalar loss $\mathcal L$ depend on the terminal plan through the edgewise cotangent $Z:=\partial\mathcal L/\partial P^{(R,R)}$. Initialize $\bar S=0$; below, $+=$ and $-=$ denote accumulation into this score cotangent. Define
\[
 \bar v^{(R)}=(P^{(R,R)}\odot Z)^\top\one,\qquad g_u=(P^{(R,R)}\odot Z)\one.
\]
The terminal transport application and the final $v^{(R)}$ half-step give
\begin{align*}
 \bar S&\mathrel{+}=P^{(R,R)}\odot Z-P^{(R,R)}\odot[\one(\bar v^{(R)}\oslash b)^\top],\\
 \bar u^{(R)}&=g_u-P^{(R,R)}(\bar v^{(R)}\oslash b).
\end{align*}
For $t=R,R-1,\ldots,1$, the pullback through the $u^{(t)}$ half-step is
\begin{align*}
 \bar S&\mathrel{-}=P^{(t,t-1)}\odot[(\bar u^{(t)}\oslash a)\one^\top],\\
 \bar v^{(t-1)}&=-(P^{(t,t-1)})^\top(\bar u^{(t)}\oslash a).
\end{align*}
For $t\ge2$, the pullback through the preceding $v^{(t-1)}$ half-step is
\begin{align*}
 \bar S&\mathrel{-}=P^{(t-1,t-1)}\odot[\one(\bar v^{(t-1)}\oslash b)^\top],\\
 \bar u^{(t-1)}&=-P^{(t-1,t-1)}(\bar v^{(t-1)}\oslash b).
\end{align*}
The stopped-tail surrogate discards the resulting base cotangent; the a posteriori certificate may instead pull it through the finite base solve. For $a=b=\one$ this reduces to the adjoint in the main paper.
\end{proposition}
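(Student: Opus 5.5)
The plan is to run reverse-mode accumulation through the finite computation graph edge by edge, using Lemma~\ref{lem:general-half} as the only local input. The graph is the chain $v^{(0)}\to u^{(1)}\to v^{(1)}\to\cdots\to u^{(R)}\to v^{(R)}\to P^{(R,R)}\to\mathcal L$. In this chain every half-step also reads the fixed score $S$ on $\Omega$, and the terminal plan reads $S$, $u^{(R)}$ and $v^{(R)}$ directly. Because each node is consumed by at most two successors, the adjoint of each potential is a sum of at most two pullbacks. The score cotangent is the sum of one direct terminal term and one term per half-step. I will write the reverse sweep in exactly that order.

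\textbf{Terminal step.} Since $P^{(R,R)}_{ij}=\exp(S_{ij}+u_i^{(R)}+v_j^{(R)})$ on $\Omega$, the partial derivative of $P^{(R,R)}_{ij}$ with respect to each of $S_{ij}$, $u_i^{(R)}$ and $v_j^{(R)}$ equals $P^{(R,R)}_{ij}$. Hence the direct contributions are $P^{(R,R)}\odot Z$ to $\bar S$, $g_u$ to $u^{(R)}$, and $(P^{(R,R)}\odot Z)^\top\one$ to $v^{(R)}$. The last is the full adjoint of $v^{(R)}$, because $v^{(R)}$ has no other consumer.

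\textbf{Final $v^{(R)}$ half-step.} Pull $\bar v^{(R)}$ back through the half-step using the column Jacobian $-P^{(R,R)}_{ij}/b_j$. For the score, the lemma says $\partial v_j/\partial S_{ij}$ has the same factor. This gives the subtraction $P^{(R,R)}\odot[\one(\bar v^{(R)}\oslash b)^\top]$ from $\bar S$ and the contribution $-P^{(R,R)}(\bar v^{(R)}\oslash b)$ to $u^{(R)}$. Adding the direct term $g_u$ yields the stated $\bar u^{(R)}$.

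\textbf{Induction downward.} I then proceed by downward induction on $t$. The hypothesis is that $\bar u^{(t)}$ is the complete adjoint of $u^{(t)}$. This holds because $u^{(t)}$ for $t<R$ is consumed only by the $v^{(t)}$ half-step, and for $t=R$ both of its consumers have already been pulled back.
\begin{itemize}
\item The row Jacobian $-P^{(t,t-1)}_{ij}/a_i$, together with its identical score factor, gives the $\bar S$ update $-P^{(t,t-1)}\odot[(\bar u^{(t)}\oslash a)\one^\top]$ and $\bar v^{(t-1)}=-(P^{(t,t-1)})^\top(\bar u^{(t)}\oslash a)$.
\item For $t\ge2$, $v^{(t-1)}$ is consumed only by the $u^{(t)}$ half-step, so this $\bar v^{(t-1)}$ is complete. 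The column Jacobian then gives the $v^{(t-1)}$-half-step lines and a complete $\bar u^{(t-1)}$, which closes the induction.
\item At $t=1$ we stop, since $v^{(0)}$ is the stopped base. Its cotangent is the quantity the certificate of Proposition~\ref{prop:posterior} would push through the base solve.
\end{itemize}

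\textbf{Reduction to $a=b=\one$.} Setting $a=b=\one$ removes every $\oslash$ and recovers Proposition~\ref{prop:adjoint}. Two conventions need care here. First, the main paper's $Z_{ij}=\langle G_i,V_j\rangle$ is exactly $\partial\mathcal L/\partial P^{(R,R)}_{ij}$ for the transport application $\mathcal A$. Second, the $\log a_i$ and $\log b_j$ constants do not depend on the differentiated variables.

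\textbf{Main obstacle.} I expect the main subtlety to be bookkeeping rather than analysis. The claim $\partial v_j^{(t)}/\partial S_{ij}=-P_{ij}^{(t,t)}/b_j$ holds only because each half-step uses the same $S$ as the plan. A second point is that $\bar u^{(R)}$ must combine both of its consumers before it is propagated further. A third is the centering pullback. In this general-marginal setting, Lemma~\ref{lem:vanish} no longer applies verbatim, so I would state the result for the ungauged raw recursion, as the proposition does, and not claim anything about centered iterates.
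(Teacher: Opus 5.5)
Your proposal is correct and follows the same route as the paper's proof. Both expand the terminal variation of $\langle P^{(R,R)},Z\rangle$ into its $S$, $u^{(R)}$ and $v^{(R)}$ parts, pull $\bar v^{(R)}$ through the final column half-step using Lemma~\ref{lem:general-half}, and then apply the two half-step Jacobians in reverse time until reaching the stopped base; your explicit fan-out bookkeeping (checking that each potential's adjoint is complete before you propagate it) spells out what the paper's short proof leaves implicit.
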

\begin{proof}
The terminal variation is
\[
 \delta\langle P^{(R,R)},Z\rangle=
 \langle P^{(R,R)}\odot Z,\delta S\rangle+
 \langle g_u,\delta u^{(R)}\rangle+\langle\bar v^{(R)},\delta v^{(R)}\rangle.
\]
Before recursing, $\bar v^{(R)}$ must be pulled through the final $v^{(R)}$ update. Lemma~\ref{lem:general-half} yields the same-time score decrement and changes the row-potential cotangent to $g_u-P^{(R,R)}(\bar v^{(R)}\oslash b)$. Applying the two half-step Jacobians in reverse time gives the remaining recurrences. The recursion stops at the stopped base state.
\end{proof}

\subsection{Two Boundaries of Orbit Reconstruction}
The same-score requirement in Proposition~\ref{prop:orbit} is essential. The following witness makes the stagewise limitation concrete.
\begin{proposition}[Cross-temperature obstruction]\label{prop:temperature}
Plans generated from $A/\eps_1$ and $A/\eps_2$ with $\eps_1\ne\eps_2$ need not lie in one row/column scaling orbit. On the complete $2\times2$ support with
\[
 A=\begin{pmatrix}0&0\\0&1\end{pmatrix},
\]
the cross-temperature log-ratio has a nonzero alternating cycle sum.
\end{proposition}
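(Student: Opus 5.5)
The approach is to characterize row/column scaling orbits by the vanishing of alternating cycle sums, and then compute that cycle sum for the given $2\times2$ witness.

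\emph{Step 1 (orbit criterion).} If $P'=\diag(r)P\diag(s)$ on a common support, then $\log P'_{ij}-\log P_{ij}=\log r_i+\log s_j$. Any function of the form $x_i+y_j$ has zero alternating sum around a cycle $i\to j\to i'\to j'\to i$ in the support, because
\[
 D_{ij}-D_{ij'}+D_{i'j'}-D_{i'j}=0 \quad\text{whenever } D_{ij}=x_i+y_j .
\]
So a nonzero alternating cycle sum of the log-ratio is an obstruction to orbit membership.

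\emph{Step 2 (the log-ratio ignores the potentials).} Take $P^{(1)}_{ij}=\exp(A_{ij}/\eps_1+u_i+v_j)$ and $P^{(2)}_{ij}=\exp(A_{ij}/\eps_2+u'_i+v'_j)$ with arbitrary potentials. Their log-ratio is
\[
 A_{ij}\,(1/\eps_1-1/\eps_2)+(u_i-u'_i)+(v_j-v'_j).
\]
The potential terms have the form $x_i+y_j$, so by Step 1 they drop out of every alternating cycle sum. This is why the obstruction holds for all potentials, including the actual Sinkhorn iterates at either temperature, which the statement's phrase "need not" only requires at some instance.

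\emph{Step 3 (the witness).} On the complete $2\times2$ support, the alternating sum of $A$ over the unique cycle is
\[
 A_{11}-A_{12}+A_{22}-A_{21}=0-0+1-0=1 .
\]
Hence the cycle sum of the log-ratio equals $1/\eps_1-1/\eps_2$, which is nonzero because $\eps_1\ne\eps_2$. By Step 1, no vectors $r,s$ can exist with $P^{(2)}=\diag(r)P^{(1)}\diag(s)$.

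Every step is a direct algebraic check. The only point needing care is stating Step 1 as a necessary condition, which is immediate. There is no convergence or positivity issue, since all entries on the full support are strictly positive.
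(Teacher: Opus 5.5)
Your proof is correct and follows essentially the same route as the paper. Both arguments note that row-plus-column (separable) log-ratios have zero alternating cycle sum and that the dual-potential differences are separable, so the obstruction reduces to the term $(1/\eps_1-1/\eps_2)A_{ij}$, whose alternating sum on the $2\times2$ witness is $1/\eps_1-1/\eps_2\ne0$; your remark that this holds for every choice of potentials matches the paper's argument as well.
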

\begin{proof}
Two positive plans on the same support can share a row/column scaling orbit only if their entrywise log-ratio is a row term plus a column term. Across temperatures, that log-ratio contains $(1/\eps_1-1/\eps_2)A_{ij}$ in addition to dual-potential differences. Its alternating sum on the displayed support is
\[
 (1/\eps_1-1/\eps_2)(A_{11}-A_{12}-A_{21}+A_{22})=1/\eps_1-1/\eps_2\ne0.
\]
Every row-plus-column separable array has zero alternating sum, proving the claim.
\end{proof}
The centered gauge ledger in Appendix~\ref{app:proofs} already proves the required scalar correction for mixed-time plans. The next example shows that omitting it changes the score VJP rather than merely changing notation.
\begin{proposition}[A centered-gauge counterexample]\label{prop:gauge-counter}
Let
\[
 S=\begin{pmatrix}0&0.2\\-0.1&0.3\end{pmatrix},\quad
 u^{(p)}=\begin{pmatrix}0.4\\-0.2\end{pmatrix},\quad
 v^{(p)}=\begin{pmatrix}0.1\\-0.3\end{pmatrix},\quad
 v^{(q)}=\begin{pmatrix}-0.2\\0.5\end{pmatrix}.
\]
Choose $c_p=1$ and $c_q=-1/\allowbreak 2$. The same-time plan is unchanged by centering with $c_p$, whereas the centered mixed-time plan is multiplied by $\exp(c_q-c_p)=\exp(-3/\allowbreak 2)$. Multiplication by $\exp(c_p-c_q)$ restores the ungauged plan. Consequently, any nonzero mixed-time VJP term using the uncorrected centered factor is wrong.
\end{proposition}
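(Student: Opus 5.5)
The plan is to prove Proposition~\ref{prop:gauge-counter} by direct computation in the two regimes, same-time and mixed-time, and then isolate the scalar factor. Following the centered gauge ledger (Proposition~\ref{prop:gaugeledger}), the centered representatives are
\[
\widetilde u^{(p)}=u^{(p)}-c_p\one,\qquad \widetilde v^{(p)}=v^{(p)}+c_p\one,\qquad \widetilde v^{(q)}=v^{(q)}+c_q\one.
\]
No Sinkhorn consistency between the displayed potentials and $S$ is needed, because both plan identities are purely exponential. So I will treat the given $u,v$ as arbitrary ungauged representatives.

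The first step is the same-time plan. Here
\[
S_{ij}+\widetilde u^{(p)}_i+\widetilde v^{(p)}_j=S_{ij}+u^{(p)}_i+v^{(p)}_j,
\]
since the shifts $-c_p$ and $+c_p$ cancel. The plan is therefore unchanged; I will record its explicit entries only for concreteness.

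The second step is the mixed-time plan. The exponent becomes $S_{ij}+u^{(p)}_i+v^{(q)}_j+(c_q-c_p)$. With $c_p=1$ and $c_q=-1/2$ the shift is $-3/2$, so the centered factor equals $e^{-3/2}$ times the ungauged factor entrywise. Multiplying back by $e^{c_p-c_q}=e^{3/2}$ restores the ungauged plan, which is exactly the ledger identity specialized to these numbers. The ungauged mixed plan has strictly positive entries; one could list $e^{0.2}$, $e^{1.1}$, and so on, but positivity alone suffices.

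The final step is the VJP consequence, and I expect it to be the only point requiring care. A mixed-time term in \eqref{eq:score} has the form $P^{(p,q)}\odot H$ with $H$ a rank-one outer factor, either $\one\bar v^\top$ or $\bar u\one^\top$. I will argue as follows: if the correct ungauged term $P^{(p,q)}_{\rm ung}\odot H$ is nonzero, then substituting the centered factor gives $e^{-3/2}P^{(p,q)}_{\rm ung}\odot H$. The two differ by $(1-e^{-3/2})P^{(p,q)}_{\rm ung}\odot H\neq0$, because $e^{-3/2}\neq1$. Since the other terms in the score VJP are unchanged, the total $\bar S$ changes. To make this concrete, I will take $\bar u=(1,0)^\top$ and exhibit a nonzero entry of the error. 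This settles the claim that the omission is substantive and not merely notational.
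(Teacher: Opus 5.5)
Your proposal is correct and follows the paper's proof: the shifts cancel in the same-time exponent, the mixed-time exponent carries the shift $c_q-c_p=-3/2$, and this non-unit factor changes every nonzero edge of the mixed-time VJP term. One small caveat: your remark that the other terms of $\bar S$ are unchanged holds only when that single term is substituted in isolation, since in the full recursion mixed-time factors also feed $\bar v^{(t-1)}=-(P^{(t,t-1)})^\top\bar u^{(t)}$; the statement concerns only the term itself, so nothing is lost.
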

\begin{proof}
The same-time centered exponent is $S_{ij}+u_i^{(p)}-c_p+v_j^{(p)}+c_p$, so its scalar shifts cancel. The mixed-time exponent is $S_{ij}+u_i^{(p)}-c_p+v_j^{(q)}+c_q$, which differs from the ungauged exponent by $c_q-c_p=-3/\allowbreak 2$. The nonunit factor changes every nonzero edge contribution of the mixed-time VJP term.
\end{proof}
This orbit result concerns resident-plan storage, not low rank of the final score cotangent: the accumulated edgewise modifier can be arbitrary.

\subsection{Sparse-Support Dobrushin Certificates}
Masked zeros prevent the positive-map argument in Proposition~\ref{prop:hilbert} from becoming a global sparse support theorem. A complementary certificate can be built from the concrete full-step derivative. For a vector $h$, define $\osc(h)=\max_i h_i-\min_i h_i$. For a signed cotangent $\eta$ with $\one^\top\eta=0$, define
\[
 \|\eta\|_{\TV}=\frac12\sum_i|\eta_i|.
\]
The zero-mass restriction is dual to quotienting log potentials by additive constants. For a raw column-potential cotangent $\zeta\in\mathbb R^n$, let
\[
 \Pi_0=I_n-\frac1n\one\one^\top,\qquad \Pi_0\zeta=\zeta-\frac{\one^\top\zeta}{n}\one.
\]
For one compatible-marginal full step, define
\[
 R_t=\diag(a)^{-1}P^{(t,t-1)},\qquad
 C_t=P^{(t,t)}\diag(b)^{-1},\qquad M_t=C_t^\top R_t.
\]
$R_t$ is row-stochastic, $C_t$ is column-stochastic, and $M_t$ is row-stochastic.
\begin{lemma}[Full-step derivative orientation]\label{lem:orientation}
For column-vector perturbations of the previous column potential,
\[
 \delta v^{(t)}=M_t\delta v^{(t-1)},\qquad M_t=C_t^\top R_t.
\]
The reverse cotangent map is $M_t^\top$.
\end{lemma}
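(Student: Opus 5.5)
The plan is to compose the two half-step linearizations from Lemma~\ref{lem:general-half} with the score tensor held fixed. First, the row half-step gives
\[
 \delta u_i^{(t)}=-\sum_j\frac{P_{ij}^{(t,t-1)}}{a_i}\,\delta v_j^{(t-1)},
 \qquad\text{i.e.}\qquad \delta u^{(t)}=-R_t\,\delta v^{(t-1)}.
\]
Second, the column half-step gives
\[
 \delta v_j^{(t)}=-\sum_i\frac{P_{ij}^{(t,t)}}{b_j}\,\delta u_i^{(t)}.
\]
In matrix form this reads $\delta v^{(t)}=-\bigl(P^{(t,t)}\diag(b)^{-1}\bigr)^\top\delta u^{(t)}=-C_t^\top\delta u^{(t)}$. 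Substituting the first relation into the second, the two minus signs cancel and we get $\delta v^{(t)}=C_t^\top R_t\,\delta v^{(t-1)}=M_t\,\delta v^{(t-1)}$. This is the claimed forward orientation.

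Next I will verify the stochasticity claims stated just before the lemma, since they fix which side the ones vector sits on. Row-stochasticity of $R_t$ follows from the row normalization $\sum_jP^{(t,t-1)}_{ij}=a_i$, which is the defining property of the $u^{(t)}$ update. Column-stochasticity of $C_t$ follows from $\sum_iP^{(t,t)}_{ij}=b_j$, which is the defining property of the $v^{(t)}$ update. Hence $C_t^\top$ is row-stochastic, and the product $M_t=C_t^\top R_t$ is row-stochastic, so $M_t\one=\one$.

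For the reverse statement, I will use the pairing $\langle\bar v^{(t)},\delta v^{(t)}\rangle=\langle M_t^\top\bar v^{(t)},\delta v^{(t-1)}\rangle$, which gives $\bar v^{(t-1)}=M_t^\top\bar v^{(t)}=R_t^\top C_t\bar v^{(t)}$. As a cross-check, I will compare this with the two cotangent lines of Proposition~\ref{prop:general-vjp}, ignoring score and direct terminal contributions. Those lines read $\bar u^{(t)}=-P^{(t,t)}(\bar v^{(t)}\oslash b)=-C_t\bar v^{(t)}$ and then $\bar v^{(t-1)}=-(P^{(t,t-1)})^\top(\bar u\oslash a)=-R_t^\top\bar u^{(t)}$, which again gives $M_t^\top$. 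The relation $M_t\one=\one$ then shows that $M_t^\top$ preserves total mass, which is the fact the subsequent TV/Dobrushin discussion needs.

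The main obstacle is bookkeeping rather than analysis: keeping the transposes and the $a$ versus $b$ divisions consistent, so that the product is $C_t^\top R_t$ and not $R_tC_t^\top$. I would settle this by tracking index positions explicitly, with $i$ for rows and $j$ for columns.
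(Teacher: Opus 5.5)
Your proposal is correct and follows essentially the same route as the paper. Like the paper, you compose the two half-step Jacobians from Lemma~\ref{lem:general-half}, let the two minus signs cancel to obtain $C_t^\top R_t$, and derive stochasticity from the row normalization of $P^{(t,t-1)}$ and the column normalization of $P^{(t,t)}$; your pairing argument and cross-check of the reverse map $M_t^\top$ against Proposition~\ref{prop:general-vjp} make explicit a step the paper leaves implicit.
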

\begin{proof}
The first half-step gives $\delta u_i^{(t)}=-\sum_k(R_t)_{ik}\delta v_k^{(t-1)}$ and the second gives
\[
 \delta v_j^{(t)}=-\sum_i(C_t)_{ij}\delta u_i^{(t)}
 =\sum_{i,k}(C_t)_{ij}(R_t)_{ik}\delta v_k^{(t-1)}.
\]
This is the displayed orientation. Stochasticity follows from the normalized rows of $R_t$ and normalized columns of $C_t$.
\end{proof}
For a row-stochastic matrix $M$, define the Dobrushin coefficient \citep{gaubert2015dobrushin}
\[
 \tau(M)=1-\min_{p,q}\sum_k\min(M_{pk},M_{qk})
 =\frac12\max_{p,q}\sum_k|M_{pk}-M_{qk}|.
\]
\begin{theorem}[Primal oscillation contraction]\label{thm:osc}
For row-stochastic $M$, $\osc(Mh)\le\tau(M)\osc(h)$.
\end{theorem}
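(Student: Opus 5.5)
The plan is to compare two rows of $Mh$ directly and reduce everything to the overlap form of $\tau(M)$. Fix row indices $p,q$. Since $M$ is row-stochastic, $(Mh)_p-(Mh)_q=\sum_k(M_{pk}-M_{qk})h_k$, and the signed vector $\mu=M_{p\cdot}-M_{q\cdot}$ has total mass zero. I will write $\mu=\mu^+-\mu^-$ with disjoint positive and negative parts. The zero-mass condition gives $\sum_k\mu_k^+=\sum_k\mu_k^-=\frac12\sum_k|M_{pk}-M_{qk}|$. By the second expression for $\tau(M)$, this common mass is at most $\tau(M)$.

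The key estimate follows from this decomposition. We have
\[
 \sum_k\mu_kh_k=\sum_k\mu_k^+h_k-\sum_k\mu_k^-h_k\le\Bigl(\sum_k\mu_k^+\Bigr)\max_kh_k-\Bigl(\sum_k\mu_k^-\Bigr)\min_kh_k.
\]
Because the two masses are equal, the right side equals $\tfrac12\sum_k|\mu_k|\,\osc(h)$, which is at most $\tau(M)\osc(h)$. Taking the maximum over $p,q$ bounds $\max_p(Mh)_p-\min_q(Mh)_q=\osc(Mh)$, which is the claim.

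I also need the equality of the two expressions for $\tau(M)$, since the statement defines $\tau$ through both. I would verify it from the identity $\min(x,y)=\tfrac12(x+y-|x-y|)$ together with unit row sums, which give $\sum_k\min(M_{pk},M_{qk})=1-\tfrac12\sum_k|M_{pk}-M_{qk}|$.

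I expect no real obstacle here. The only subtle point is to use the equal positive and negative masses, which come from row-stochasticity, rather than a crude $\ell_1$ bound. A crude bound would lose the factor $\tfrac12$ and give $2\tau(M)$ in place of $\tau(M)$.
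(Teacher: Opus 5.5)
Your proposal is correct and follows essentially the same route as the paper's proof: pair $h$ with the zero-mass difference of two probability rows, use the equal positive and negative masses to get the bound $\tfrac12\sum_k|M_{pk}-M_{qk}|\,\osc(h)$, and maximize over $p,q$. The paper obtains the same estimate by first shifting $h$ into $[0,\osc(h)]$ rather than bounding by $\max h$ and $\min h$. Your extra check that the overlap and half-$\ell_1$ forms of $\tau(M)$ agree is a small step the paper takes for granted.
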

\begin{proof}
Shift $h$ so that its range lies in $[0,\osc(h)]$. The difference between any two output coordinates is the pairing of $h$ with two probability rows. Their difference has zero total mass, so its positive and negative parts have equal mass. Thus
\[
 |(Mh)_p-(Mh)_q|\le\frac12\sum_k|M_{pk}-M_{qk}|\osc(h).
\]
Maximizing over $p,q$ proves the result.
\end{proof}
\begin{theorem}[Reverse zero-mass TV contraction]\label{thm:tv}
For row-stochastic $M$ and $\one^\top\eta=0$,
$\|M^\top\eta\|_{\TV}\le\tau(M)\|\eta\|_{\TV}$.
\end{theorem}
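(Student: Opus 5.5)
The cleanest route is duality with Theorem~\ref{thm:osc}: the TV norm on zero-mass vectors is dual to the oscillation seminorm on vectors modulo constants. For $\one^\top\eta=0$ we first establish
\[
 \|\eta\|_{\TV}=\sup\bigl\{\langle\eta,h\rangle:\ \osc(h)\le1\bigr\}.
\]
Only the supremum characterization is really needed. Splitting $\eta=\eta^+-\eta^-$, zero mass gives $\one^\top\eta^+=\one^\top\eta^-=\|\eta\|_{\TV}$. For any $h$ we may shift $h$ by a constant without changing $\langle\eta,h\rangle$, so we may assume $h$ takes values in $[0,\osc(h)]$. Then $\langle\eta,h\rangle\le\one^\top\eta^+\osc(h)$, which gives ``$\le$''. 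Equality is attained by $h=\one_{\{\eta>0\}}$.

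Next we check that $M^\top\eta$ is again zero-mass, since the right-hand side of the duality only applies there. Because $M$ is row-stochastic, $M\one=\one$, and hence $\one^\top M^\top\eta=(M\one)^\top\eta=\one^\top\eta=0$. We can therefore apply the duality to $M^\top\eta$:
\[
 \|M^\top\eta\|_{\TV}=\sup_{\osc(h)\le1}\langle M^\top\eta,h\rangle=\sup_{\osc(h)\le1}\langle\eta,Mh\rangle.
\]
For each such $h$, Theorem~\ref{thm:osc} gives $\osc(Mh)\le\tau(M)$. The bound $\langle\eta,g\rangle\le\|\eta\|_{\TV}\osc(g)$, which is the ``$\le$'' half of the duality, then yields $\langle\eta,Mh\rangle\le\tau(M)\|\eta\|_{\TV}$. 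Taking the supremum over $h$ proves the claim.

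The only delicate point is the duality lemma, and within it the step that requires zero mass: the shift-invariance of $\langle\eta,h\rangle$ and the equal masses of $\eta^\pm$. A direct computation is also available as a cross-check. Write $(M^\top\eta)_k=\sum_p\eta_pM_{pk}=\sum_{p,q}\eta^+_p\eta^-_q(M_{pk}-M_{qk})/\|\eta\|_{\TV}$ (a coupling of $\eta^+$ and $\eta^-$). Summing absolute values over $k$ and bounding each $\tfrac12\sum_k|M_{pk}-M_{qk}|$ by $\tau(M)$ gives the same bound. I would include this as a remark.
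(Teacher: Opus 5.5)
Your proposal is correct and follows the paper's proof: both use the zero-mass duality $\|\eta\|_{\TV}=\sup_{\osc(h)\le1}\langle\eta,h\rangle$ and then apply Theorem~\ref{thm:osc} to $Mh$. The paper uses two facts without proof, and you supply both: the proof of that duality, and the check that $M^\top\eta$ is again zero-mass (via $M\one=\one$), which is needed to apply the duality to $M^\top\eta$; your product-coupling remark is also a valid direct cross-check.
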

\begin{proof}
On zero-mass cotangents, $\|\eta\|_{\TV}=\sup_{\osc(h)\le1}\langle\eta,h\rangle$. Therefore
\[
 \|M^\top\eta\|_{\TV}=\sup_{\osc(h)\le1}\langle\eta,Mh\rangle
 \le\tau(M)\|\eta\|_{\TV},
\]
using Theorem~\ref{thm:osc}.
\end{proof}
\begin{lemma}[Projection to quotient cotangents]\label{lem:projection}
Let $M$ be row-stochastic and $\bar\eta=\Pi_0\eta$. Then $\one^\top\bar\eta=0$ and $\one^\top M^\top\bar\eta=0$. In general, $\Pi_0M^\top\eta\ne M^\top\Pi_0\eta$; projection is therefore part of the quotient recurrence rather than post-processing of a raw recurrence.
\end{lemma}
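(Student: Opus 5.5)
The lemma has three claims, and I will handle them in order. The first two are direct computations; the third needs a small explicit example.

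\textbf{Step 1 (zero mass of $\bar\eta$).} By definition $\Pi_0=I_n-\frac1n\one\one^\top$. Multiplying on the left by $\one^\top$ gives
\[
 \one^\top\Pi_0=\one^\top-\tfrac{n}{n}\one^\top=0 .
\]
Hence $\one^\top\bar\eta=0$ for every $\eta$.

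\textbf{Step 2 (zero mass is preserved by $M^\top$).} This uses only row-stochasticity, $M\one=\one$. Then
\[
 \one^\top M^\top\bar\eta=(M\one)^\top\bar\eta=\one^\top\bar\eta=0 .
\]
So $M^\top$ maps the zero-mass subspace into itself. This is the invariant that Theorem~\ref{thm:tv} needs, since that theorem assumes the zero-mass condition on its input.

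\textbf{Step 3 (the two operators need not commute).} The plan is to compute the commutator. Write $J=\frac1n\one\one^\top$, so that $\Pi_0=I-J$ and
\[
 \Pi_0M^\top-M^\top\Pi_0=M^\top J-JM^\top .
\]
Since $M\one=\one$, we get $JM^\top=\frac1n\one(M\one)^\top=J$. Also
\[
 M^\top J=\tfrac1n(M^\top\one)\one^\top .
\]
Therefore the commutator equals $\frac1n\bigl(M^\top\one-\one\bigr)\one^\top$. This is nonzero exactly when $M$ fails to be column-stochastic, i.e.\ when $M^\top\one\ne\one$.

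So I only need one row-stochastic matrix that is not doubly stochastic, together with one vector $\eta$ of nonzero total mass. I take
\[
 n=2,\qquad M=\begin{pmatrix}1&0\\1&0\end{pmatrix},\qquad \eta=\one .
\]
The two sides then differ:
\[
 \Pi_0M^\top\one=\Pi_0\begin{pmatrix}2\\0\end{pmatrix}=\begin{pmatrix}1\\-1\end{pmatrix},\qquad
 M^\top\Pi_0\one=0 .
\]
If an example closer to the sparse-support setting is wanted, one can instead build $M_t=C_t^\top R_t$ from a strictly positive $2\times2$ plan whose composed matrix is not doubly stochastic. Non-double-stochasticity is the generic case, so such an example should exist.

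\textbf{Interpretive conclusion.} The generic failure of commutation means the quotient recurrence must apply $\Pi_0$ at its start, or interleave it between steps. Projecting only at the end of a raw recurrence does not in general give the same result.

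\textbf{Main obstacle.} I do not expect a real technical difficulty. The only care needed is in Step 3: presenting the commutator identity cleanly, and making sure the counterexample uses an $\eta$ with nonzero mass. On the zero-mass subspace itself the two orders agree, because $\Pi_0$ is the identity there and $M^\top$ preserves that subspace by Step 2.
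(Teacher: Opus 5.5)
Your proof is correct and follows the paper's route. The first two claims are the same one-line computations ($\one^\top\Pi_0=0$ and $\one^\top M^\top\bar\eta=(M\one)^\top\bar\eta=0$), your commutator identity $\Pi_0M^\top-M^\top\Pi_0=\tfrac1n(M^\top\one-\one)\one^\top$ is the quantitative form of the paper's remark that row-stochastic matrices need not be column-stochastic, and your example $M=\left(\begin{smallmatrix}1&0\\1&0\end{smallmatrix}\right)$, $\eta=\one$ does the same job as the paper's later example $M=\left(\begin{smallmatrix}0.9&0.1\\0.2&0.8\end{smallmatrix}\right)$, $\eta=(1,1)^\top$, so the optional $C_t^\top R_t$ construction is unnecessary.
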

\begin{proof}
The first claim is immediate. Since $M\one=\one$, $\one^\top M^\top\bar\eta=(M\one)^\top\bar\eta=0$. Row-stochastic matrices need not be column-stochastic, giving the stated noncommutation in general.
\end{proof}
\begin{theorem}[Homogeneous quotient-cotangent decay]\label{thm:homogeneous}
Let $M_1,\ldots,M_R$ be row-stochastic full-step derivative kernels. Set $\bar\eta_R=\Pi_0\eta_R^{\rm raw}$ unless the terminal cotangent is already zero-mass, and propagate $\bar\eta_{r-1}=M_r^\top\bar\eta_r$. Then
\[
 \|\bar\eta_0\|_{\TV}\le\left(\prod_{r=1}^R\tau(M_r)\right)\|\bar\eta_R\|_{\TV}.
\]
\end{theorem}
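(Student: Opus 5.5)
The plan is a straightforward induction on the reverse time index, driven by Theorem~\ref{thm:tv} and Lemma~\ref{lem:projection}. First I will fix the base of the recursion. By construction $\bar\eta_R=\Pi_0\eta_R^{\rm raw}$, or $\bar\eta_R=\eta_R^{\rm raw}$ when the latter already has zero mass. In either case $\one^\top\bar\eta_R=0$, using the first claim of Lemma~\ref{lem:projection} in the projected case. This puts $\bar\eta_R$ in the zero-mass subspace on which $\|\cdot\|_{\TV}$ is defined and on which Theorem~\ref{thm:tv} applies.

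Next I will show that the zero-mass condition persists along the recursion. Suppose $\one^\top\bar\eta_r=0$. Since each $M_r$ is row-stochastic, $M_r\one=\one$, so $\one^\top M_r^\top\bar\eta_r=(M_r\one)^\top\bar\eta_r=\one^\top\bar\eta_r=0$. This is exactly the second computation in Lemma~\ref{lem:projection}, which only uses $\one^\top\bar\eta=0$ and not that $\bar\eta$ came from a projection. Hence $\bar\eta_{r-1}=M_r^\top\bar\eta_r$ again has zero mass, and by downward induction every $\bar\eta_r$ with $0\le r\le R$ lies in the zero-mass subspace. No further projection is needed after the terminal one. The noncommutation remark in Lemma~\ref{lem:projection} is the reason the projection must be applied at the terminal step and not afterwards.

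With this invariant in place, Theorem~\ref{thm:tv} applies at each step and gives $\|\bar\eta_{r-1}\|_{\TV}\le\tau(M_r)\|\bar\eta_r\|_{\TV}$ for $r=R,R-1,\ldots,1$. Chaining these $R$ inequalities, with all coefficients $\tau(M_r)\in[0,1]$ nonnegative so the inequalities compose monotonically, yields $\|\bar\eta_0\|_{\TV}\le\prod_{r=1}^R\tau(M_r)\,\|\bar\eta_R\|_{\TV}$. The product order is irrelevant because the factors are scalars.

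The only point needing care is the invariant step, since Theorem~\ref{thm:tv} is stated only for zero-mass inputs. Applying it to a raw, unprojected cotangent would be invalid. I therefore expect the main work to be stating cleanly that zero mass propagates under $M_r^\top$ when $M_r$ is row-stochastic. Row-stochasticity of the $M_r$ is supplied by Lemma~\ref{lem:orientation} for the compatible-marginal full steps; here it is taken as a hypothesis.
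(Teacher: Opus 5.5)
Your proposal is correct and matches the paper's proof: you use Lemma~\ref{lem:projection} (via $M_r\one=\one$) to keep the recurrence in the zero-mass subspace, apply Theorem~\ref{thm:tv} at each step, and multiply the resulting inequalities. Your write-up simply makes explicit the downward induction and the nonnegativity of the $\tau(M_r)$, which the paper leaves implicit.
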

\begin{proof}
Lemma~\ref{lem:projection} keeps the recurrence in the zero-mass quotient space. Apply Theorem~\ref{thm:tv} at each step and multiply the inequalities.
\end{proof}
\begin{figure}[htbp]\centering
\begin{tikzpicture}[font=\small,>=Latex]
\node[draw,rounded corners,align=center,minimum width=11cm,inner sep=7pt] (primal) at (0,2.1)
 {Primal constants: $M\one=\one$.\\Cotangent total mass: $\one^\top M^\top\eta=\one^\top\eta$.\\A constant cotangent vector need not be fixed by $M^\top$.};
\node[draw,align=center,inner sep=6pt] (a) at (-4,0.4) {$\bar\eta_R$\\$\one^\top\bar\eta_R=0$};
\node[draw,align=center,inner sep=6pt] (b) at (-1.3,0.4) {$\bar\eta_{R-1}$};
\node (c) at (1.3,0.4) {$\cdots$};
\node[draw,align=center,inner sep=6pt] (d) at (4,0.4) {$\bar\eta_0$\\$\one^\top\bar\eta_0=0$};
\draw[->] (a)--node[above]{$M_R^\top$}(b);
\draw[->] (b)--node[above]{$M_{R-1}^\top$}(c);
\draw[->] (c)--node[above]{$M_1^\top$}(d);
\node[align=center] at (0,-0.9) {$\displaystyle\|\bar\eta_0\|_{\TV}\le
 \left(\prod_{r=1}^R\tau(M_r)\right)\|\bar\eta_R\|_{\TV}$\\[4pt]
Project terminal/source cotangents before propagating the quotient recurrence.};
\end{tikzpicture}
\caption{Dobrushin total-variation contraction of quotient cotangents. The zero-mass space is preserved and contracts under transpose stochastic derivative kernels. Primal constants are fixed by $M$, whereas $M^\top$ conserves cotangent total mass; it does not generally preserve a constant cotangent vector. Raw nonquotient contributions require separate accounting.}\label{fig:quotient}
\end{figure}
Actual finite-tail objectives can inject source cotangents between boundaries. Projection below means projecting terminal and boundary-source cotangents before the recurrence, not propagating raw cotangents and projecting at the end.
\begin{theorem}[Inhomogeneous quotient-cotangent decay]\label{thm:inhomogeneous}
Let $M_1,\ldots,M_R$ be row-stochastic. Define $\bar\eta_R=\Pi_0\eta_R^{\rm raw}$ and $\bar\xi_{s-1}=\Pi_0\xi_{s-1}^{\rm raw}$ unless these cotangents are already zero-mass, and propagate
\[
 \bar\eta_{r-1}=M_r^\top\bar\eta_r+\bar\xi_{r-1}.
\]
Then
\[
\|\bar\eta_0\|_{\TV}\le
\left(\prod_{r=1}^R\tau(M_r)\right)\|\bar\eta_R\|_{\TV}
+\sum_{s=1}^R\left(\prod_{r=1}^{s-1}\tau(M_r)\right)\|\bar\xi_{s-1}\|_{\TV},
\]
where the empty product is one.
\end{theorem}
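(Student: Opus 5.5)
The plan is to unroll the affine quotient recurrence and then bound each term with the homogeneous machinery already in place.

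\textbf{Step 1: stay in the zero-mass space.} By Lemma~\ref{lem:projection}, every iterate $\bar\eta_{r}$ is zero-mass. The terminal cotangent and all injected sources are zero-mass by construction, since they are projected by $\Pi_0$. Each transpose $M_r^\top$ maps zero-mass vectors to zero-mass vectors because $M_r\one=\one$. Finite sums of zero-mass vectors are zero-mass. Hence, by downward induction on $r$, every $\bar\eta_r$ lies in the zero-mass quotient space, and Theorem~\ref{thm:tv} applies to every transpose application that appears.

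\textbf{Step 2: unroll the recurrence.} Induction from $r=R$ down to $r=1$ gives the closed form
\[
 \bar\eta_0=M_1^\top M_2^\top\cdots M_R^\top\bar\eta_R+\sum_{s=1}^R M_1^\top\cdots M_{s-1}^\top\bar\xi_{s-1}.
\]
For $s=1$ the operator product is the identity, so the source $\bar\xi_0$ enters unpropagated. This matches the empty product convention in the statement. The bookkeeping check is that $\bar\xi_{s-1}$ is injected at index $s-1$. Reaching index $0$ from there therefore requires exactly $M_{s-1}^\top,\ldots,M_1^\top$, which is the product $\prod_{r=1}^{s-1}$ in the claimed bound.

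\textbf{Step 3: bound each term.} Apply the triangle inequality for $\|\cdot\|_{\TV}$. This is legitimate because $\|\cdot\|_{\TV}$ is half the $\ell_1$ norm, hence a genuine norm on the zero-mass space. For each term, apply Theorem~\ref{thm:tv} repeatedly, exactly as in the proof of Theorem~\ref{thm:homogeneous}. At every application the argument is zero-mass, by Step 1 applied to the partial products. The first term then yields $\prod_{r=1}^R\tau(M_r)\|\bar\eta_R\|_{\TV}$, and the $s$-th source term yields $\prod_{r=1}^{s-1}\tau(M_r)\|\bar\xi_{s-1}\|_{\TV}$.

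Alternatively, one can avoid the explicit closed form by a direct induction on the claimed bound. Set $B_r$ to be the right-hand side with $R$ replaced by the tail starting at $r$. Then $\|\bar\eta_{r-1}\|_{\TV}\le\tau(M_r)\|\bar\eta_r\|_{\TV}+\|\bar\xi_{r-1}\|_{\TV}$, and the resulting products telescope into the stated bound.

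\textbf{Main obstacle.} The only real subtlety is ensuring that contraction is never applied to a raw, non-zero-mass vector. Theorem~\ref{thm:tv} fails off the zero-mass space: $\|\cdot\|_{\TV}$ there is not the dual of oscillation, and $M^\top$ need not contract the mean. By Lemma~\ref{lem:projection}, projecting at the end would not commute with $M^\top$. This is why the hypothesis projects sources \emph{before} injection. I would state this explicitly in Step 1, and note that under this hypothesis no correction terms arise.
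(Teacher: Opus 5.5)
Your proposal is correct and follows essentially the same argument as the paper: unroll the affine recurrence to $\bar\eta_0=M_1^\top\cdots M_R^\top\bar\eta_R+\sum_{s=1}^R M_1^\top\cdots M_{s-1}^\top\bar\xi_{s-1}$, then bound each product with Theorem~\ref{thm:tv} and combine the terms with the triangle inequality. Your explicit check that every intermediate vector stays zero-mass (via $M_r\one=\one$) spells out a step the paper leaves implicit.
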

\begin{proof}
Unroll the recurrence:
\[
 \bar\eta_0=M_1^\top\cdots M_R^\top\bar\eta_R+
 \sum_{s=1}^RM_1^\top\cdots M_{s-1}^\top\bar\xi_{s-1}.
\]
Apply Theorem~\ref{thm:tv} to each product and use the triangle inequality.
\end{proof}
\begin{theorem}[Conditional quotient-recurrence gradient certificate]\label{thm:conditional}
Let $B_T$ map a stopped-base cotangent to its parameter-gradient contribution through the finite base solve. Let $\bar\eta_0$ be the output of the projected recurrence in Theorem~\ref{thm:homogeneous} or~\ref{thm:inhomogeneous}, and define
\[
 \Delta g_R^0:=B_T\bar\eta_0.
\]
Suppose that, for a chosen gradient norm $\|\cdot\|_G$,
\[
 \kappa_T\ge\sup_{\one^\top\xi=0,\,\xi\ne0}
 \frac{\|B_T\xi\|_G}{\|\xi\|_{\TV}}.
\]
In the homogeneous case,
\[
 \|\Delta g_R^0\|_G\le\kappa_T
 \left(\prod_{r=1}^R\tau(M_r)\right)\|\bar\eta_R\|_{\TV}.
\]
In the inhomogeneous case, replace the product term by the right-hand side of Theorem~\ref{thm:inhomogeneous}. If $\eta_0^{\rm raw}$ is instead obtained by propagating unprojected terminal and source cotangents, the full raw contribution obeys only
\[
 \|B_T\eta_0^{\rm raw}\|_G\le
 \|\Delta g_R^0\|_G+\|B_T(\eta_0^{\rm raw}-\bar\eta_0)\|_G.
\]
The second term must vanish by a separate argument or be separately bounded.
In general $\bar\eta_0\ne\Pi_0\eta_0^{\rm raw}$.
\end{theorem}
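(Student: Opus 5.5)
The plan is to combine the definition of $\kappa_T$ as a restricted operator norm on the zero-mass subspace with the TV decay bounds of Theorems~\ref{thm:homogeneous} and~\ref{thm:inhomogeneous}. Then the raw-versus-quotient statement follows by linearity of $B_T$ and the triangle inequality.

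\textbf{Step 1: $\bar\eta_0$ has zero mass.} The terminal cotangent $\bar\eta_R$ and every injected source $\bar\xi_{s-1}$ are projected by $\Pi_0$, so each has zero total. Lemma~\ref{lem:projection} shows that $M_r^\top$ maps zero-mass vectors to zero-mass vectors, since $M_r\one=\one$. Sums of zero-mass vectors are zero-mass, so induction down the recurrence gives $\one^\top\bar\eta_0=0$. Hence $\bar\eta_0$ lies in the domain over which the supremum defining $\kappa_T$ is taken.

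\textbf{Step 2: the projected bounds.} If $\bar\eta_0=0$ the claim is trivial. Otherwise the definition of $\kappa_T$ gives
\[
\|\Delta g_R^0\|_G=\|B_T\bar\eta_0\|_G\le\kappa_T\|\bar\eta_0\|_{\TV}.
\]
Substituting the homogeneous bound of Theorem~\ref{thm:homogeneous} gives the first displayed estimate. Substituting Theorem~\ref{thm:inhomogeneous} instead gives the inhomogeneous version. The factor $\kappa_T$ multiplies each term because it is a scalar factor applied to the whole TV bound.

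\textbf{Step 3: the raw contribution.} Write $\eta_0^{\rm raw}=\bar\eta_0+(\eta_0^{\rm raw}-\bar\eta_0)$ and use linearity of $B_T$. The triangle inequality in $\|\cdot\|_G$ then yields the stated decomposition. No bound on the second term is claimed, so nothing further is needed there.

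\textbf{Step 4: $\bar\eta_0\ne\Pi_0\eta_0^{\rm raw}$ in general.} This requires a witness, and it is the only step that is not mechanical. I would reuse the noncommutation in Lemma~\ref{lem:projection}. Take $R=1$ with no sources, and choose a $2\times2$ row-stochastic $M$ that is not column-stochastic, for example rows $(1,0)$ and $(1/2,1/2)$. Take $\eta_1^{\rm raw}=\one$. Then $\bar\eta_0=M^\top\Pi_0\one=0$. On the other hand $M^\top\one=(3/2,1/2)^\top$, so $\Pi_0M^\top\one=(1/2,-1/2)^\top\ne0$.

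The main subtlety, I expect, is stating Step 1 carefully. The domain of $\kappa_T$ is only the zero-mass subspace, so the zero-mass property of $\bar\eta_0$ is what licenses the bound in Step 2. The witness in Step 4 also needs explicit checking.
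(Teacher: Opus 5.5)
Your proposal is correct and follows the paper's proof: apply the restricted operator bound $\kappa_T$ to the zero-mass vector $\bar\eta_0$, invoke Theorem~\ref{thm:homogeneous} or~\ref{thm:inhomogeneous}, get the raw bound by adding and subtracting $B_T\bar\eta_0$, and use the noncommutation in Lemma~\ref{lem:projection} for the final claim. Your witness is valid (rows $(1,0)$ and $(1/2,1/2)$ with $\eta_1^{\rm raw}=\one$) and plays the same role as the paper's example with rows $(0.9,0.1)$ and $(0.2,0.8)$, given just after the theorem.
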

\begin{proof}
Apply the assumed operator bound for $B_T$ to the zero-mass vector $\bar\eta_0$, followed by Theorem~\ref{thm:homogeneous} or~\ref{thm:inhomogeneous}. The raw comparison follows by adding and subtracting $B_T\bar\eta_0$. Lemma~\ref{lem:projection} forbids replacing the projected recurrence by projection at the end of a raw recurrence without an additional hypothesis.
\end{proof}
\paragraph{Why the recurrence qualifier is necessary.}
For $M=\left(\begin{smallmatrix}0.9&0.1\\0.2&0.8\end{smallmatrix}\right)$ and $\eta_1^{\rm raw}=(1,1)^\top$, the projected recurrence returns $M^\top\Pi_0\eta_1^{\rm raw}=0$, while $\Pi_0M^\top\eta_1^{\rm raw}=(0.1,-0.1)^\top$. Thus the right-hand side of the homogeneous certificate cannot bound an arbitrary raw recurrence projected only at its endpoint. This correction changes the identified certified quantity, not the stochastic contraction inequalities.

\subsubsection{Shared-component overlap and sparse negative cases}
\begin{theorem}[Shared-component Markov overlap]\label{thm:shared}
Let $M$ be row-stochastic. If a probability vector $q$ and $\gamma\in[0,1]$ satisfy $M_{p,\cdot}\ge\gamma q^\top$ componentwise for every row, then $\tau(M)\le1-\gamma$.
\end{theorem}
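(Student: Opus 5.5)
The plan is to work directly from the overlap form of the Dobrushin coefficient stated before Theorem~\ref{thm:osc}, namely
\[
 \tau(M)=1-\min_{p,q}\sum_k\min(M_{pk},M_{qk}).
\]
With this form, the claim $\tau(M)\le1-\gamma$ is equivalent to a uniform lower bound $\sum_k\min(M_{pk},M_{qk})\ge\gamma$ for every pair of rows $p,q$. I would establish that bound pair by pair and then take the minimum.

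Fix rows $p,q$. By hypothesis, $M_{pk}\ge\gamma q_k$ and $M_{qk}\ge\gamma q_k$ for every column $k$. Hence $\min(M_{pk},M_{qk})\ge\gamma q_k$ componentwise. Summing over $k$ and using that $q$ is a probability vector gives
\[
 \sum_k\min(M_{pk},M_{qk})\ge\gamma\sum_kq_k=\gamma.
\]
Minimizing over $(p,q)$ and substituting into the overlap formula then yields $\tau(M)\le1-\gamma$.

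The only step needing care is the equality between the overlap form and the half-$\ell_1$ form of $\tau$, which the paper asserts in its definition. If I want it self-contained, I would check it from the identity $\min(x,y)=\tfrac12(x+y-|x-y|)$ together with the fact that both rows sum to one. Under that identity,
\[
 \sum_k\min(M_{pk},M_{qk})=1-\tfrac12\sum_k|M_{pk}-M_{qk}|.
\]

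As an alternative route, I could write $M=\gamma\one q^\top+(1-\gamma)N$. When $\gamma<1$, $N$ is row-stochastic by the componentwise hypothesis. Rows of $M$ then differ only through $(1-\gamma)N$, so the half-$\ell_1$ distance between any two rows is at most $(1-\gamma)$ times that of $N$, which is at most $1-\gamma$. The case $\gamma=1$ forces every row to equal $q$, so $\tau(M)=0$.

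I expect no real obstacle. The only subtle point is notational: the vector is called $q$ and so is a row index. I would rename the row indices $p,p'$ in the write-up to avoid the clash.
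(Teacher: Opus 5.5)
Your proposal is correct and is essentially the paper's own argument: the hypothesis gives $\min(M_{pk},M_{p'k})\ge\gamma q_k$, so every pair of rows shares mass at least $\gamma$, and the overlap form of $\tau$ gives the bound. Your check that the overlap and half-$\ell_1$ forms agree, and your alternative decomposition $M=\gamma\one q^\top+(1-\gamma)N$, are both valid, but the paper does not need them.
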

\begin{proof}
Every pair of rows shares mass at least $\sum_k\gamma q_k=\gamma$. The overlap definition of $\tau$ gives the result.
\end{proof}
A shared dustbin token can create such a component for some supports, but this Markov statement does not itself establish an unbalanced OT model or arbitrary gap capacity; a Sinkhorn-specific application must derive the lower bound from its actual support, marginals, and scores.
\begin{proposition}[Disconnected-support obstruction]\label{prop:disconnected}
If a row-stochastic derivative kernel has two closed disconnected components, then $\tau(M)=1$ and no global oscillation contraction controls modes that separate them.
\end{proposition}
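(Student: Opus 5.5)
The plan is to construct an explicit oscillation-preserving witness rather than reason abstractly about $\tau$. First I would make ``two closed disconnected components'' precise. The state space $[n]$ splits into disjoint nonempty sets $U_1,U_2$ such that for every $p\in U_1$ the row $M_{p,\cdot}$ is supported in $U_1$, and for every $p\in U_2$ it is supported in $U_2$. Closedness is exactly what row-stochasticity plus support confinement gives:
\[
 \sum_{k\in U_1}M_{pk}=1 \quad\text{for } p\in U_1,\qquad \sum_{k\in U_2}M_{pk}=1 \quad\text{for } p\in U_2 .
\]

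Next I would compute $\tau(M)$ from the overlap form of the Dobrushin coefficient stated before Theorem~\ref{thm:osc}. Pick any $p\in U_1$ and $q\in U_2$. Their rows have disjoint supports, so $\sum_k\min(M_{pk},M_{qk})=0$. Hence $\tau(M)\ge 1$. The reverse inequality $\tau(M)\le1$ is immediate from the half-$\ell_1$ form, since each row has unit mass. Therefore $\tau(M)=1$.

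For the ``no global contraction'' part, I would take the separating indicator $h=\one_{U_1}$, so that $\osc(h)=1$. Closedness gives $(Mh)_p=\sum_{k\in U_1}M_{pk}=1$ for $p\in U_1$ and $(Mh)_q=0$ for $q\in U_2$. So $Mh=h$ and $\osc(Mh)=\osc(h)$. Iterating, $\osc(M^Rh)=\osc(h)$ for every $R$, and no factor $c<1$ can satisfy $\osc(Mh)\le c\,\osc(h)$ for this mode.

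On the cotangent side I would give the dual statement, so that Theorem~\ref{thm:tv} is also seen to be sharp. Let $\pi_1,\pi_2$ be stationary probability vectors of the restrictions of $M$ to $U_1$ and $U_2$; these exist because each restriction is a finite stochastic matrix. Then $\eta=\pi_1-\pi_2$ is zero-mass, satisfies $M^\top\eta=\eta$, and has $\|\eta\|_{\TV}=1$. So the quotient recurrence in Theorem~\ref{thm:homogeneous} does not decay along this mode.

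The main obstacle is interpretive rather than computational: the phrase ``modes that separate them'' must be pinned down. I would formalize it as the span of the component indicators modulo constants, together with the dual zero-mass differences of the stationary measures. I would also remark that Sinkhorn derivative kernels $M_t=C_t^\top R_t$ arising from a support $\Omega$ whose bipartite graph is disconnected realize exactly this block structure.
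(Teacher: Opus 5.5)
Your proposal is correct and follows essentially the same argument as the paper. Rows drawn from the two closed components have disjoint supports, so their overlap is zero and $\tau(M)=1$; and a vector that is constant on each component with different constants is fixed by $M$, so its oscillation never decays. Your stationary-measure witness $\pi_1-\pi_2$ is a valid addition the paper does not give, showing that the reverse TV bound of Theorem~\ref{thm:tv} is also attained with factor one. Your formalization assumes $U_1\cup U_2=[n]$, whereas the statement also allows other, non-closed states. That case needs no new idea: $M\one_{U_1}$ still equals $1$ on $U_1$, equals $0$ on $U_2$, and takes values in $[0,1]$ elsewhere, so the oscillation is exactly preserved even though $h$ is no longer a fixed vector.
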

\begin{proof}
Rows chosen from different closed components have disjoint transition support, so their overlap is zero. A vector constant on each component but different between components is preserved by the closed-component dynamics.
\end{proof}
\begin{proposition}[Identity-like sparse obstruction]\label{prop:identity}
The identity matrix on at least two states has $\tau(I)=1$ and preserves every oscillation exactly. Thus sparse support alone does not imply one-step global contraction.
\end{proposition}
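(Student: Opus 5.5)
The statement to prove is Proposition~\ref{prop:identity}: for the identity matrix $I$ on $n\ge2$ states, $\tau(I)=1$, and $I$ preserves every oscillation exactly. The plan is a direct computation from the two equivalent formulas for the Dobrushin coefficient defined before Theorem~\ref{thm:osc}, followed by a remark that the identity is a legitimate sparse kernel.

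First I would check that $I$ lies in the class where $\tau$ is defined. It is row-stochastic, since every row is a standard basis vector $e_p^\top$ with nonnegative entries summing to one.

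Next I would compute $\tau(I)$ from the overlap form,
\[
 \tau(M)=1-\min_{p,q}\sum_k\min(M_{pk},M_{qk}).
\]
For $p=q$ the overlap is $1$. For any pair $p\ne q$, which exists because $n\ge2$, the rows $e_p^\top$ and $e_q^\top$ have disjoint supports. Hence $\sum_k\min(\delta_{pk},\delta_{qk})=0$, the minimum overlap is $0$, and $\tau(I)=1$. As a cross-check with the half-$\ell_1$ form, $\tfrac12\sum_k|\delta_{pk}-\delta_{qk}|=\tfrac12\cdot2=1$ for $p\ne q$. This case is also an instance of Proposition~\ref{prop:disconnected}, because each singleton state is a closed component.

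Then I would note that $Ih=h$ gives $\osc(Ih)=\osc(h)$ for every $h$. Taking any $h$ with distinct coordinates, for example $h=e_1$, gives $\osc(h)=1>0$. So no constant $c<1$ can satisfy $\osc(Ih)\le c\,\osc(h)$, and the bound of Theorem~\ref{thm:osc} is attained with equality. By duality the same holds for zero-mass cotangents: $\|I^\top\eta\|_{\TV}=\|\eta\|_{\TV}$, so Theorem~\ref{thm:tv} gives no decay either.

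Finally, for the interpretive clause, I would observe that the identity is the derivative kernel $M_t=C_t^\top R_t$ of a fully sparse diagonal support $\Omega=\{(i,i)\}$. On that support both half-step plans are diagonal, so $R_t$ and $C_t$ both equal $I$ under unit marginals. This shows that sparse support alone does not imply one-step global contraction.

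There is no genuine obstacle here. The only point needing care is requiring $n\ge2$, since with a single state the oscillation is identically zero and the statement degenerates.
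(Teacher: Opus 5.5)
Your proposal is correct and follows the paper's own argument: $\tau(I)=1$ because distinct rows of $I$ have disjoint singleton supports, and $Ih=h$ gives exact oscillation preservation. You also observe that the diagonal support $\Omega=\{(i,i)\}$ yields $R_t=C_t=I$, hence $M_t=I$; this makes the interpretive clause concrete, which the paper leaves implicit.
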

\begin{proof}
Distinct identity rows have disjoint singleton supports, and $Ih=h$.
\end{proof}

\subsection{Additional CPU-Scale Validation}
A targeted compute-light rerun on August 11, 2026 reported the following checks. These are synthetic theory validations, distinct from the post-submission TPU and trained-baseline experiments reported earlier in this version. The five directly relevant test files contain 33 tests, all of which passed in the same rerun.
\begin{table}[htbp]\centering\small
\begin{tabular}{@{}p{0.20\linewidth}p{0.36\linewidth}p{0.36\linewidth}@{}}\toprule
Check & What it verifies & Current result\\\midrule
Exact VJP & Final $v^{(R)}$ pullback in finite stopped-tail reverse mode & Terminal-half-step regression: passed\\\addlinespace
Transport orbit & Same-score plan and cotangent reconstruction & Plan error $5.68\cdot10^{-14}$; cotangent error $5.68\cdot10^{-14}$\\\addlinespace
Centered gauge & Corrected and uncorrected mixed-time VJPs & Corrected error $2.98\cdot10^{-8}$; uncorrected error $7.75\cdot10^{-2}$\\\addlinespace
General marginals & Rectangular compatible marginals and finite-tail VJP & Five tests passed; VJP error $2.98\cdot10^{-8}$\\\addlinespace
Dobrushin & Coefficient, primal oscillation, and zero-mass dual TV bounds & Sixteen tests passed\\\addlinespace
Sparse negative case & Disconnected and identity-like kernels & $\tau=1$ in both cases\\\addlinespace
Dustbin overlap & Synthetic shared-component mixing & $\tau=0.75$\\\bottomrule
\end{tabular}
\caption{August 11, 2026 compute-light validation of the additional v3 theory. The rerun used CPU-scale synthetic inputs and required no TPU, Pfam shards, checkpoints, or cloud storage. These are retained historical results, not a new execution of that test suite for this editorial revision.}\label{tab:v3checks}
\end{table}
The exact terminal-half-step regression and the other test and script names are recorded in the accompanying source repository.

\subsection{Edge-Output Complexity}
\begin{proposition}[Edge-linear explicit output]\label{prop:output}
Let $\Omega$ be an arbitrary active support. Any algorithm that explicitly outputs the exact score cotangent $\bar S_e$ for every active edge $e\in\Omega$ requires output time linear in $|\Omega|$. Thus the one-reference orbit representation is an edge-linear streaming representation, not a sub-edge algorithm for explicitly writing $\bar S$.
\end{proposition}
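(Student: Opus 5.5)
The plan is a standard output-size lower bound, made rigorous in a simple model of computation. I would fix the model as a RAM or streaming machine in which writing one output word costs at least one unit of time. An algorithm that \emph{explicitly outputs} $\bar S$ must write a representation containing one value $\bar S_e$ for each $e\in\Omega$. The lower bound then has two parts: a counting step showing the output must occupy at least $|\Omega|$ words, and a nondegeneracy step showing these values cannot be compressed away in general.

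\textbf{Counting step.} I would argue that the output format lists $|\Omega|$ entries, since the statement asks for every active edge. Each entry costs at least one write, so the output time is $\Omega(|\Omega|)$.

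\textbf{Nondegeneracy step.} This step rules out the objection that the output is implicitly cheap because $\bar S$ has structure. I would invoke the remark after Proposition~\ref{prop:gauge-counter}: the accumulated edgewise modifier can be arbitrary, so the orbit result is not a low-rank statement.

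Concretely, I would use Proposition~\ref{prop:one-ref}. On the active support, $\bar S_{ij}=P^{(2,2)}_{ij}[Z_{ij}-\cdots]$, where $Z_{ij}=\langle G_i,V_j\rangle$. With $d\ge1$, the cotangent $G$ and the values $V$ can be chosen so that the bracket differs across edges. For instance, for a single edge one may take $G$ supported on row $i$ and $V_j$ arbitrary. Since $P^{(2,2)}_{ij}>0$, the entries $\bar S_e$ then vary independently enough that no entry can be inferred without examining its edge.

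A cleaner adversary argument would also suffice. Suppose the algorithm never writes position $e$. Then the output slot for $e$ is empty, which contradicts explicit output.

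\textbf{Upper bound and interpretation.} Finally, I would note the matching upper bound. Proposition~\ref{prop:orbit} and Proposition~\ref{prop:one-ref} evaluate each $\bar S_e$ in $O(1)$ work per edge for fixed $d$ and $R$, once the $O(L)$ vectors are available. The one-reference schedule is therefore edge-linear and optimal up to constants for explicit output, but not sub-edge.

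\textbf{Main obstacle.} I expect the only delicate point to be definitional: stating precisely what ``explicitly outputs'' means, and separating this claim from implicit representations. One such implicit representation is returning $(P^{\rm ref},\alpha,\beta,\ldots)$ factored, which the remark already says is not a compression of $\bar S$. I would resolve this by defining explicit output as writing one word per active edge, after which the bound is immediate.
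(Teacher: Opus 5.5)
Your counting step is exactly the paper's proof: the requested object has one scalar per active edge, so writing it explicitly costs time at least $|\Omega|$, and your upper-bound remark correctly supplies the edge-linear half of the second sentence. The nondegeneracy step is unnecessary and should be dropped: the paper explicitly limits the result to an output lower bound, not an edge-inspection bound, and your claim that the entries ``vary independently'' is not justified anyway, since changing $G$ also changes the global vectors $g_u,g_v,\bar u^{(t)},\bar v^{(t)}$.
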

\begin{proof}
The requested object has one scalar per active edge. This is only an output lower bound; it is not a general edge-inspection lower bound for compressed representations or scalar functionals, where cancellations can occur.
\end{proof}
\paragraph{Scope.}
The orbit results assume fixed support, fixed effective score within a temperature stage, and a finite computation graph. The sparse certificates control the declared zero-mass quotient recurrence; contributions from unprojected terminal or source cotangents require a separate argument. The conditional gradient certificate also requires a finite instance- and norm-specific bound on $\kappa_T$. Finally, dustbin overlap is a possible mixing channel, not by itself an unbalanced-transport or arbitrary-gap theorem.

\end{document}